\theoremstyle{plain}
\newtheorem{theorem}{Theorem}
\newtheorem{definition}{Definition}
\DeclareMathOperator{\Var}{Var}
\DeclareMathOperator{\Cov}{Cov}
\title{HyperMARL: Adaptive Hypernetworks for Multi-Agent RL}
\author{%
  Kale-ab Abebe Tessera\textsuperscript{1} \quad
  Arrasy Rahman\textsuperscript{2} \quad
  Amos Storkey\textsuperscript{1} \quad
  Stefano V. Albrecht\textsuperscript{3} \\
  \textsuperscript{1}School of Informatics, University of Edinburgh, Edinburgh, UK \\
  \textsuperscript{2}School of Computer Science, University of Texas at Austin, Austin, TX, USA \\
  \textsuperscript{3}DeepFlow, London, UK \\
  \texttt{\{k.tessera,a.storkey\}@ed.ac.uk, arrasy@utexas.edu}
}
\begin{document}

\maketitle

\begin{abstract}
Adaptive cooperation in multi-agent reinforcement learning (MARL) requires policies to express homogeneous, specialised, or mixed behaviours, yet achieving this adaptivity remains a critical challenge. While parameter sharing (PS) is standard for efficient learning, it notoriously suppresses the behavioural diversity required for specialisation. This failure is largely due to cross-agent gradient interference, a problem we find is surprisingly exacerbated by the common practice of \emph{coupling agent IDs with observations}. Existing remedies typically add complexity through altered objectives, manual preset diversity levels, or sequential updates -- raising a fundamental question: \emph{can shared policies adapt without these intricacies?} We propose a solution built on a key insight: an agent-conditioned hypernetwork can generate agent-specific parameters and \emph{decouple} observation- and agent-conditioned gradients, directly countering the interference from coupling agent IDs with observations. Our resulting method, \emph{HyperMARL}, avoids the complexities of prior work and empirically reduces policy gradient variance. Across diverse MARL benchmarks (22 scenarios, up to 30 agents), HyperMARL achieves performance competitive with six key baselines while preserving behavioural diversity comparable to non-parameter sharing methods, establishing it as a versatile and principled approach for adaptive MARL. The code is publicly available at \url{https://github.com/KaleabTessera/HyperMARL}.
\end{abstract}

\section{Introduction}

Specialist and generalist behaviours are critical to collective intelligence, enhancing performance and adaptability in both natural and artificial systems \citep{woolley2015collective,smith2008genetic,surowiecki2004wisdom,kassen2002experimental,williams1998demography}. In Multi-Agent Reinforcement Learning (MARL) \cite{albrecht2024multi}, this translates to a critical need for policies that can flexibly exhibit specialised, homogeneous, or mixed behaviours to meet diverse task demands \cite{li2021celebrating,bettini2024controlling}.
\begin{figure}[t]
    \centering\includegraphics[width=0.55\linewidth]{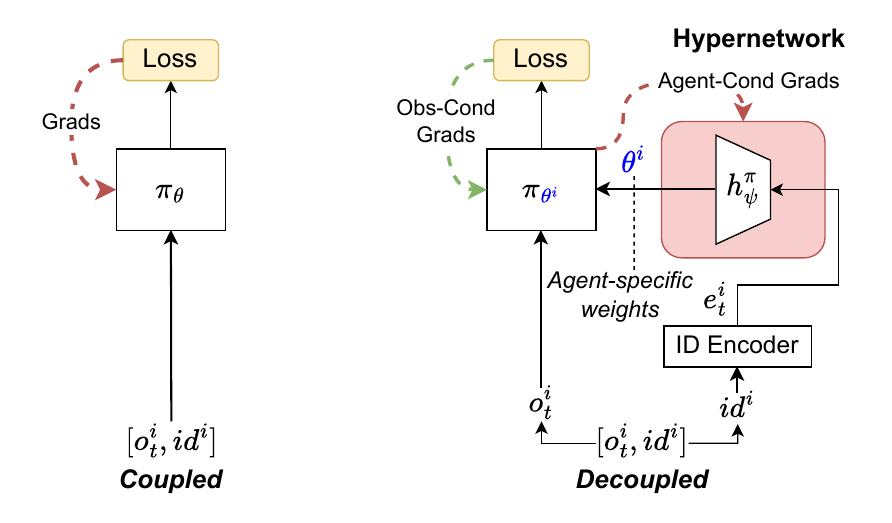}
    \caption{\textit{HyperMARL Policy Architecture.} Common agent-ID conditioned shared MARL policy (FuPS+ID, left) vs HyperMARL (right), which uses an agent-conditioned hypernetwork to generate agent-specific weights and \emph{decouples} observation- and agent-conditioned gradients.}
    \label{fig:hypermarl}
    \vspace{-12pt}
\end{figure}

Optimal MARL performance thus hinges on being able to represent the required behaviours. While No Parameter Sharing (NoPS) \citep{lowe2017multi} enables specialisation by using distinct per-agent networks, it suffers from significant computational overhead and sample inefficiency~\citep{christianos2021scaling}. Conversely, Full Parameter Sharing (FuPS)~\citep{tan1993multi,gupta2017cooperative,foerster2016learning}, which trains a single shared network, improves efficiency but typically struggles to foster the behavioural diversity necessary for many complex tasks~\citep{kim2023parameter,fu2022revisiting,li2021celebrating}.

This failure of FuPS, particularly for diverse behaviours, was hypothesised to be gradient interference among agents, whereby their updates negatively impact each other’s learning~\citep{christianos2021scaling,JMLR:v25:23-0488}. We not only empirically validate this hypothesis but also demonstrate a critical insight: this conflict is significantly exacerbated by the common practice of \emph{coupling observations with agent IDs} within a shared network (Fig.~\ref{fig:hypermarl} for coupling, Sec.~\ref{subsec:grad_conflicts} for results).

Balancing FuPS efficiency with the capacity for diverse behaviours therefore remains a central open problem in MARL. Prior works have explored intrinsic-rewards~\citep{li2021celebrating,pmlr-v139-jiang21g}, role-based allocations~\citep{wang2020roma,wang2020rode}, specialised architectures~\citep{kim2023parameter,li2024kaleidoscope,bettini2024controlling}, sequential updates~\citep{JMLR:v25:23-0488}, and sharing parameters within clusters of agents \cite{christianos2021scaling}. However, these remedies introduce their own intricacies: they often alter the learning objective, require prior knowledge of optimal diversity levels, necessitate maintaining agent-specific parameters or require sequential updates. This raises a fundamental question: \emph{Can we design a shared MARL architecture that flexibly supports both specialised and homogeneous behaviours—without altered learning objectives, manual preset diversity levels, or sequential updates?}

Guided by our observation-ID coupling insight, we propose \emph{HyperMARL}, a novel agent-conditioned hypernetwork~\citep{ha2016hypernetworks} architecture. HyperMARL generates per-agent weights on the fly (Fig.~\ref{fig:hypermarl}) and explicitly \emph{decouples} observation- and agent-conditioned gradients (Sec.~\ref{subsec:decoupling_grads}). This choice is motivated by hypernetworks' proven effectiveness at resolving gradient conflicts in multi-task RL~\citep{navon2020learning} and continual learning~\citep{Oswald2020Continual}. Our work establishes their effectiveness for the problem of cross-agent interference in MARL. Indeed, HyperMARL empirically attains lower policy gradient variance than FuPS, and we show this decoupling is critical for specialisation (Sec.~\ref{subsec:specialised_policy_learning},~\ref{sec:ablations}), confirming its role in mitigating interference.

We validate HyperMARL on diverse MARL benchmarks -- including Dispersion and Navigation (VMAS)~\citep{bettini2022vmas}, Multi-Agent MuJoCo (MAMuJoCo)~\citep{peng2021facmac}, SMAX~\citep{rutherford2024jaxmarl}, and Blind-Particle Spread (BPS)~\citep{christianos2021scaling} -- across environments with two to thirty agents that require homogeneous, heterogeneous, or mixed behaviours. HyperMARL consistently matches or outperforms NoPS, FuPS, and diversity-promoting methods such as Diversity Control (DiCo)~\citep{bettini2024controlling}, Heterogeneous-Agent Proximal Policy
Optimisation (HAPPO)~\citep{JMLR:v25:23-0488}, Kaleidoscope~\citep{li2024kaleidoscope} and Selective Parameter Sharing (SePS)~\citep{christianos2021scaling}, while achieving NoPS-level behavioural diversity while using a shared architecture. 

Our contributions are as follows: 
\begin{itemize}[leftmargin=*] 
\item We identify that cross-agent gradient interference in shared policies is critically exacerbated by the common practice of coupling agent IDs with observations (Sec.~\ref{subsec:grad_conflicts}). 
\item We propose \textit{HyperMARL} (Sec.~\ref{sec:hypermarl}), an agent-conditioned hypernetwork architecture, to test the hypothesis that explicitly \emph{decoupling} these gradients enables adaptive (diverse, homogeneous, or mixed) behaviours without the complexities of prior remedies (e.g., altered objectives, preset diversity levels, or sequential updates).
\item Our extensive evaluation (Sec.~\ref{sec:experiments}) across 22 diverse scenarios (up to 30 agents) shows HyperMARL achieves competitive returns against six strong baselines, while achieving NoPS-level behavioural diversity. We further show this decoupling is empirically linked to reduced policy gradient variance and is critical for specialisation (Sec.~\ref{subsec:specialised_policy_learning}; Sec.~\ref{sec:ablations}).
\end{itemize}

\section{Background}

We formulate the fully cooperative multi-agent systems addressed in our work as a Dec-POMDP~\citep{oliehoek2016concise}. A Dec-POMDP is a tuple, $\langle \mathbb{I}, \mathbb{S},\{\mathbb{A}^i\}_{i\in \mathbb{I}},R,\{\mathbb{O}^i\}_{i\in \mathbb{I}}, O, T, \rho_0, \gamma \rangle$, where $\mathbb{I}$ is the set of agents of size $n=|\mathbb{I}|$, $\mathbb{S}$ is the set of global states with an initial state distribution $\rho_0$, $\mathbb{A}^i$ is the action space for agent $i$ where $\mathbb{A}=\times_i \mathbb{A}^i$ is the joint action space, $R: \mathbb{S} \times \mathbb{A} \to \mathbb{R}$ is the shared reward function, $\mathbb{O}^i$ is the observation space for agent $i$ with the joint observation space $\mathbb{O}=\times_i \mathbb{O}^i$, $O:\mathbb{O} \times \mathbb{A} \times \mathbb{S} \to [0,1]$ is the probability of joint observation $\mathbf{o} \in \mathbb{O}$, i.e. $O(\mathbf{o},\mathbf{a},s) = \Pr(\mathbf{o_t}|s_t,\mathbf{a_{t-1}})$, $T: \mathbb{S} \times \mathbb{A} \times \mathbb{S} \to [0,1]$ is the state transition function i.e. $T\left(s, \mathbf{a}, s^{\prime}\right)=\Pr\left(s_{t+1} \mid s_t, \mathbf{a}_t\right)$ and $\gamma$ is the discount factor.

In this setting, each agent $i$ receives a partial observation $o_t^i \in \mathbb{O}^i$. These observations are accumulated into an action-observation history $h_t^i=\left(o_0^i, a_0^i, \ldots, o_{t-1}^i,a_{t-1}^i,o_{t}^i\right)$. Each agent $i$ acts based on their decentralised policies $\pi^i(a^i|h^i)$. The joint history and joint action are defined as follows $\mathbf{h_t}=(h_t^1,\ldots,h_t^n)$ and $\mathbf{a_t}=(a_t^1,\ldots,a_t^n)$. The goal is to learn an optimal joint policy $\boldsymbol{\pi^*}=(\pi^{1*},...,\pi^{n*})$ that maximizes the expected discounted return as follows,\footnote{We use simplified notation here omitting explicit dependence on state transitions and distributions for brevity.} $\boldsymbol{\pi}^* = \arg\max_{\boldsymbol{\pi}}\mathbb{E}_{s_0 \sim \rho_0,\; \mathbf{h} \sim \boldsymbol{\pi}}\left[ G(\mathbf{h}) \right]$, where $G(\mathbf{h})=\sum_{t=0}^\infty \gamma^t R(s_t, \mathbf{a}_t)$.

\textbf{Specialised Policies and Environments.}
We say an environment is \emph{specialised} if its optimal
joint policy contains at least two distinct, non-interchangeable
agent policies ( Def.~\ref{def:specialised_env} in App.~\ref{app:spec_policy_env}).
Under this mild condition, tasks such as Dispersion (\ref{results:dispersion}) or our
Specialisation Game (\ref{sec:spec_game}) require agents to learn complementary
roles rather than identical behaviours.

\section{Are Independent or Fully Shared Policies Enough?}\label{sec:motivation}

Standard independent (\textbf{NoPS}) and fully parameter-shared (\textbf{FuPS}) policies face inherent trade-offs in MARL. NoPS allows for uninhibited agent specialisation but can be sample inefficient and computationally expensive. FuPS, often conditioned with an agent ID (FuPS+ID), is more efficient but can struggle when agents must learn diverse behaviours \cite{christianos2021scaling}. This section investigates the limitations of these common policy architectures.

\begin{figure}[t]
  \centering
  \setlength{\tabcolsep}{3pt}
  \setlength{\abovecaptionskip}{4pt}
  \setlength{\belowcaptionskip}{6pt}
  \captionsetup[subfigure]{font=small,labelfont=bf,justification=centering}

  \begin{subfigure}[t]{0.48\textwidth}
    \centering\scriptsize
    \renewcommand{\arraystretch}{0.9}
    \begin{minipage}[b]{0.45\linewidth}
      \centering
      \begin{tabular}{cc|cc}
        \multicolumn{2}{c}{} & \multicolumn{2}{c}{\bfseries P2} \\
        \cmidrule(lr){3-4}
        &      & A                  & B                \\
        \cmidrule(lr){2-4}
        \multirow{2}{*}{\rotatebox[origin=c]{90}{\bfseries P1}}
          & A & \cellcolor{gray!10}(0.5,0.5) & \cellcolor{blue!20}(1,1)\\
        \cmidrule(lr){2-4}
          & B & \cellcolor{blue!20}(1,1)     & \cellcolor{gray!10}(0.5,0.5)\\
        \bottomrule
      \end{tabular}
      \caption*{Two‐Player Payoff matrix}
    \end{minipage}
    \hfill
    \begin{minipage}[b]{0.45\linewidth}
      \centering
      \includegraphics[height=2.8cm]{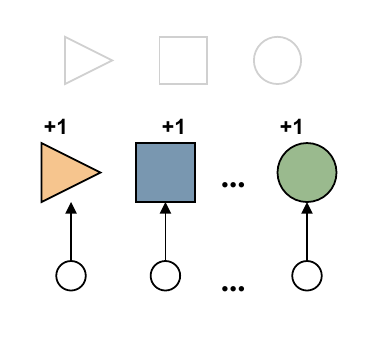}
      \caption*{$N$‐player Interaction}
    \end{minipage}
    \subcaption{Specialisation Game}
    \label{fig:spec-game}
  \end{subfigure}%
  \hfill
  \begin{subfigure}[t]{0.48\textwidth}
    \centering\scriptsize
    \renewcommand{\arraystretch}{0.9}
    \begin{minipage}[b]{0.45\linewidth}
      \centering
      \begin{tabular}{cc|cc}
        \multicolumn{2}{c}{} & \multicolumn{2}{c}{\bfseries P2} \\
        \cmidrule(lr){3-4}
        &      & A                & B                 \\
        \cmidrule(lr){2-4}
        \multirow{2}{*}{\rotatebox[origin=c]{90}{\bfseries P1}}
          & A & \cellcolor{blue!20}(1,1)     & \cellcolor{gray!10}(0.5,0.5)\\
        \cmidrule(lr){2-4}
          & B & \cellcolor{gray!10}(0.5,0.5) & \cellcolor{blue!20}(1,1)    \\
        \bottomrule
      \end{tabular}
      \caption*{Two‐Player Payoff matrix}
    \end{minipage}
    \hfill
    \begin{minipage}[b]{0.45\linewidth}
      \centering
      \includegraphics[height=2.8cm]{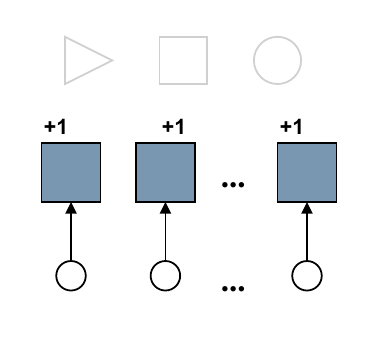}
      \caption*{$N$‐player Interaction}
    \end{minipage}
    \subcaption{Synchronisation Game}
    \label{fig:sync-game}
  \end{subfigure}

  \caption{\emph{Specialisation and Synchronisation Games.} The Specialisation game (left), which encourages \emph{distinct} actions, and the Synchronisation game (right), where rewards encourage \emph{identical} actions. Depicted are their two-player payoff matrices (pure Nash equilibria in blue) and $N$-player interaction schematics. While simple in form, these games are challenging MARL benchmarks due to non-stationarity and exponentially scaling observation spaces (temporal version).}
  \label{fig:game-setup}
   \vspace{-12pt}
\end{figure}

To probe these limitations, we introduce two illustrative environments: the \textbf{Specialisation Game}, rewarding \emph{distinct} actions, and the \textbf{Synchronisation Game}, rewarding \emph{identical} actions. Both are inspired by prior work~\citep{christianos2023pareto,fu2022revisiting,bettini2022vmas,osborne1994course} and extended here to $N$-agent and temporal settings where agents observe prior joint actions (see Appendix~\ref{appen:spec_game} for full definitions).

\subsection{Limitations of Fully Shared and Independent Policies}

\begin{table}[t]
\centering
\scriptsize
\setlength{\tabcolsep}{4pt}
\caption{Average evaluation reward (mean $\pm$ 95\% CI) for \textit{temporal} Specialisation vs. Synchronisation using REINFORCE (10 seeds). \textbf{Bold}: highest mean, no CI overlap. Neither fully shared nor independent policies consistently achieve the highest mean reward.}
\label{tab:spec_sync}
\begin{tabular}{ccccccc}
\toprule
& \multicolumn{3}{c|}{Specialisation} & \multicolumn{3}{c}{Synchronisation} \\
\cmidrule{2-7}
\#Ag & NoPS & FuPS & FuPS+ID & NoPS & FuPS & FuPS+ID \\
\midrule
2 & \textbf{0.88}$\pm$0.09 & 0.50$\pm$0.00 & 0.64$\pm$0.10 & 0.83$\pm$0.12 & \textbf{1.00}$\pm$0.00 & 0.91$\pm$0.09 \\
4 & \textbf{0.74}$\pm$0.08 & 0.25$\pm$0.00 & 0.40$\pm$0.07 & 0.32$\pm$0.03 & \textbf{1.00}$\pm$0.00 & 0.67$\pm$0.15 \\
8 & \textbf{0.68}$\pm$0.02 & 0.12$\pm$0.00 & 0.25$\pm$0.03 & 0.14$\pm$0.00 & \textbf{1.00}$\pm$0.00 & 0.54$\pm$0.10 \\
16 & \textbf{0.64}$\pm$0.01 & 0.06$\pm$0.00 & 0.13$\pm$0.02 & 0.07$\pm$0.00 & \textbf{1.00}$\pm$0.00 & 0.55$\pm$0.14 \\
\bottomrule
\end{tabular}
\end{table}

\begin{figure*}[t]
  \centering
  \captionsetup[subfigure]{justification=centering,singlelinecheck=false,skip=3pt}

  \begin{subfigure}[t]{0.48\textwidth}
    \centering %
    \includegraphics[width=\linewidth,height=3.5cm,keepaspectratio]{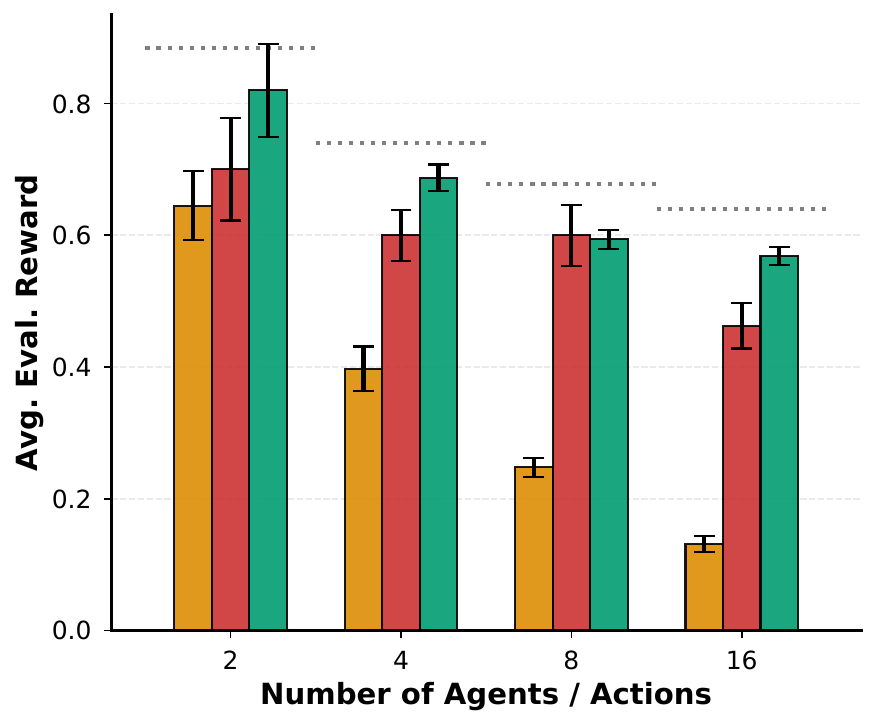}
    \caption{Avg.\ evaluation reward}
    \label{subfig:rewards}
  \end{subfigure}\hfill
  \begin{subfigure}[t]{0.48\textwidth}
    \centering %
    \includegraphics[width=\linewidth,height=3.5cm,keepaspectratio]{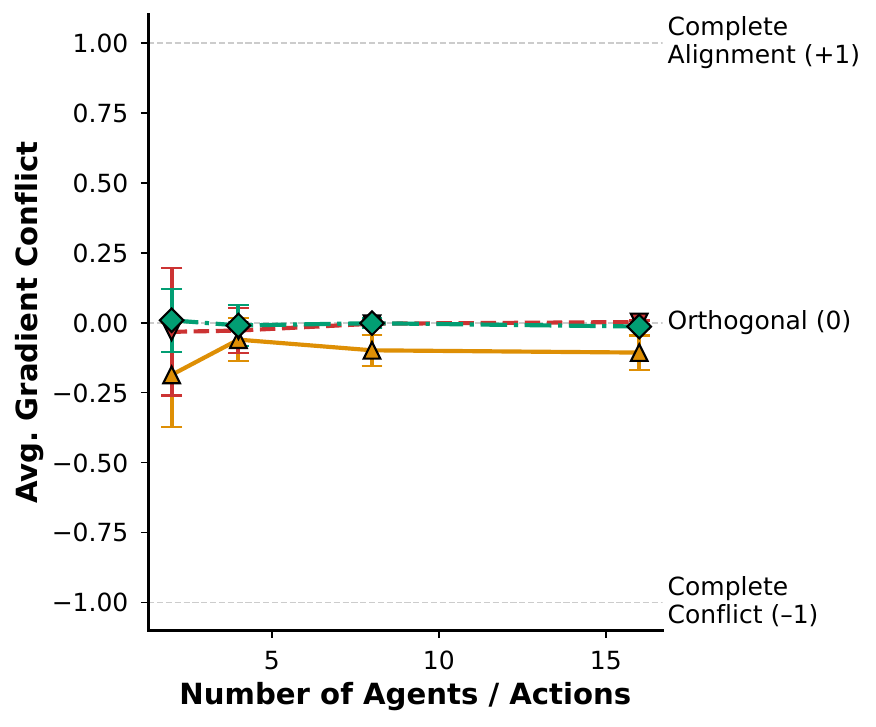}
    \caption{Avg.\ gradient conflict}
    \label{subfig:grad_conflict}
  \end{subfigure}

  \begin{subfigure}[t]{0.70\textwidth}
    \centering
    \includegraphics[width=\linewidth]{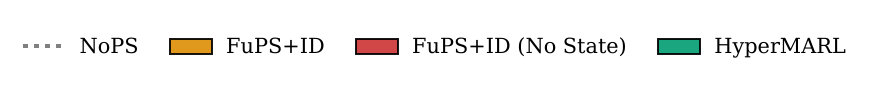}
  \end{subfigure}

   \caption{\textit{Multi-agent policy gradient methods in the Specialisation environment.} The FuPS+ID (No State) ablation outperforms FuPS+ID, showing near-orthogonal gradients (\subref{subfig:grad_conflict}), indicating that observation–ID decoupling is important. HyperMARL (MLP) enables this decoupling while leveraging state information, and achieves better performance and reduced gradient conflict than FuPS+ID.}
  \label{fig:fups_failure}
  \vspace{-12pt}
\end{figure*}

 FuPS \emph{without} agent IDs provably cannot recover optimal pure Nash equilibria in the non-temporal 2-player Specialisation Game (Proof~\ref{proof:fups_failure}, App.~\ref{appen:spec_game}). In practice, however, FuPS is often conditioned \emph{with} agent IDs, and MARL policies must handle complexities beyond static, two-player interactions. We therefore evaluate standard architectures in the temporal $n$-player versions of these games\footnote{Results for non-temporal (normal-form) variants are in App.~\ref{app:normal_form}.} We compare three standard architectures trained with REINFORCE~\citep{williams1992simple}: 
1) \textbf{NoPS}: independent policies ($\pi_{\theta^i}(a^i|o^i)$); 
2) \textbf{FuPS}: a single shared policy ($\pi_{\theta}(a^i|o^i)$); and 
3) \textbf{FuPS+ID}: a shared policy incorporating a one-hot agent ID ($\pi_{\theta}(a^i|o^i,id^i)$). 
All use single-layer networks, 10-step episodes and $10,000$ training steps (further details in Table~\ref{tab:hyperparameters_spec_syn_game} in App.~\ref{append:hyperparams}).

\textbf{Empirical Performance.} Table~\ref{tab:spec_sync} shows that neither NoPS nor FuPS consistently achieves the highest mean evaluation rewards. NoPS excels in the Specialisation Game but is outperformed by FuPS (optimal) and FuPS+ID in the Synchronisation Game. Furthermore, the performance gaps widen as the number of agents increases (notably at $n=8$ and $n=16$), highlighting the scalability challenges of both fully independent and fully shared policies.

\subsection{Why FuPS+ID Fails to Specialise: The Problem of Gradient Conflict}\label{subsec:grad_conflicts}

Despite being a universal approximator~\citep{hornik1989multilayer}, FuPS+ID often struggles to learn diverse policies in practice (Table~\ref{tab:spec_sync}, \citep{christianos2021scaling,JMLR:v25:23-0488}). A key reason is \emph{gradient conflict}: when a single network processes both observation $o$ and agent ID $id^i$, updates intended to specialise agent $i$ (based on $id^i$) can conflict with updates for agent $j$ (based on $id^j$), particularly if they share similar observations but require different actions. This obstructs the emergence of specialised behaviours (conflict measured via inter-agent gradient cosine similarity, App.~\ref{appen:gradient_metric_definitions}).

\textbf{Importance of Observation and ID Decoupling.} To investigate the effect of entangled observation and ID inputs, we introduce an ablation: \emph{FuPS+ID (No State)}, where the policy $\pi_\theta(a^i \mid id^i)$ conditions \emph{only} on the agent ID, ignoring observations. Surprisingly, \emph{FuPS+ID (No State) outperforms standard FuPS+ID} in the Specialisation Game for all tested $N$ (Figure~\ref{subfig:rewards}), even when $N \le 4$ (where observation spaces are small, suggesting the issue is not merely observation size). Figure~\ref{subfig:grad_conflict} reveals why: FuPS+ID (No State) shows near-zero gradient conflict (nearly orthogonal gradients), whereas standard FuPS+ID exhibits negative cosine similarities (conflicting gradients).

These results show that naively coupling observation and ID inputs in shared networks can lead to destructive interference, hindering specialisation. While discarding observations is not a general solution (most tasks require state information), this finding motivates designing architectures that can leverage both state and agent IDs, while minimising interference. Section~\ref{sec:hypermarl} introduces HyperMARL (Figure~\ref{fig:hypermarl}), which explicitly \emph{decouples observation- and agent-conditioned} gradients through agent-conditioned hypernetworks, leading to improved performance over FuPS variants and reduced gradient conflict compared to standard FuPS+ID (Figure~\ref{fig:fups_failure}). 

\section{HyperMARL}\label{sec:hypermarl}

We introduce \textit{HyperMARL}, an approach that uses agent-conditioned hypernetworks to learn diverse or homogeneous policies \emph{end-to-end}, without modifying the standard RL objective or requiring manual preset diversity levels. By operating within a fully shared paradigm, HyperMARL leverages shared gradient information while enabling specialisation through the decoupling of observation- and agent-conditioned gradients. We present the pseudocode in Sec.~\ref{subsec:pseudocode}, with additional scaling (\ref{sec:scaling}) and runtime (\ref{sec:runtime}) details.

\subsection{Hypernetworks for MARL}

As illustrated in Figure \ref{fig:hypermarl}, for any agent $i$ with context $e^i$ (i.e., either a one-hot encoded ID or a learned embedding), the hypernetworks generate the agent-specific parameters:
\begin{equation}
\theta^i = h_{\psi}^{\pi}(e^i), \quad \phi^i = h_{\varphi}^{V} (e^i),
\end{equation}
where $h_{\psi}^{\pi}$ and $h_{\varphi}^{V}$ are the hypernetworks for the policy and critic, respectively. The parameters $\theta^i$ and $\phi^i$ define each agent's policy $\pi_{\theta^i}$ and critic $V_{\phi^i}$, dynamically enabling either specialised or homogeneous behaviours as required by the task.

\textbf{Linear Hypernetworks}\label{subsec:linear_hnets} Given a one-hot agent ID, $\mathds{1}^i \in \mathbb{R}^{1 \times n}$, a linear hypernetwork $h_{\psi}^{\pi}$ generates agent-specific parameters $\theta^i$ as follows\footnote{For conciseness we only show the policy parameters in this section.}:
\begin{equation}
    \theta^i = h_{\psi}^{\pi}(\mathds{1}^i) = \mathds{1}^i \cdot W + b
\end{equation}
where $W \in \mathbb{R}^{n \times m}$ is the weight matrix (with $m$ the per-agent parameter dimensionality and $n$ is the number of agents), and $b \in \mathbb{R}^{1 \times m}$ is the bias vector. Since $\mathds{1}^i$ is one-hot encoded, each $\theta^i$ corresponds to a specific row of $W$ plus the shared bias $b$. If there is no shared bias term, this effectively replicates training of separate policies for each task (in our case, for each agent) \citep{beck2023hypernetworks}, since there are no shared parameters and gradient updates are independent.

 \textbf{MLP Hypernetworks for Expressiveness} \label{subsec:mlp_hnets} To enhance expressiveness, MLP Hypernetworks incorporate hidden layers and non-linearities:

\begin{equation}
    \theta^i = h_{\psi}^{\pi}(e^i) = f_{\psi_1}^{\pi}\bigl(g_{\psi_2}^{\pi}(e^i)\bigr)
\end{equation}
where $g_{\psi_2}^{\pi}$ is an MLP processing the agent context $e^i$, and $f_{\psi_1}^{\pi}$ is a final linear output layer.

Unlike linear hypernetworks with one-hot agent IDs, MLP hypernetworks do not guarantee distinct weights for each agent. Additionally, they increase the total number of trainable parameters, necessitating a careful balance between expressiveness and computational overhead.

\subsection{Agent Embeddings and Initialisation}
\label{subsec:agent_context_and_init}

The agent embedding $e^i$ is a one-hot encoded ID for Linear Hypernetworks. For MLP Hypernetworks, we use learned agent embeddings, orthogonally initialised and optimised end-to-end with the hypernetwork. HyperMARL's hypernetworks are themselves initialised such that the generated agent-specific parameters ($\theta^i, \phi^i$) initially match the distribution of standard direct initialisation schemes (e.g., orthogonal for PPO, preserving fan in/out), promoting stable learning.

\subsection{Gradient Decoupling in HyperMARL}\label{subsec:decoupling_grads}

A core difficulty in FuPS is cross-agent gradient interference~\citep{christianos2021scaling,JMLR:v25:23-0488}. HyperMARL mitigates this
by generating each agent’s parameters through a shared hypernetwork,
thereby \emph{decoupling} \emph{agent-conditioned} and \emph{observation‑conditioned} components of the gradient.

\textbf{Hypernetwork gradients.} Consider a fully cooperative MARL setting with a centralised critic, we can formulate the policy gradient for agent $i$ as follows~\citep{albrecht2024multi,kuba2021settling}: 
$$
\nabla_{\theta^i} J(\theta^i) \;=\;
\mathbb{E}_{\mathbf{h_t},\mathbf{a_t} \sim \boldsymbol{\pi}}
\Bigl[
  A(\mathbf{h_t}, \mathbf{a_t})
  \,\nabla_{\theta^i} \log \pi_{\theta^i}(a_t^i \mid h_t^i)
\Bigr],
$$
where $\mathbf{h_t}$ and $\mathbf{a_t}$ are the joint histories and joint actions for all agents, $\theta^i$ denotes the parameters of agent $i$, and $A(\mathbf{h_t}, \mathbf{a_t}) = Q(\mathbf{h_t}, \mathbf{a_t}) - V(\mathbf{h_t})$ is the advantage function.

\textbf{Decoupling.} In HyperMARL each agent’s policy weights are produced by the hypernetwork $h_{\psi}^{\pi}$: $\theta^{i}=h_{\psi}^{\pi}(e^{i})$, so we optimise a \emph{single}
parameter vector~$\psi$. Applying the chain rule and re-ordering the expectations: 
\begin{equation}
\label{eq:hmarl_split}
\nabla_{\!\psi}J(\psi)
=
\sum_{i=1}^{I}
\underbrace{\nabla_{\!\psi}h_{\psi}^{\pi}(e^{i})}_{\mathbf J_i\;\text{(agent-conditioned)}}\;
\underbrace{
  \mathbb{E}_{\mathbf h_t,\mathbf a_t\sim\boldsymbol\pi}
  \bigl[
     A(\mathbf h_t,\mathbf a_t)\,
     \nabla_{\theta^{i}}\log\pi_{\theta^{i}}(a_t^{i}\mid h_t^{i})
  \bigr]}_{Z_i\;\text{(observation-conditioned)}} .
\end{equation}

\begin{itemize}[leftmargin=*]
   \item \textbf{Agent-conditioned factor $\mathbf J_i$.}  
      This Jacobian depends only on the fixed embedding $e^i$ and the
      hypernetwork weights $\psi$, therefore, it is \emph{deterministic} with respect to mini-batch samples (as $e^i$ and $\psi$ are fixed per gradient step), separating agent identity from trajectory noise.
      \item  \textbf{Observation-conditioned factor $Z_i$.} The expectation averages trajectory noise \emph{per agent~$i$} for its policy component $\pi_{\theta^i}$, prior to transformation by $\mathbf{J}_i$ and aggregation. 
      
\end{itemize}

The crucial structural decoupling in Equation~\eqref{eq:hmarl_split} ensures HyperMARL first averages noise per agent (via factor $Z_i$) before applying the deterministic agent-conditioned transformation $\mathbf{J}_i$. This mitigates gradient interference common in FuPS+ID ~\citep{christianos2021scaling,JMLR:v25:23-0488}, where observation noise and agent identity become entangled (see Equation~\eqref{eq:variance_split} in App.~\ref{app:gradient_details}). This is the MARL analogue of the task/state decomposition studied by \citep[Eq.~18]{sarafian2021recomposing} in Meta-RL. Section \ref{subsubsec:grad_var} empirically verifies the predicted variance drop, and ablations in Section \ref{sec:ablations} demonstrate that disabling decoupling degrades performance, underscoring its critical role.

\section{Experiments}\label{sec:experiments}

Our empirical evaluation of HyperMARL assesses whether agent-conditioned hypernetworks can enable adaptive (specialised or homogeneous) policies without altered RL objectives, preset diversity levels, or sequential updates. We structure our experiments to directly answer two key research questions: \textbf{Q1: Specialised Policy Learning:} Can \textit{HyperMARL} effectively learn \textit{specialised policies} via a shared hypernetwork? \textbf{Q2: Effectiveness in Homogeneous Tasks}: Is \textit{HyperMARL} competitive in environments that necessitate homogeneous behaviours?

To address these questions, HyperMARL is evaluated against six representative modern baselines across a carefully selected suite of MARL benchmarks (22 scenarios, up to 30 agents). All experiments use at least 5 seeds (details in App.~\ref{subsec:train_and_eval}).

\subsection{Experimental Setup}

\begin{wraptable}{r}{0.45\columnwidth} %
    \centering
    \caption{MARL environments for evaluating \textit{HyperMARL}.}
    \label{tab:env_summary}
    \vspace{-0.5em} %
    \resizebox{\linewidth}{!}{%
        \begin{tabular}{lccc}
        \toprule
        \textbf{Env.} & \textbf{Agents} & \textbf{Action} & \textbf{Behaviour} \\ %
        \midrule
        Dispersion & 4 & Discrete & Hetero. \\ %
        Navigation & 2,4,8 & Continuous & Homo., Hetero., Mixed \\ %
        MAMuJoCo & 2--17 & Continuous & Hetero. \\
        SMAX & 2--20 & 
        Discrete & Homo. \\
        BPS & 15--30 & Discrete & Hetero. \\
        \bottomrule
        \end{tabular}%
    }
    \vspace{-1em} %
\end{wraptable}

\textbf{Environments.} HyperMARL is evaluated across 22 scenarios from five diverse MARL environments (Dispersion~\citep{bettini2022vmas}, Navigation~\citep{bettini2022vmas}, MAMuJoCo~\citep{peng2021facmac}, SMAX~\citep{rutherford2024jaxmarl}, BPS~\citep{christianos2021scaling}) (Table~\ref{tab:env_summary}). These were specifically chosen to rigorously test performance across varying complexities, agent counts (2 to 30), and distinct behaviours (heterogeneous, homogeneous, or mixed). Full details in Appendix~\ref{subsec:environments}.

\textbf{Baselines.} We evaluate HyperMARL against modern parameter sharing (PS) and diversity-promoting baselines. Core PS comparisons use \textbf{FuPS+ID} and \textbf{NoPS}. For specialisation, we include \emph{privileged} baselines: \textbf{DiCo}~\citep{bettini2024controlling} (shared and non-shared weights, preset diversity levels); \textbf{HAPPO}~\citep{JMLR:v25:23-0488} (shared critic, sequential actor updates); \textbf{Kaleidoscope}~\citep{li2024kaleidoscope} (learnable masks, critic ensembles, diversity loss); and \textbf{SePS} (pre-training phase, agent clustering). We use IPPO/MAPPO~\citep{de2020independent,yu2022surprising} as the underlying algorithm for all methods except Kaleidoscope and SePS (see App.~\ref{app:kaleidoscope_offpolicy} and App.~\ref{app:seps} for these results).

Adhering to best evaluation practices~\citep{patterson2024empirical}, we ~\textit{use original codebases/hyperparameters and environments for which baselines were tuned}. HyperMARL uses identical observations and generates architectures of equivalent capacity to baselines. Training and evaluation (App.~\ref{subsec:train_and_eval}) and hyperparameters (App.~\ref{append:hyperparams}) follow each baseline's original setup. We detail our baseline and environment selection criteria in Table~\ref{tab:baseline_selection}, with architecture details in App.~\ref{app:arch_details}.

\begin{figure*}[tb]
  \centering
  \includegraphics[width=0.6\linewidth]{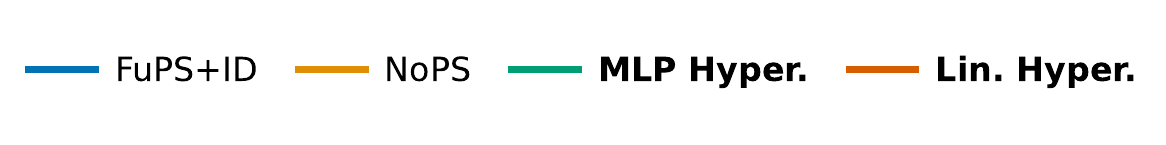}
  
  \vspace{1ex}
  
  \begin{subfigure}[b]{0.23\textwidth}
    \centering
    \includegraphics[width=\linewidth]{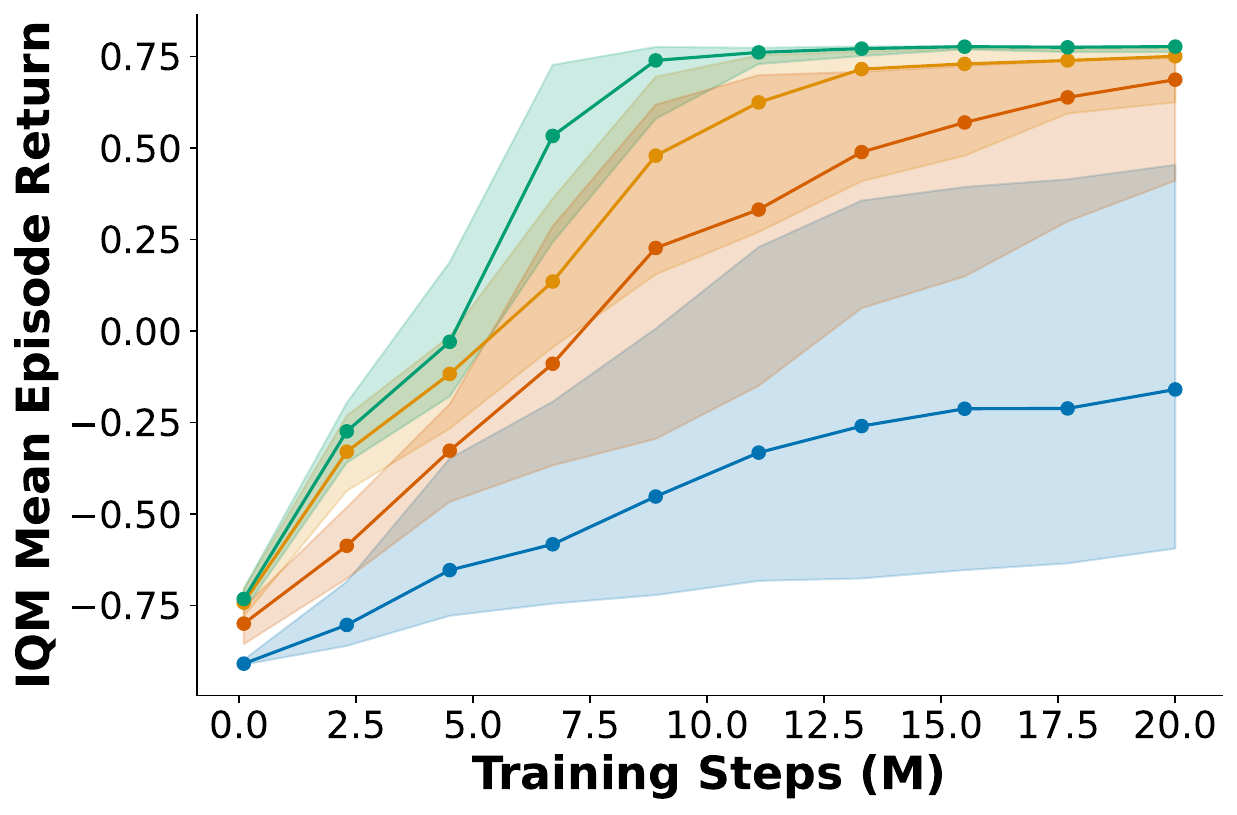}
    \caption{IPPO}
    \label{fig:ippo-sample-efficiency}
  \end{subfigure}\hfill
  \begin{subfigure}[b]{0.23\textwidth}
    \centering
    \includegraphics[width=\linewidth]{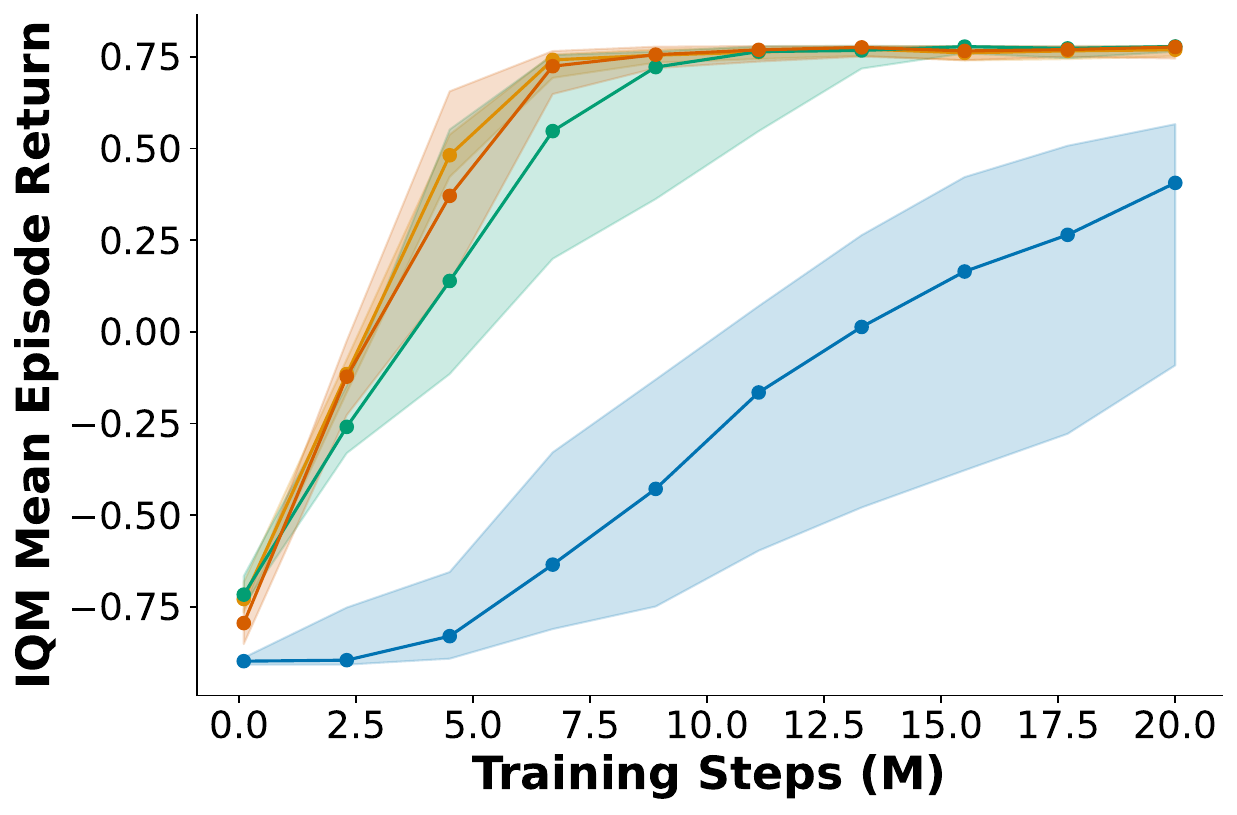}
    \caption{MAPPO}
    \label{fig:mappo-sample-efficiency}
  \end{subfigure}\hfill
  \begin{subfigure}[b]{0.23\textwidth}
    \centering
    \includegraphics[width=\linewidth]{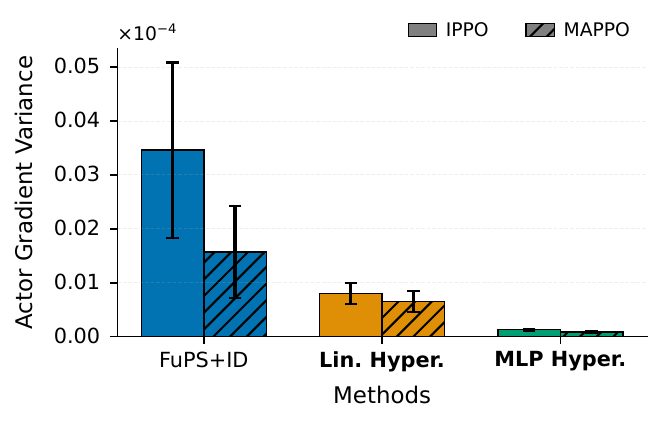}
    \caption{Actor Gradient Var.}
    \label{fig:actor_grad_var}
  \end{subfigure}\hfill
  \begin{subfigure}[b]{0.23\textwidth}
    \centering
    \includegraphics[width=\linewidth]{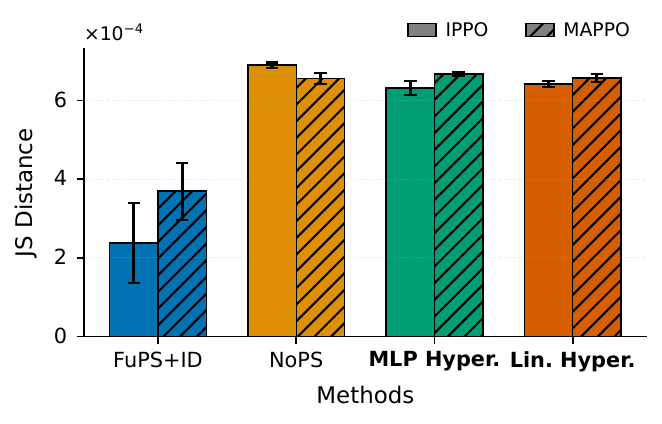}
    \caption{\small Policy Diversity}
    \label{fig:snd_plot}
  \end{subfigure}
  \caption{\textit{Performance and gradient analysis.}  
    \textbf{(a,b)} IPPO and MAPPO on Dispersion (20M timesteps) - IQM of Mean Episode Return with 95\% bootstrap CIs: Hypernetworks match NoPS performance while FuPS struggle with specialisation. Interval estimates in App.~\ref{append:dispersion_dynamics}.   
    \textbf{(c)} Actor gradient variance: Hypernetworks achieve lower gradient variance than FuPS+ID.  
    \textbf{(d)} Policy diversity (SND with Jensen–Shannon distance): Hypernetworks achieve NoPS-level diversity while sharing parameters.
  }
  \label{fig:performance-comparison}
  \vspace{-12pt}
\end{figure*}

\textbf{Measuring Policy Diversity.}\label{subsubsec:policy_diversity}  
To measure the diversity of the policies we System Neural Diversity (SND)~\citep{bettini2023system} (Equation \ref{eq:snd}) with Jensen-Shannon distance (details in App.~\ref{app:measure_policy_diversity}).

\subsection{Q1: Specialised Policy Learning}\label{subsec:specialised_policy_learning}

\definecolor{FuPS}{HTML}{0173b2}
\definecolor{NoPS}{HTML}{de8f05}
\definecolor{LinearHnet}{HTML}{d55e00} 
\definecolor{MLPHnet}{HTML}{029e73}

\textbf{Learning Diverse Behaviour (Dispersion)} \label{results:dispersion} Figures~\ref{fig:ippo-sample-efficiency} and \ref{fig:mappo-sample-efficiency} show that FuPS variants (IPPO-FuPS, MAPPO-FuPS -- (\textcolor{FuPS}{$\bullet$})) can struggle to learn the diverse policies required by Dispersion (even when running for longer - Fig.~\ref{fig:run_longer}), while their NoPS counterparts (IPPO-NoPS, MAPPO-NoPS--(\textcolor{NoPS}{$\bullet$})) converge to the optimal policy, corroborating standard FuPS limitations to learn diverse behaviour. In contrast, HyperMARL (both linear and MLP variants) (\textcolor{LinearHnet}{$\bullet$}, \textcolor{MLPHnet}{$\bullet$}) match NoPS performance, suggesting that a shared hypernetwork can effectively enable agent specialisation. \emph{SND policy diversity measurements}  (Fig.~\ref{fig:snd_plot}) confirm FuPS variants achieve lower behavioural diversity than NoPS, while HyperMARL notably matches NoPS-level diversity.

\begin{wrapfigure}[18]{r}{0.35\textwidth}
    \vspace{-8pt}
    \centering
    \includegraphics[width=0.95\linewidth]{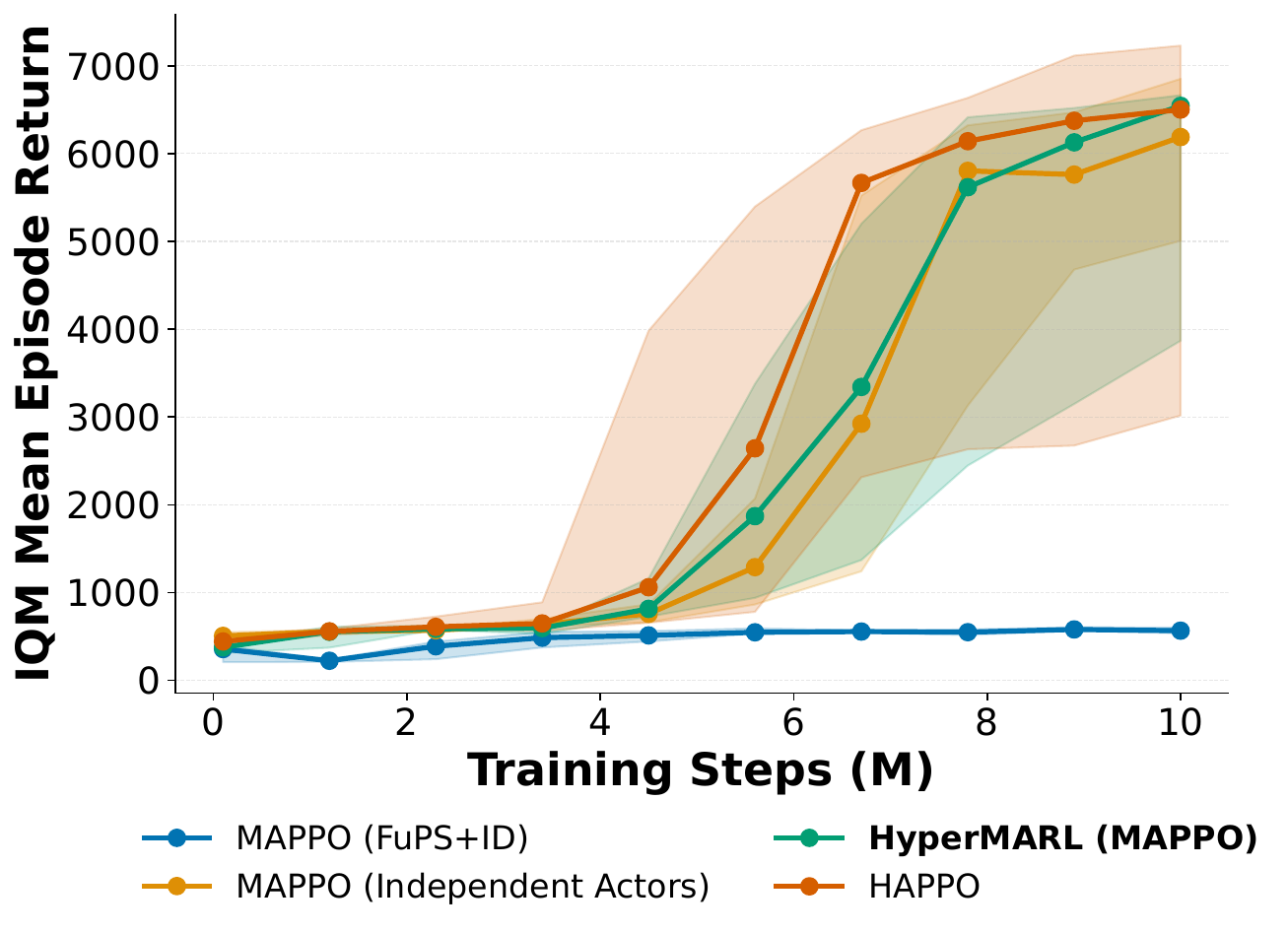}
    \vspace{-4pt}
    \caption{\small \textit{17-agent Humanoid learning dynamics (IQM, 95\% CI).} HyperMARL, utilising a shared actor architecture, outperforms MAPPO-FuPS (non-overlapping CIs) and matches the performance of methods employing non-shared or sequential actors. This challenging environment is recognised for its high variance in outcomes across different methods~\citep{JMLR:v25:23-0488}.}
    \label{fig:humanoid-result}
    \vspace{-8pt}
\end{wrapfigure}

\textbf{Gradient Variance.}\label{subsubsec:grad_var} HyperMARL (IPPO and MAPPO variants) also exhibits lower mean policy gradient variance than FuPS+ID across actor parameters (Fig.~\ref{fig:actor_grad_var}). This aligns with their ability to learn diverse behaviours and supports the hypothesis that its gradient decoupling mechanism (Sec.~\ref{subsec:decoupling_grads}) enhances training stability.

\textbf{Diversity at Complexity and Scale (MAMuJoCo).}\label{sec:mamujoco_results} In the challenging MAMuJoCo heterogeneous control tasks (Table~\ref{tab:mamujoco_results}), HyperMARL (MLP variant) is broadly competitive. Notably, unlike HAPPO and MAPPO (independent actors), HyperMARL uses a shared actor and parallel updates, and yet manages strong performance, even in the  17-agent Humanoid-v2 notoriously difficult heterogeneous task\citep{JMLR:v25:23-0488} (Fig.~\ref{fig:humanoid-result}), matching methods that employ independent actors and sequential updates.

\begin{table*}[tb] %
\small
\centering
\caption{\textit{Mean episode return in MAMuJoCo for MAPPO variants(IQM, 95\% CI).}
    HyperMARL achieves the highest IQM in 3/4 scenarios (bold), and is the only method with shared actors to demonstrate stable learning in the notoriously difficult 17-agent Humanoid environment (see Figure~\ref{fig:humanoid-result} for learning dynamics). * indicates CI overlap with the top score.
}
\label{tab:mamujoco_results}
\resizebox{0.90\textwidth}{!}{%
\begin{tabular}{lcccc}
\toprule
\textbf{Scenario} & \textbf{HAPPO} & \textbf{FuPS+ID} & \textbf{Ind. Actors} & \textbf{HyperMARL (Ours)} \\
\midrule
Humanoid-v2 17x1    & 6501.15\textsuperscript{*} (3015.88, 7229.79) & 566.12 (536.36, 603.01) & 6188.46\textsuperscript{*} (5006.13, 6851.74) & \textbf{6544.10} (3868.00, 6664.89) \\
Walker2d-v2 2x3     & 4748.06\textsuperscript{*} (4366.94, 6230.81) & 4574.39\textsuperscript{*} (4254.21, 5068.32) & 4747.05\textsuperscript{*} (3974.76, 6249.58) & \textbf{5064.86} (4635.10, 5423.42) \\
HalfCheetah-v2 2x3 & 6752.40\textsuperscript{*} (6130.42, 7172.98) & 6771.21\textsuperscript{*} (6424.94, 7228.65) & 6650.31\textsuperscript{*} (5714.68, 7229.61) & \textbf{7063.72} (6696.30, 7325.36) \\
Ant-v2 4x2         & 6031.92\textsuperscript{*} (5924.32, 6149.22) & \textbf{6148.58} (5988.63, 6223.88) & 6046.23\textsuperscript{*} (5924.62, 6216.57) & 5940.16\textsuperscript{*} (5485.77, 6280.59) \\
\bottomrule
\end{tabular}%
}
\end{table*}

\textbf{Adaptability (Navigation).}\label{results:navigation} Navigation tasks~\citep{bettini2022vmas} assess adaptability to homogeneous, heterogeneous, and \textit{mixed} goals (some agents have the same goals, others different). We compare HyperMARL with baselines including DiCo~\cite{bettini2024controlling}. While using DiCo's optimal preset diversity for n=2 agents, we note that identifying appropriate diversity levels for DiCo with larger teams ($n>2$) via hyperparameter sweeps proved challenging (see Tables~\ref{tab:dico-snd-des} and~\ref{tab:param_sweeps_navigation}).

\begin{figure}[tb]
  \centering

  \begin{subfigure}[b]{0.48\columnwidth}
    \centering
    \includegraphics[width=\linewidth]{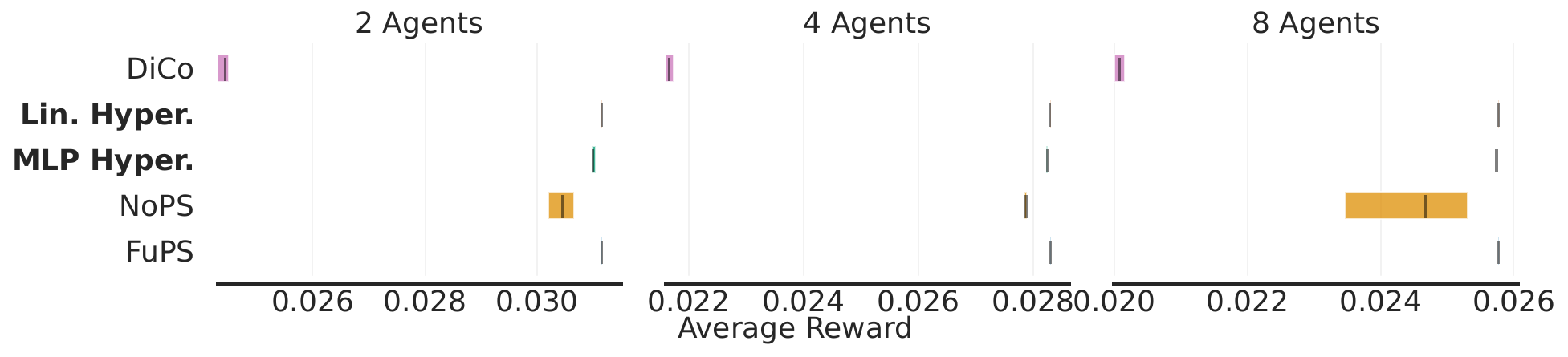}
    \caption{Shared Goals}
    \label{fig:dico_same_goals}
  \end{subfigure}\hfill
  \begin{subfigure}[b]{0.48\columnwidth}
    \centering
    \includegraphics[width=\linewidth]{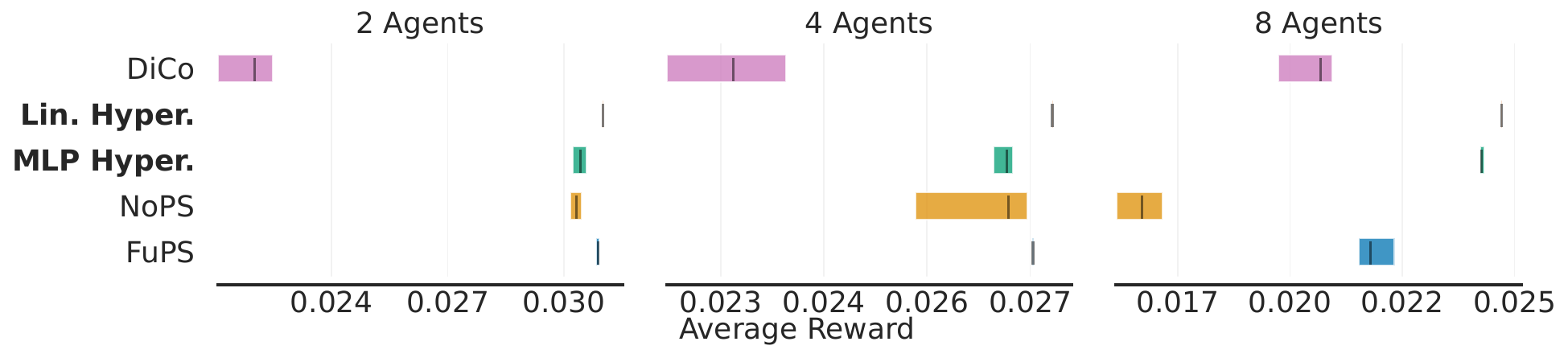}
    \caption{Unique Goals} 
    \label{fig:dico_diff_goals}
  \end{subfigure}

  \vspace{0.5em} %

  \begin{subfigure}[b]{0.35\columnwidth} %
    \centering
    \includegraphics[
      width=\linewidth,
      keepaspectratio %
    ]{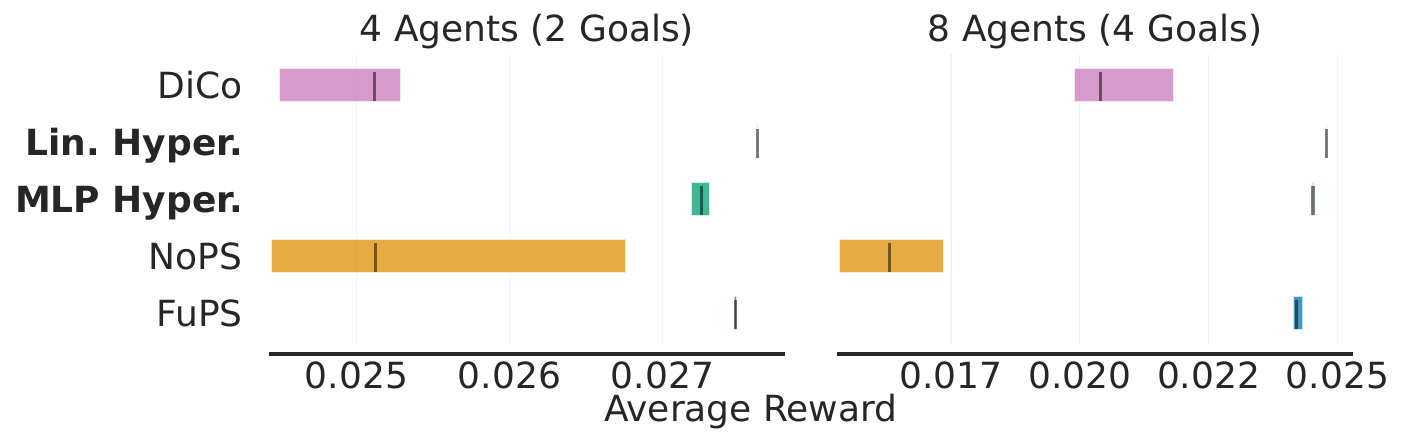}
    \caption{Mixed Goals (Half Agents Share Goals)} 
    \label{fig:dico_mix_goals}
  \end{subfigure}
  \caption{\small \textit{Average Reward (IQM, 95\% CI) in Navigation for IPPO Variants.} HyperMARL adapts robustly across goal configurations—(a) shared, (b) unique, and (c) mixed. Both linear and MLP versions consistently match or outperform IPPO baselines and DiCo, with the margin widening as the number of agents grows. Sample-efficiency curves appear in App.~\ref{app:dico_results}.}\label{fig:dico_results}
  \vspace{-10pt} %
\end{figure}

Across all tested goal configurations (shared, unique, and mixed), HyperMARL consistently achieves strong performance (Figure~\ref{fig:dico_results}). It generally matches or outperforms NoPS and FuPS, and outperforming DiCo. Interestingly, unlike in sparse-reward tasks like Dispersion, FuPS remains competitive with NoPS and HyperMARL in Navigation scenarios requiring diverse behaviours for smaller teams ($n \in \{2, 4\}$), likely due to Navigation's dense rewards. However, HyperMARL distinguishes itself as the strongest method for n=8 agents, highlighting its effectiveness in handling more complex coordination challenges.

\begin{figure*}[tb]
    \centering
    \includegraphics[width=0.8\linewidth]{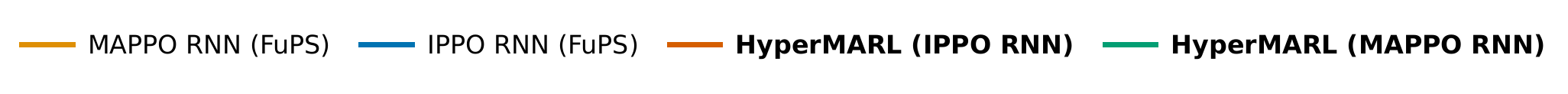}
    \begin{subfigure}[b]{0.24\textwidth}
        \centering
        \includegraphics[width=\textwidth]{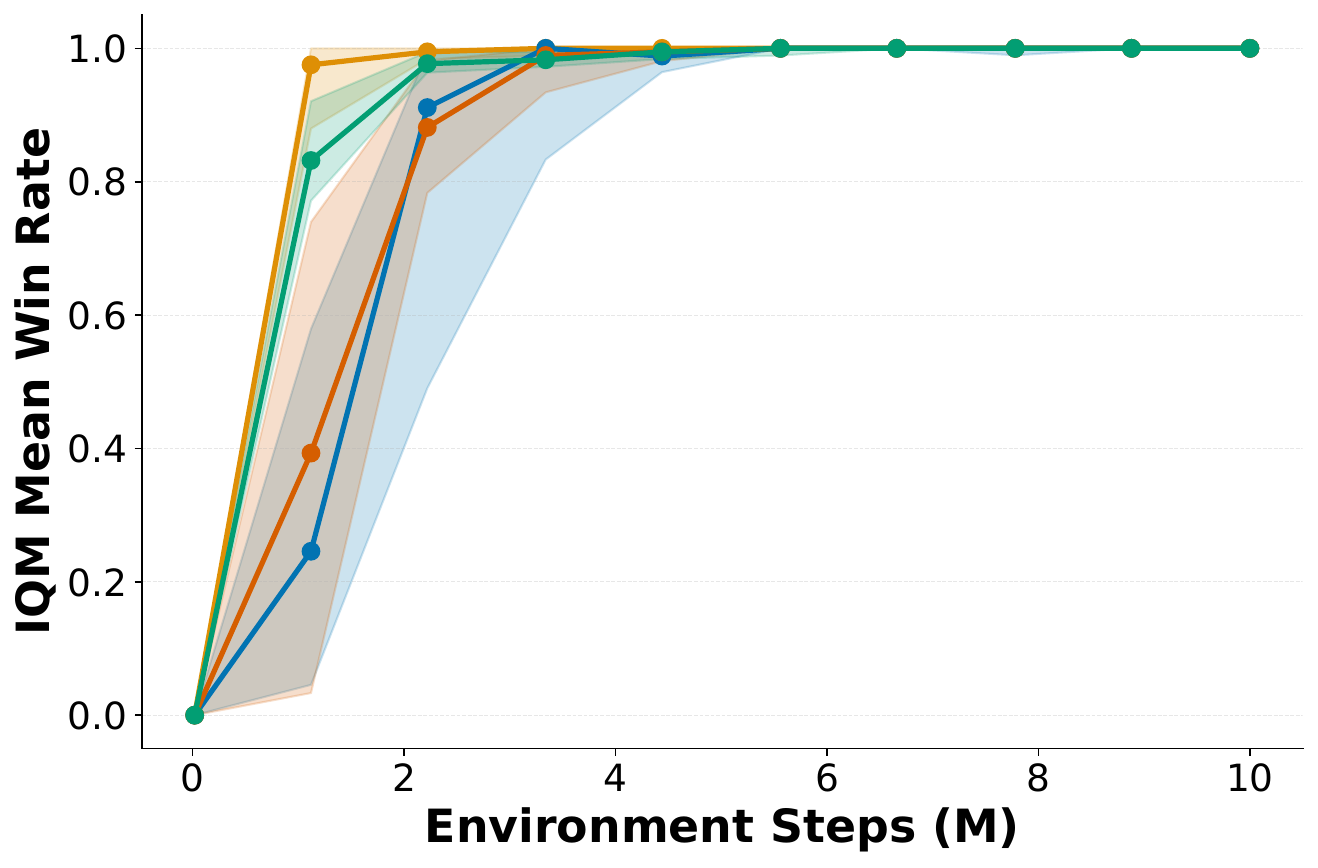}
        \caption{2s3z}
        \label{fig:smax_2s3z}
    \end{subfigure}
    \hfill
    \begin{subfigure}[b]{0.24\textwidth}
        \centering
        \includegraphics[width=\textwidth]{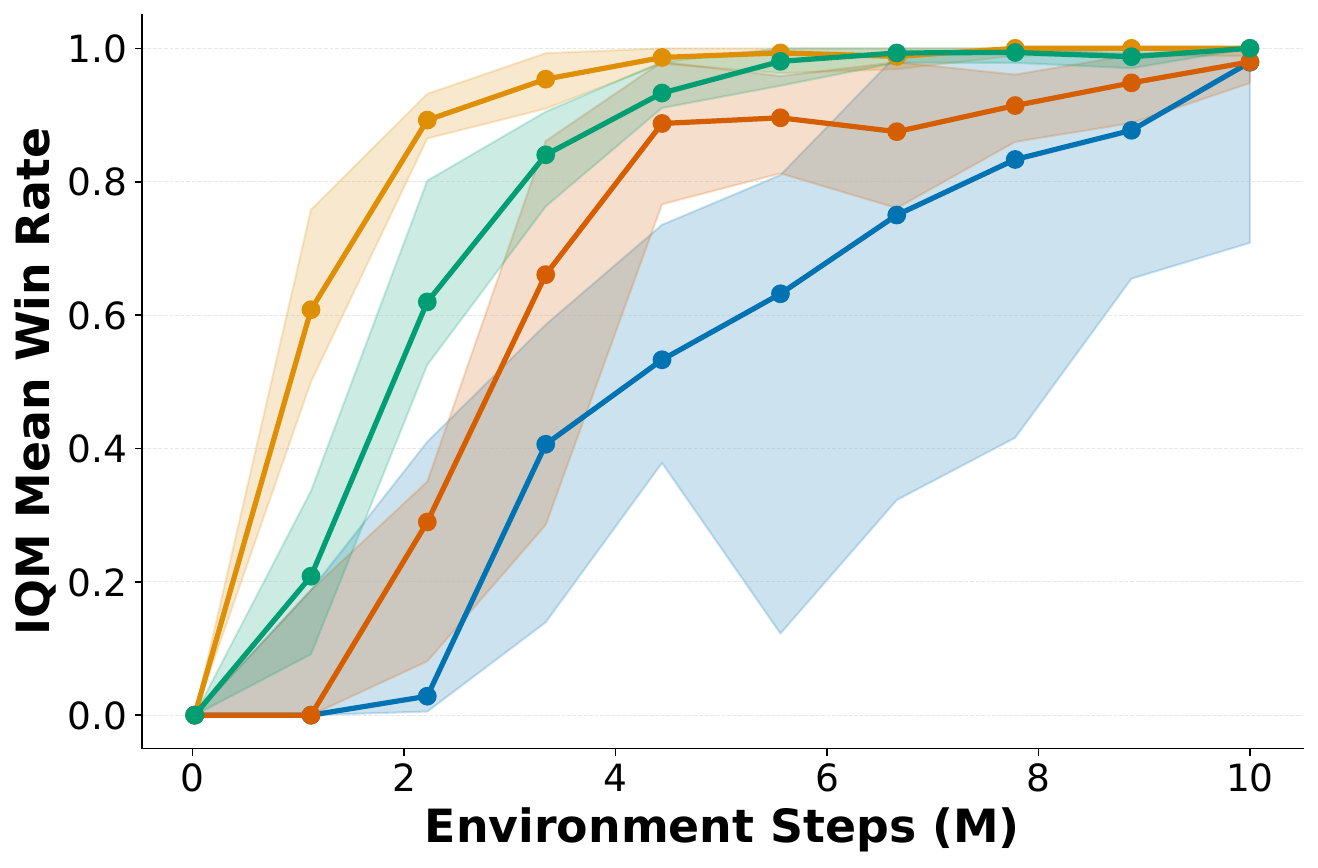}
        \caption{3s5z}
        \label{fig:smax_3s5z}
    \end{subfigure}
    \hfill
    \begin{subfigure}[b]{0.24\textwidth}
        \centering
        \includegraphics[width=\textwidth]{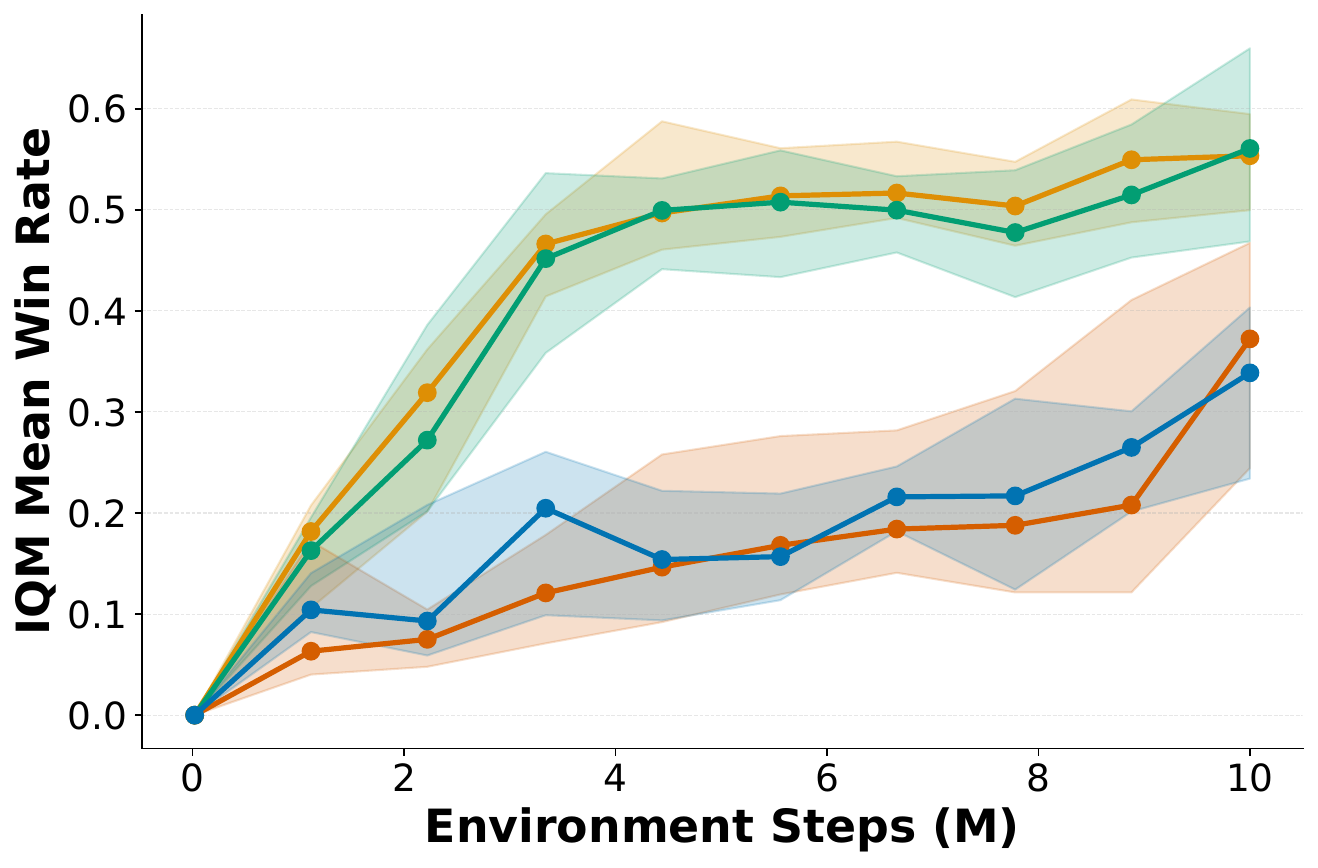}
        \caption{SMACv2 10 Units}
        \label{fig:smax_10_units}
    \end{subfigure}
    \hfill 
    \begin{subfigure}[b]{0.24\textwidth}
        \centering
        \includegraphics[width=\textwidth]{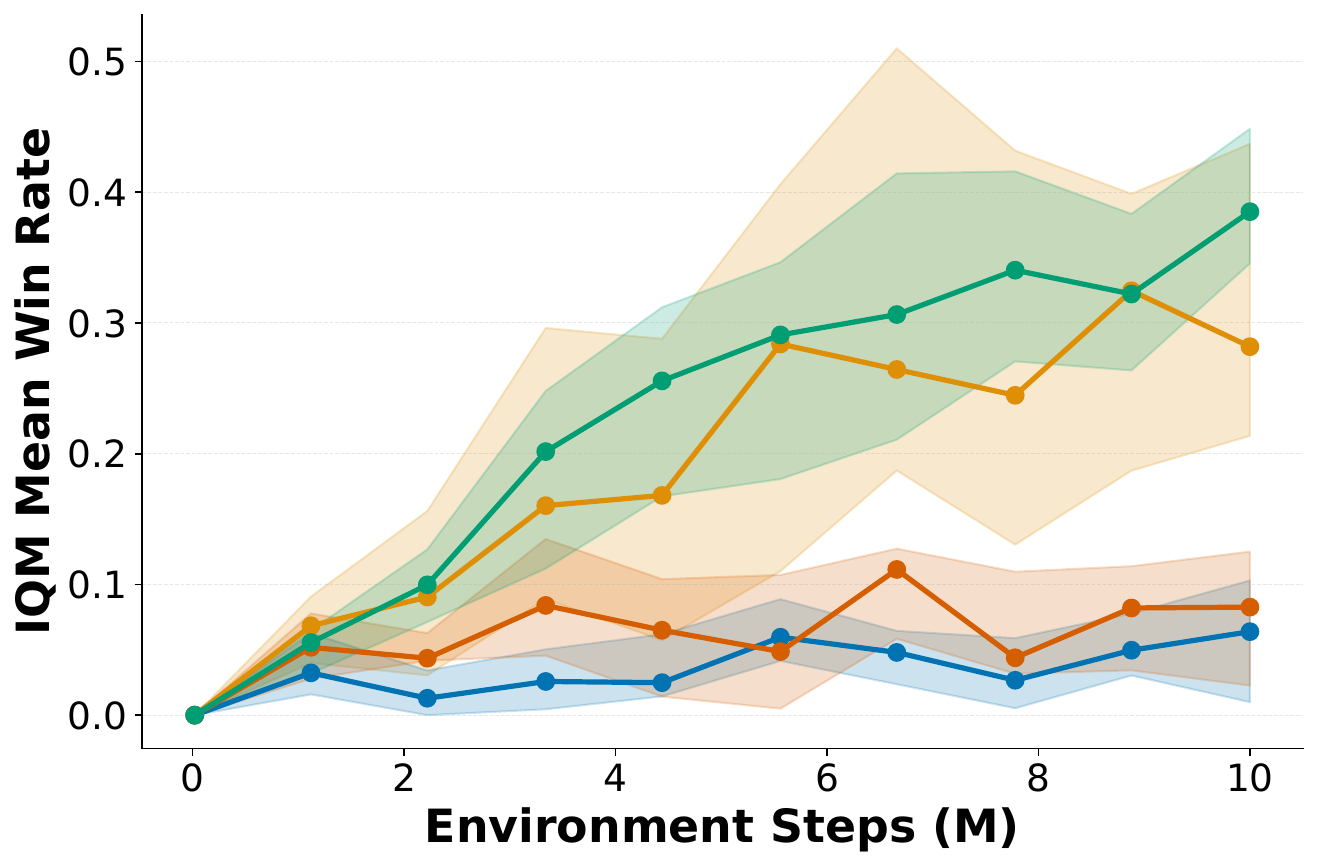}
        \caption{SMACv2 20 Units}
        \label{fig:smax_20_units}
    \end{subfigure}
    \caption{\textit{IQM and 95\% CI of mean win rate in SMAX.} Performance of FuPS Recurrent IPPO and MAPPO and HyperMARL (MLP) on SMAX. HyperMARL performs comparably to FuPS baselines across all environments, demonstrating its effectiveness in tasks requiring homogeneous behaviours and using recurrency. Interval estimates in Fig.~\ref{fig:smax_interval_estimates} in App.~\ref{app:interval_est_smax}.}
    \label{fig:performance_smax}
\end{figure*}

\subsection{Q2: Effectiveness in Homogeneous Tasks}\label{results:smax}

\textbf{SMAX.} Finally, we evaluate HyperMARL (MLP) on SMAX, where recurrent FuPS is the established baseline~\citep{rutherford2024jaxmarl,yu2022surprising,fu2022revisiting}. Figure \ref{fig:performance_smax} shows while some FuPS variants might exhibit marginally faster initial convergence on simpler maps, HyperMARL achieves comparable final performance on all maps, using the same GRU backbone for partial observability. These results highlight two points: (i) HyperMARL is fully compatible with recurrent architectures essential under partial observability, and (ii) it has no intrinsic bias toward specialisation and can converge to homogeneous behaviour when it is optimal (also shown with strong same-goal Navigation performance (Fig.~\ref{fig:dico_same_goals})), even with large observation spaces and many agents.

\textbf{Summary.} Our empirical results confirm HyperMARL effectively addresses both research questions. For \textbf{Q1  (Specialisation)}, across Dispersion, MAMuJoCo, and Navigation, HyperMARL learned specialised policies, matched NoPS-level diversity and performance where FuPS+ID struggled, and scaled to complex, high-agent-count heterogeneous tasks. For \textbf{Q2 (Homogeneity)}, HyperMARL demonstrated competitive performance against strong FuPS baselines in SMAX and shared-goal Navigation, confirming its versatility.

\section{Ablations and Embedding Analysis}

\subsection{Ablations: Gradient Decoupling and Initialisation Scaling}\label{sec:ablations}
We ablate two critical components of HyperMARL: \emph{gradient decoupling} (Sec.~\ref{subsec:decoupling_grads}) and initialisation scaling (Sec.~\ref{subsec:agent_context_and_init}). In \textit{HyperMARL w/o GD}, the hypernetwork is conditioned on $[o_t, e^i_t]$, coupling observation and agent-ID gradients. In \textit{HyperMARL w/o RF}, we remove the reset fan-in/out scaling that aligns the scale of generated parameters $(\theta^i,\phi^i)$ with standard initialisers.

\textbf{Gradient decoupling is essential; initialisation scaling grows with complexity.} Figure~\ref{fig:ablations} shows that removing GD consistently degrades performance across both \textit{Humanoid-v2} (17 agents) and \textit{Dispersion}, showing that GD is an essential component of HyperMARL. Removing RF reveals a task-dependent effect: it is critical on \textit{Humanoid-v2}, consistent with hypernetwork initialisation results~\citep{Chang2020Principled}, but has a minor impact on \textit{Dispersion}, indicating that principled initialisation becomes more vital with increased complexity. We provide additional ablations in App.~\ref{app:additional_ablations}.

\begin{figure}[tb]
  \centering
  \begin{minipage}{\textwidth}
    \centering
    \begin{subfigure}[b]{0.24\textwidth}
      \centering
      \includegraphics[width=\textwidth]{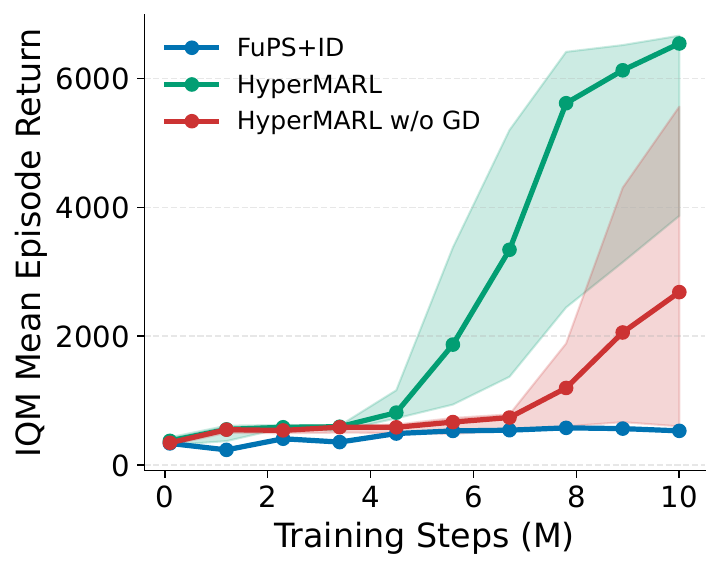}
      \caption{Humanoid w/o GD}
      \label{fig:ablation-humanoid-gd}
    \end{subfigure}
    \begin{subfigure}[b]{0.24\textwidth}
      \centering
      \includegraphics[width=\textwidth]{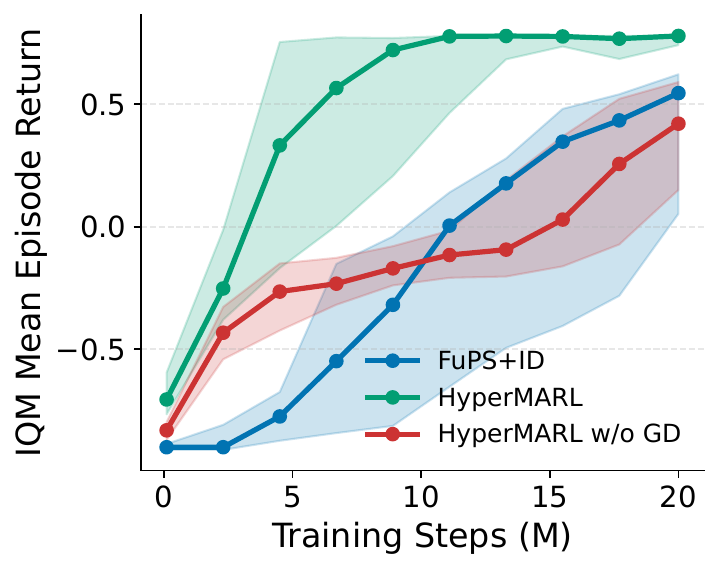}
      \caption{Dispersion w/o GD}
      \label{fig:ablation-dispersion-gd}
    \end{subfigure}
    \begin{subfigure}[b]{0.24\textwidth}
      \centering
      \includegraphics[width=\textwidth]{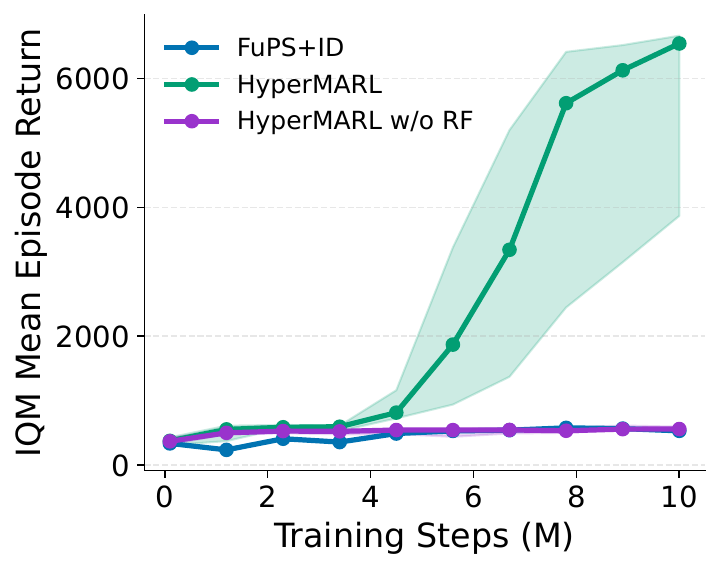}
      \caption{Humanoid w/o RF}
      \label{fig:ablation-humanoid-rf}
    \end{subfigure}
    \begin{subfigure}[b]{0.24\textwidth}
      \centering
      \includegraphics[width=\textwidth]{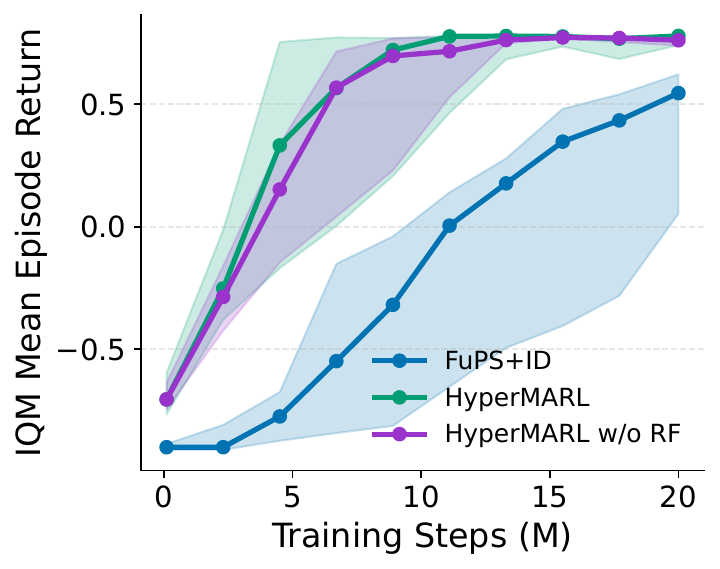}
      \caption{Dispersion w/o RF}
      \label{fig:ablation-dispersion-rf}
    \end{subfigure}
    \caption{\textit{Ablation results comparing HyperMARL to variants without gradient decoupling (w/o GD) and without reset fan in/out initialisation (w/o RF) across environments.} Gradient decoupling (a,b) is consistently critical across both environments, while initialisation scaling (c,d) shows greater importance in the complex Humanoid task but less impact in the simpler Dispersion environment.}
    \label{fig:ablations}
  \end{minipage}
  \vspace{-10pt}
\end{figure}

\begin{wrapfigure}[15]{r}{0.35\textwidth}
  \centering
  \vspace{-30pt} %
  \includegraphics[width=\linewidth]{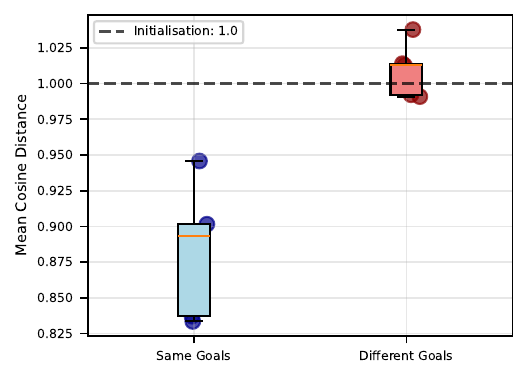}
  \caption{\textbf{Embedding similarity reflects task demands.} Mean pairwise cosine distance (dashed line = 1.0). Same goal: contraction ($0.882\pm0.042$). Different goals: near-orthogonal ($1.010\pm0.017$). Embeddings contract when a shared policy is optimal and maintain separation for specialisation.}
  \label{fig:embedding_distance}
\end{wrapfigure}

\subsection{Analysis of Learned Agent Embeddings}
\label{subsec:embedding_analysis}

Recall from \ref{subsec:agent_context_and_init} that each agent $i$ possesses an embedding $e^i$, which serves as input to the hypernetworks. For the MLP hypernetworks, these embeddings are learned end-to-end and are \emph{orthogonally initialised}. Consequently, at initialisation (step 0), the pairwise cosine distance between any two distinct agent embeddings is $1.0$, mirroring the separability of one-hot IDs.

We probe how these embeddings $e^i$ adapt in the 4-agent Navigation task (same task as in Figure \ref{app:dico_results}) under two objectives with identical dynamics: (i) \emph{same goal} (all agents navigate to a single shared target) and (ii) \emph{different goals} (each agent navigates to its own unique target). At the end of training, we compute the mean pairwise cosine distance between agents' embeddings (lower values imply greater similarity) and compare it to the orthogonal baseline of $1.0$. (Cosine distance $= 1 - \text{cosine similarity}$; $0$ = identical direction, $1$ = orthogonal, $2$ = opposite). Figure~\ref{fig:embedding_distance} shows the per-seed distributions, with the $1.0$ initialisation value as a dashed line.

\textbf{Agent embeddings adapt to task demands.} When behaviour should be homogeneous (\emph{same goal}), embedding directions become significantly more aligned, the mean pairwise cosine distance \emph{contracts} to $0.882 \pm 0.042$ (one-sample $t$-test vs $1.0$: $p=0.0079$). Conversely, when behaviour must differ (\emph{different goals}), the embeddings \emph{maintain their separation}, remaining near their orthogonal initialisation at $1.010 \pm 0.017$. These results demonstrate that the hypernetwork actively modulates the agent embeddings based on the task, promoting alignment for homogeneity while preserving separability for specialisation. We also conduct a sensitivity analysis on HyperMARL's hyperparameters in App~\ref{app:sensitivity} and see that agent embedding size can be an important hyperparameter, one that could correspond to the task's diversity requirements.

\section{Related Work}\label{sec:related_work}

\textbf{Hypernetworks in RL and MARL.} Hypernetworks are effective in single-agent RL for meta-learning, multi-task learning, and continual learning~\citep{beck2023hypernetworks,beck2024recurrent,sarafian2021recomposing,huang2021continual}. In MARL, QMIX~\citep{rashid2020monotonic} used hypernetworks (conditioned on a global state) to mix per-agent Q-values; however, each agent’s own network remained a standard GRU. Parallel work, CASH~\citep{fu2025learning}, conditions a hypernetwork on local observations and predefined capability descriptors to target zero-shot generalisation across heterogeneous action spaces. By contrast, we use agent-conditioned hypernetworks in homogeneous action spaces, conditioning only on agent IDs/learned embeddings, and we explicitly decouple agent-conditioned from observation-conditioned gradients -- a mechanism absent in CASH -- which we find critical for specialisation.

\textbf{Variants of Parameter Sharing.} While Full Parameter Sharing (FuPS) is the most common approach, several other variants exist. Selective Parameter Sharing (SePS)~\citep{christianos2021scaling} shares weights between similar groups of agents, identified by clustering agent trajectories during a pre-training phase. Pruning methods (SNP-PS, Kaleidoscope)~\citep{kim2023parameter,li2024kaleidoscope} split a single network into agent-specific subnetworks using learned agent masks. AdaPS~\citep{li2024adaptive} combines clustering and pruning masks to create shared policies for different groups of agents. Concurrent to our work, GradPS~\citep{qingradps} identifies neurons with conflicting gradient updates, clones these neurons, and assigns each clone to a group of agents with low gradient conflict. Unlike these works, HyperMARL does not rely on pre-training (SePS), clustering algorithms requiring a preset number of clusters (SePS, GradPS, AdaPS), intricate pruning hyperparameters (SNP-PS, Kaleidoscope), auxiliary diversity losses (Kaleidoscope), or gradient conflict thresholds (GradPS).

\textbf{Learning Diverse Policies.} Shared parameters often limit policy diversity~\citep{christianos2021scaling,kim2023parameter,fu2022revisiting,li2021celebrating}.
Proposed solutions include: maximising mutual information between agent IDs and trajectories~\citep{li2021celebrating}, role-based methods that assign distinct roles to agents~\citep{wang2020roma,wang2020rode}, best-response~\citep{rahman2023BRDiv} methods and approaches that use structural modifications or constraints to induce diversity~\citep{kim2023parameter,bettini2024controlling,li2024kaleidoscope,li2024adaptive}.
Outside FuPS/NoPS, HAPPO~\citep{JMLR:v25:23-0488} uses a non-shared centralised critic with individual actors updated sequentially to learn heterogeneous behaviours. In contrast to these works, HyperMARL does not alter the learning objective, use sequential updates or require preset diversity levels.

\section{Conclusion}

We investigated why standard parameter sharing fails at behavioural diversity, identifying that cross-agent gradient interference is critically exacerbated by coupling agent IDs with observations. We hypothesised that explicitly \emph{decoupling} these gradients would enable adaptivity without prior complexities, and confirmed this using our \emph{HyperMARL} approach. Our results show this decoupling enables adaptive behaviours (up to 30 agents) and is linked to reduced policy gradient variance. These findings establish gradient decoupling via HyperMARL as a versatile, principled approach for adaptive MARL. We discuss limitations in App.~\ref{app:limiations}, most notably parameter count, which can be remedied by parameter-efficient hypernetworks (e.g., chunked variants~\citep{Oswald2020Continual,pmlr-v238-chauhan24a}).

\section{Acknowledgements}
We would like to thank Samuel Garcin, Max Tamborski, Dave Abel, Timothy Hospedales, Trevor Mcinroe, Elliot Fosong, and Aris Filos-Ratsikas for fruitful discussions on early versions of this work. We also wish to acknowledge the anonymous reviewers for their constructive comments and feedback that helped strengthen this work.

{
\small

\bibliographystyle{plain}
\bibliography{ref}

}

\newpage 
\appendix 

\section{Limitations}\label{app:limiations}
Hypernetworks generate weights for target networks, which can lead to high-dimensional outputs and many parameters for deep target networks, particularly when using MLP-based hypernetworks. While HyperMARL uses more parameters than NoPS and FuPS for few agents, it scales almost constantly with agent count, an attractive property for large-scale MARL. Parameter efficiency could be improved through chunking techniques \citep{Oswald2020Continual,pmlr-v238-chauhan24a}, or low-rank weight approximations. This parameter overhead is often acceptable in RL/MARL given typically smaller actor-critic networks, and HyperMARL's favorable agent scaling (see App.~\ref{sec:scaling}).

\section{Broader Impact}\label{app:braoder_impact}
This paper presents work whose goal is to advance the field of Multi-Agent Reinforcement Learning. There are many potential societal consequences of our work, none which we feel must be specifically highlighted here.
\section{Specialised Policies and Environments}\label{app:spec_policy_env}

Specialisation plays a key role in MARL, yet remains under-defined, so we define \textit{specialised environments} and \textit{specialised policies}.

\begin{definition} 
\label{def:specialised_env}
   \raggedright An environment is \textit{specialised} if the following both hold:
    \begin{enumerate}[leftmargin=*]
        \item \textbf{Distinct Agent Policies.} The optimal joint policy $\boldsymbol{\pi}^*$ consists of at least two distinct agent policies, i.e., $\exists i, j \in \mathbb{I}$ such that  $\pi^i \neq \pi^j$.
        
     \item \textbf{Non-Interchangeability.} Any permutation $\sigma$ of the policies in $\boldsymbol{\pi}^*$, denoted as $\boldsymbol{\pi}^\sigma$, results in a weakly lower expected return:
        $$
        \mathbb{E}_{\mathbf{h} \sim \boldsymbol{\pi}^\sigma}[G(\mathbf{h})] \leq \mathbb{E}_{\mathbf{h} \sim \boldsymbol{\pi}^*}[G(\mathbf{h})],
        $$
        with strict inequality if the joint policies are \emph{non-symmetric} (i.e., swapping any individual policy degrades performance).
    \end{enumerate}
\end{definition}
For example, consider a \textit{specialised environment} such as a football game, optimal team performance typically requires players in distinct roles (e.g., "attackers," "defenders"). Permuting these roles (i.e., exchanging their policies) would typically lead to suboptimal results. Here, agents develop \textit{specialised policies} by learning distinct, complementary behaviours essential for an optimal joint policy. While agents with heterogeneous capabilities (e.g., different action spaces) are inherently specialised, homogeneous agents can also learn distinct policies. Such environments are analysed in Sections \ref{sec:spec_game} and \ref{results:dispersion}.
\section{Measuring Behavioural Diversity}\label{app:beh_diversity}
\subsection{Quantifying Team Diversity}
We quantify policy diversity using System Neural Diversity (SND)~\citep{bettini2023system}, which measures behavioural diversity based on differences in policy outputs: 
\begin{equation} \label{eq:snd}
\operatorname{SND}\left(\left\{\pi^i\right\}_{i \in \mathbb{I}}\right) = 
\frac{2}{n(n-1)|\mathcal{O}|} \sum_{i=1}^n \sum_{j=i+1}^n \sum_{o \in \mathcal{O}} 
D\left(\pi^i(o), \pi^j(o)\right).
\end{equation}
where $n$ is the number of agents, $\mathcal{O}$ is a set of observations typically collected via policy rollouts, $\pi^{i}(o_t)$ and $\pi^{j}(o_t)$ are the outputs of policies $i$ and $j$ for observation $o_t$, and $D$ is our distance function between two probability distributions. 

In contrast to ~\cite{bettini2023system}, we use Jensen-Shannon Distance (JSD) \citep{endres2003new,lin1991divergence} as $D$, rather than the Wasserstein metric~\citep{vaserstein1969markov}. As shown in Appendix \ref{appen:distance_function}, JSD is a robust metric for both continuous and discrete cases, and provides a more reliable measure of policy distance. 
\subsection{Finding a Suitable Distance Function for Policy Diversity} \label{appen:distance_function}

The choice of distance function $D$ in Equation \ref{eq:snd} is crucial for accurately measuring policy diversity. In MARL, policies are often represented as probability distributions over actions, making the choice of distance function non-trivial. 

\cite{bettini2024controlling} use the Wasserstein metric for continuous policies \citep{vaserstein1969markov} as distance function $D$, while \cite{mckee2022quantifying} use the total variation distance for discrete policies. For discrete policies, Wasserstein distance would require a cost function representing the cost of changing from one action to another, which might not be trivial to come up with. On the other hand, although well-suited for discrete policies, TVD might miss changes in action probabilities because it measures the largest difference assigned to an event (i.e. action) between two probability distributions. 

We consider a simple example to illustrate this point. Suppose we have two policies $\pi^1$ and $\pi^2$ with action probabilities as shown in Figure \ref{fig:tvd_vs_sqrt_shannon}. $\pi^1$ stays constant, while $\pi^2$ changes gradually over timesteps. We see that even as $\pi^2$ changes over time, the $TVD(\pi^1,\pi^2)$ between $\pi^1$ and $\pi^2$ remains constant. This is because TVD only measures the largest difference between the two distributions, and does not consider the overall difference between them. On the other hand, the Jensen-Shannon distance (JSD) \citep{endres2003new}, which is the square root of the Jensen-Shannon divergence, does not have this problem as it is a smooth distance function. Furthermore, it satisfies the conditions for being a metric -- it is non-negative, symmetric, and it satisfies the triangle inequality. 

For continuous policies, as shown in Figure \ref{fig:wasserstein_vs_sqrt_shannon}, JSD exhibits similar trends to the Wasserstein distance. Since JSD is a reasonable metric for both continuous and discrete probability distributions, we will use it as the distance metric for all experiments and propose it as a suitable distance function for measuring policy diversity in MARL. 

We also summarise the properties of the various distance metrics in Table \ref{tab:distance_metrics}.

\begin{figure}[h]
    \centering
    \includegraphics[width=0.9\textwidth]{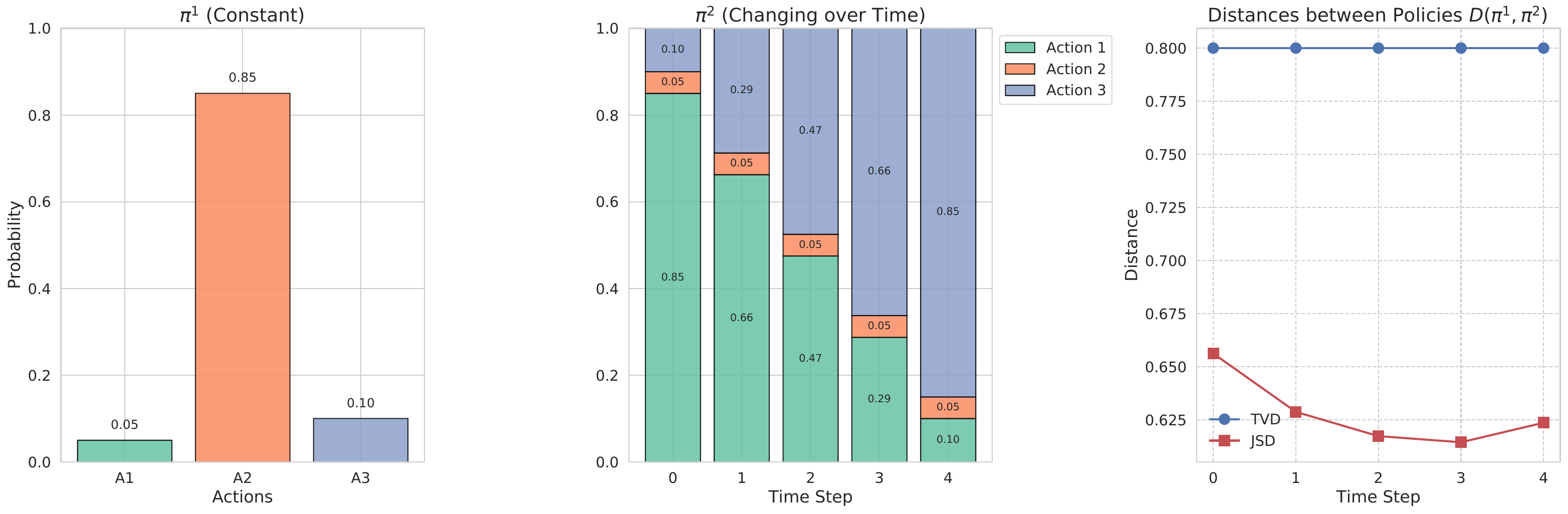}
    \caption{Gradual changes in $\pi^2$, result in gradual changes in the Jensen-Shannon distance (JSD), while the Total Variation Distance (TVD) can miss changes in action probabilities.}
    \label{fig:tvd_vs_sqrt_shannon}
\end{figure}

\begin{figure}[h]
    \centering
    \includegraphics[width=0.9\textwidth]{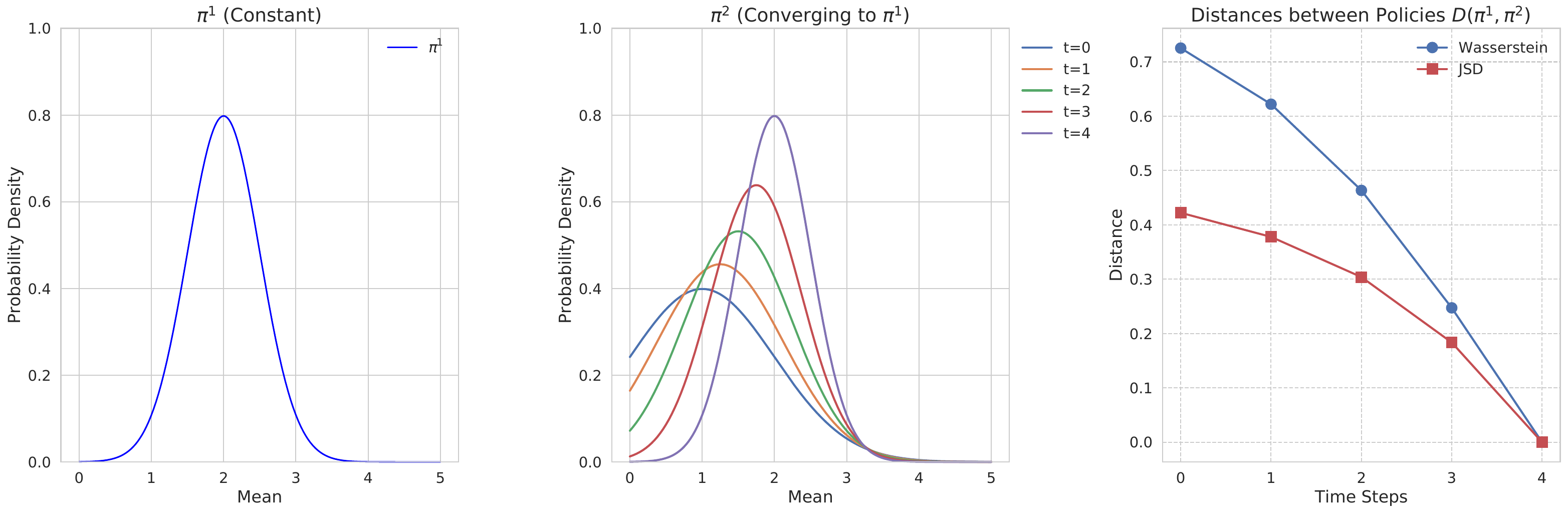}
    \caption{Jensen-Shannon distance (JSD) trends similarly to Wasserstein distance when we have continuous policies.}
    \label{fig:wasserstein_vs_sqrt_shannon}
\end{figure}

\clearpage %
\begin{table}[htbp]
    \centering
    \resizebox{1.1\textwidth}{!}{%
    \begin{tabular}{|c|c|c|c|c|}
        \hline
        \textbf{Method} & \textbf{Kinds of Actions} & \textbf{Metric} & \textbf{Smooth} & \textbf{Formula} \\
        \hline
        \text{Wasserstein Distance \citep{vaserstein1969markov}} & \text{Continuous*} & \text{Metric} & \text{Yes} & $W(p, q) = \left( \inf_{\gamma \in \Gamma(p, q)} \int_{\mathbb{R} \times \mathbb{R}} |x-y| \, d\gamma(x,y) \right)^{1/p}$ \\
        \hline
        \text{Total Variation Distance} & \text{Discrete} & \text{Metric} & \text{No} & $TV(p, q) = \frac{1}{2} \sum_{x} |p(x) - q(x)|$ \\
        \hline
        \text{Jensen-Shannon Divergence \citep{lin1991divergence}} & \text{Both} & \text{Divergence} & \text{Yes} & $JSD(p \parallel q) = \frac{1}{2} D_{KL}(p \parallel m) + \frac{1}{2} D_{KL}(q \parallel m), \, m = \frac{1}{2}(p + q)$ \\
        \hline
        \text{Jensen-Shannon Distance \citep{endres2003new}} & \text{Both} & \text{Metric} & \text{Yes} & $\sqrt{JSD(p \parallel q)}$ \\
        \hline
    \end{tabular}%
    }
    \caption{Measures for Policy Diversity}
    \label{tab:distance_metrics}
\end{table}

\clearpage %

\section{Specialisation and Synchronisation Games}\label{appen:spec_game}

To study the challenges of specialisation and coordination in an isolated setting, we introduce the Specialisation and Synchronisation Games, drawing inspiration from a version of the XOR game~\cite{fu2022revisiting}, VMAS's Dispersion~\cite{bettini2022vmas} and coordination and anti-coordination games in game theory~\citep{osborne1994course}. These environments encourage agents to take distinct actions (Specialisation) or take identical actions (Synchronisation). Despite their deceptively simple payoff structure, these games present substantial learning challenges -- non-stationary reward distributions driven by others’ adapting behaviours and in their temporal extension, the joint observation spaces grows exponentially with the number of agents. 
\subsection{Specialisation and Synchronisation Games Description} \label{sec:spec_game}

\textbf{Specialisation Game.} Agents are encouraged to choose \emph{distinct} actions. In the simplest setting, it is a two-player matrix game where each agent selects between two actions ($A$ or $B$). As shown in Figure~\ref{fig:spec-game}, agents receive a payoff of $1.0$ when their actions differ (creating two pure Nash equilibria on the anti-diagonal) and $0.5$ when they match. This structure satisfies Definition~\ref{def:specialised_env}, since optimal joint policies require complementary, non-identical strategies. There is also a symmetric mixed‐strategy equilibrium in which each agent plays $A$ and $B$ with probability $0.5$.

\textbf{Synchronisation Game.} Conversely, agents are encouraged to coordinate and choose \emph{identical} actions. The payoff matrix inverts the Specialisation game's structure, agents receive $1$ for matching actions and $0.5$ for differing ones. This creates two pure Nash equilibria along the diagonal of the payoff matrix (Figure~\ref{fig:sync-game}), and incentivises uniform behaviour across agents.

\textbf{$N$-Agent Extension.} Both games naturally scale to $n$ agents and $n$ possible actions. In Specialisation, unique actions receive a payoff of $1.0$, while selecting the same action receives payoffs of $\frac{1}{k}$, where $k$ is the number of agents choosing that action. In contrast, in Synchronisation, agents receive maximum payoffs ($1.0$) only when all actions match. For partial coordination, rewards follow a hyperbolic scale, $\frac{1}{n-k+1}$, encouraging agents to align their choices. Visualisations in Figure~\ref{fig:game-setup} and detailed reward profiles appear in Figure~\ref{fig:reward-profile-n5}.

\subsection{General-$n$ Payoff Definitions}
\label{app:general-n}

Both games generalise naturally to $n$ agents and $n$ possible actions. We show the reward profiles for $n=5$ agents in Figure \ref{fig:reward-profile-n5}. 

Let $
\mathbf{a} = (a^1, \dots, a^n)\in\{1,\dots,n\}^n$ 
and $k_{a} = \bigl|\{\,j : a^j = a\}\bigr|$
be the joint action profile and the count of agents choosing action \(a\), respectively.

\textbf{Temporal Extension.} To model sequential decision-making, we extend each normal‑form game into a repeated Markov game, where the state at time $t$ is the joint action at time $t-1$. At each step $t$ all agents observe $a^{t-1}$, choose $a_i^{t}$, and receive the original Specialisation or Synchronisation payoff. This repeated setup isolates how agents adapt based on past joint actions, exposing temporal patterns of specialisation and coordination.

\subsubsection{Specialisation Game}
The reward is formulated as follows:
\[
r_{\mathrm{spec}}^{\,i}(\mathbf{a})
= 
\begin{cases}
1.0, 
&\text{if }k_{a^{\,i}} = 1 
\quad\bigl(\text{unique action}\bigr);\\[6pt]
\displaystyle\frac{1}{k_{a^{\,i}}}, 
&\text{if }k_{a^{\,i}} > 1 
\quad\bigl(\text{shared action}\bigr).
\end{cases}
\]

\subsubsection{Synchronisation Game}
The reward is formulated as follows:
\[
r_{\mathrm{sync}}^{\,i}(\mathbf{a})
=\frac{1}{\,n - k_{a^{\,i}} + 1\,},
\]
so that \(r_{\mathrm{sync}}^{\,i}=1.0\) when \(k_{a^{\,i}}=n\) (all agents select the same action), and otherwise follows a hyperbolic scale encouraging consensus.

\begin{figure}[t]
    \centering
    \includegraphics[width=0.60\linewidth]{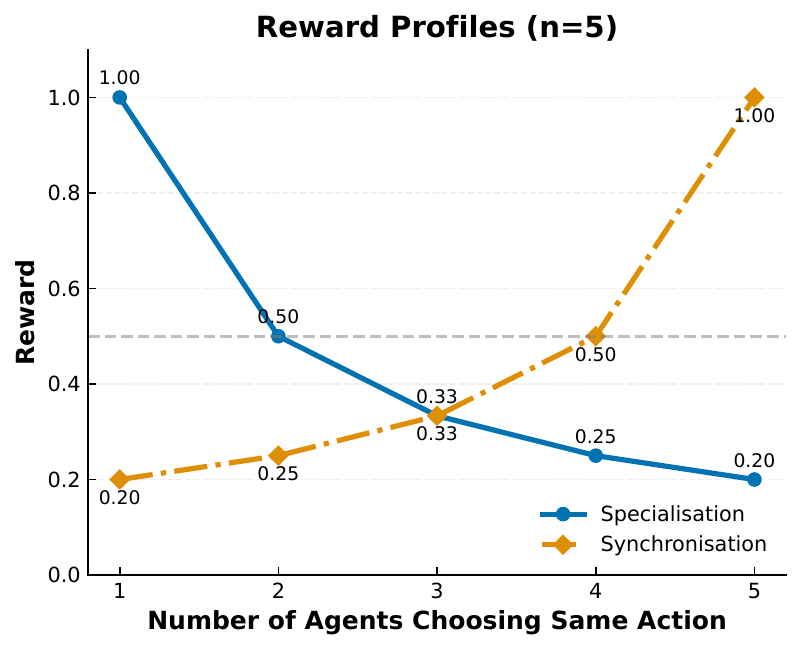}
    \caption{Reward profiles for the Specialisation (blue) and Synchronisation (orange) games with $n=5$ agents. In the Specialisation game, an agent’s payoff peaks when it selects a unique action, and then decays as when actions are shared. In the Synchronisation game, payoffs follow a hyperbolic scale $1/(n - k + 1)$, reaching maximum only under full consensus, thereby incentivising alignment.}
    \label{fig:reward-profile-n5}
\end{figure}

\subsection{Proof that FuPS cannot represent the optimal policy in the two-player Specialisation Game}\label{app:proof_fups_limitations}
\begin{theorem}
A stochastic, shared policy without agent IDs cannot learn the optimal behaviour for the two-player Specialisation Game. 
\end{theorem}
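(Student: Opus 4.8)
The plan is to exploit the enforced symmetry between the two agents. In the normal-form (non-temporal) two-player Specialisation Game there is no observation, so a shared policy without agent IDs conditions on the same (constant) input for both players. Under decentralised execution each agent then samples its own action \emph{independently} from one common Bernoulli law; writing $p = \Pr(\text{action}=A)$, the induced joint action distribution is the product measure determined by the single scalar $p$, with $\Pr(A,A)=p^2$, $\Pr(B,B)=(1-p)^2$, and $\Pr(\text{mismatch})=2p(1-p)$. The core claim is that this product structure cannot reproduce the optimal behaviour, which requires the agents to play \emph{different} actions.

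First I would reduce the problem to a one-dimensional optimisation. Using the payoffs ($0.5$ when actions match, $1.0$ when they differ), the expected team reward as a function of $p$ is
\[
J(p) = 0.5\bigl(p^2+(1-p)^2\bigr) + 1.0\cdot 2p(1-p) = -p^2 + p + 0.5 .
\]
Maximising over $p\in[0,1]$ gives $J'(p)=1-2p=0$, hence $p^\star = \tfrac12$ and $J(p^\star)=\tfrac34$. Equivalently, the mismatch probability $2p(1-p)$ attains its maximum $\tfrac12$ at $p=\tfrac12$, so two i.i.d.\ identical draws disagree with probability at most one half; note that $p^\star=\tfrac12$ recovers exactly the symmetric mixed-strategy equilibrium, so $\tfrac34$ is the best value a shared policy can possibly realise.

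Next I would compare against the optimum. A pure off-diagonal equilibrium, e.g.\ one agent deterministically plays $A$ and the other $B$, always yields a mismatch and hence expected reward $1.0$ (the optimal joint policy guaranteed by Definition~\ref{def:specialised_env}). Since $\tfrac34 < 1$, no shared IDless policy attains the optimal value, which establishes the theorem. I would also remark that a \emph{deterministic} shared policy fares no better, as it forces both agents onto the diagonal for reward $0.5$, so the stochastic case analysed above is already the most favourable.

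The main obstacle, and the step I would argue most carefully, is justifying that ``shared, without agent IDs'' genuinely forces the product-of-identical-marginals structure rather than merely identical marginals that could somehow be anti-correlated. I would make explicit that breaking the symmetry — placing mass on $(A,B)$ but not $(B,A)$ — requires a signal that distinguishes the two agents, which is precisely the agent ID that has been removed; with no such signal and independent action sampling, the joint law is invariant under swapping the players, forcing $\Pr(A,B)=\Pr(B,A)$ and capping the off-diagonal mass at $\tfrac12$. Everything else is the elementary scalar optimisation above.
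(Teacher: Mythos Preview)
Your proof is correct and follows essentially the same approach as the paper: parameterise the shared policy by a single Bernoulli parameter, compute the expected return as $-p^2+p+0.5$, and observe that its maximum value $0.75$ falls strictly below the optimal return of $1$. Your additional justification of the product-of-identical-marginals structure and the remark on the deterministic case are welcome extras that the paper's proof leaves implicit.
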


\begin{proof}\label{proof:fups_failure}
Let $\pi$ be a shared policy for both agents, and let $\alpha = \mathbb{P}(a_i = 0)$ represent the probability of any agent choosing action 0. The expected return of $\pi$ for each agent is:

\begin{align}
E[R(\pi)] &= \mathbb{P}(a_0 = 0, a_1 = 0) \cdot 0.5 + \mathbb{P}(a_0 = 0, a_1 = 1) \cdot 1 \\
&\quad + \mathbb{P}(a_0 = 1, a_1 = 0) \cdot 1 + \mathbb{P}(a_0 = 1, a_1 = 1) \cdot 0.5 \\ 
&= 0.5\alpha^2 + 2\alpha(1-\alpha) + 0.5(1-\alpha)^2 \\
&= -\alpha^2 + \alpha + 0.5 \\
&= -(\alpha - 0.5)^2 + 0.75 \label{eq:spec_game}
\end{align}
Thus, \(E[R(\pi)] \leq 0.75 < 1\) for all \(\alpha \in [0,1]\), with the maximum at \(\alpha = 0.5\). Therefore, a shared policy cannot achieve the optimal return of 1, confirming the need for specialised behaviour to optimise rewards.
\end{proof}

\subsection{Measuring Agent Gradient Conflict}\label{appen:gradient_metric_definitions}

We measure \emph{gradient conflicts}, via the cosine similarity between agents’ gradients
$\cos\bigl(g_t^{(i)},g_t^{(j)}\bigr)
=\frac{\langle g_t^{(i)},g_t^{(j)}\rangle}{\|g_t^{(i)}\|\|g_t^{(j)}\|}
$, where $g_t^{(i)}=\nabla_\theta\mathcal L^{(i)}(\theta_t)$.
\subsection{Empirical Results in N-player Specialisation and Synchronisation Normal-Form Game }\label{app:normal_form}

To assess this limitations of FuPS and NoPS in practice, we compare three REINFORCE~\cite{williams1992simple} 
variants in both games with $n = 2, 4, 8, 16, 32$ agents:
NoPS (No Parameter Sharing), 
FuPS (Fully Parameter Sharing), 
and FuPS+ID (FuPS with one-hot agent IDs).
All policies use single-layer neural networks with controlled parameter counts
(see Appendix \ref{append:hyperparams} for details).

\begin{figure}[h]
    \centering
    \includegraphics[width=\linewidth]{./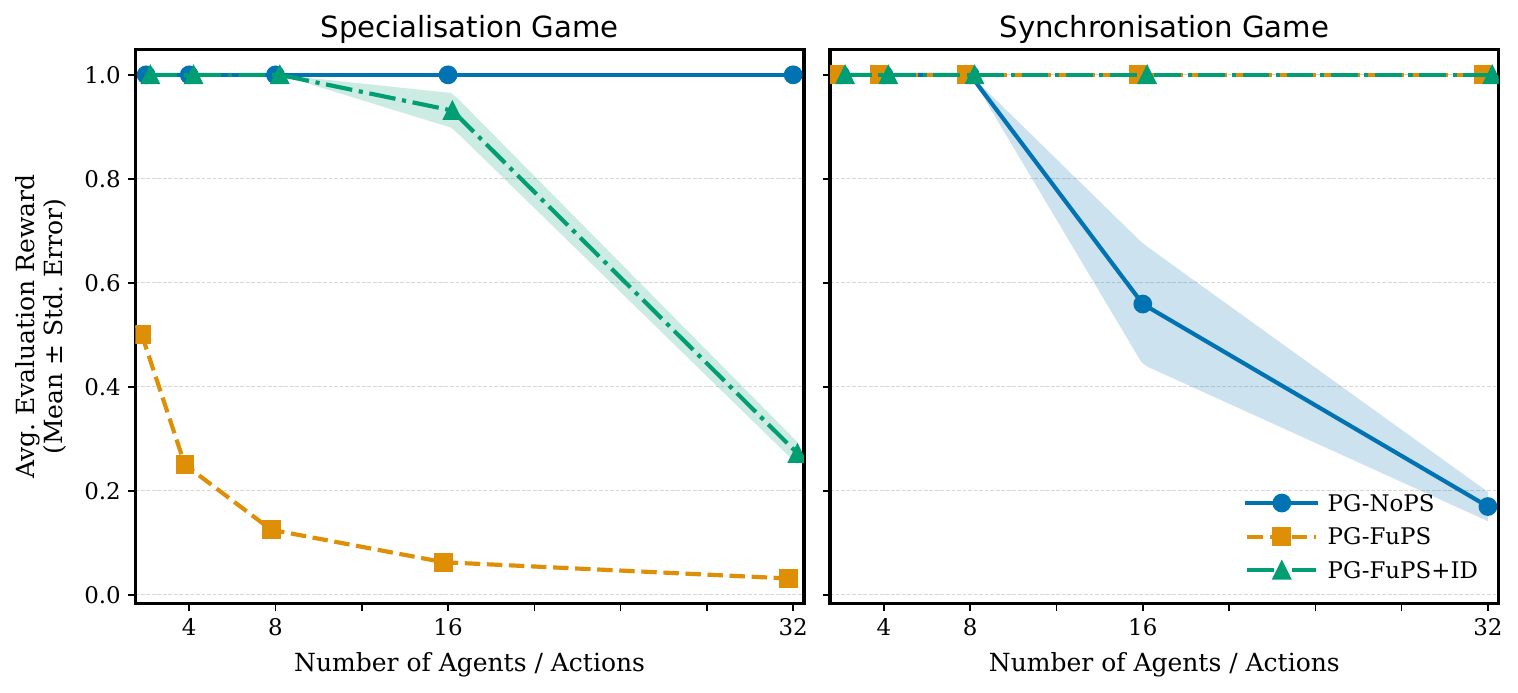}
    \caption{
        Mean evaluation reward (\textbf{mean ± standard error}) as a function of the number of agents/actions in the \textbf{Specialisation} (left) and \textbf{Synchronisation} (right) games. In the Specialisation game, vanilla policy gradients (PG, i.e. REINFORCE) with FuPS collapse as the team grows, whereas our identity-aware variant (PG-FuPS+ID) retains near-optimal performance. In the Synchronisation game, PG-NoPS performs well at small scales but degrades with more agents, while both PG-FuPS and PG-FuPS+ID remain at optimal reward across all scales.
    }
    \label{fig:specialisation-synchronisation-normal-form-results}
\end{figure}

\clearpage
\section{HyperMARL Details}
\subsection{HyperMARL Pseudocode}\label{subsec:pseudocode}
\definecolor{HyperMarlBlue}{RGB}{70, 130, 180}

In Algorithm \ref{alg:hypermarl}, we present the pseudocode for HyperMARL, with HyperMARL-specific steps highlighted in \textcolor{HyperMarlBlue}{blue}. HyperMARL leverages hypernetworks to dynamically generate the parameters of both actor and critic networks. The weights of the hypernetworks and the agent embeddings are automatically updated through automatic differentiation (autograd) based on the computed loss. Additionally, Figure \ref{fig:hypermarl} provides a visual representation of the HyperMARL architecture.
\begin{algorithm}[t]
   \caption{HyperMARL}
   \label{alg:hypermarl}
\begin{algorithmic}[1]
   \STATE {\bfseries Input:} Number of agents $n$, number of training iterations $K$, MARL algorithm parameters (e.g., MAPPO-specific hyperparameters)
   \STATE {\bfseries Initialise:}
   \STATE \quad \textcolor{HyperMarlBlue}{Hypernetwork parameters $\psi, \varphi$} \COMMENT{Ensure $\theta^i$ and $\phi^i$ follow standard initialization schemes, e.g., orthogonal}
   \STATE \quad \textcolor{HyperMarlBlue}{Agent embeddings $\{e^i\}_{i=1}^n$} \COMMENT{One-hot or orthogonally initialised learnable parameters}
   \STATE {\bfseries Output:} Optimized joint policy $\boldsymbol{\pi}$
   
   \FOR{each training iteration $k = 0, 1, \dots, K-1$}
       \FOR{each agent $i = 1, \dots, n$} 
       \STATE \textcolor{HyperMarlBlue}{$\theta^i \gets h_{\psi}^{\pi}(e^i)$} \COMMENT{Policy parameters}
       \STATE \textcolor{HyperMarlBlue}{$\phi^i \gets h_{\varphi}^{V}(e^i)$} \COMMENT{Critic parameters}
       \ENDFOR
       \STATE Interact with environment using $\{\pi_{\textcolor{HyperMarlBlue}{\theta^i}}\}_{i=1}^n$ to collect trajectories $\mathcal{D}$
       \STATE Compute shared loss $\mathcal{L}$ from $\mathcal{D}$, using $\{V_{\textcolor{HyperMarlBlue}{\phi^i}}\}_{i=1}^n$ \COMMENT{Standard RL loss function}
       \STATE \textcolor{HyperMarlBlue}{Update $\psi$, $\varphi$, and $\boldsymbol{e}$ by minimizing $\mathcal{L}$} \COMMENT{Optimise parameters.}
   \ENDFOR
   \STATE {\bfseries Return} $\boldsymbol{\pi} = (\pi^{1}, \dots, \pi^{n})$
\end{algorithmic}
\end{algorithm}

\subsection{Variance of the HyperMARL Gradient Estimator}
\label{app:variance_derivation}

\paragraph{Unbiased estimator.}
Following from Equation~\ref{eq:hmarl_split}, we can write the unbiased estimator for HyperMARLs gradient as follows:
\[
  \widehat g_{\mathrm{HM}}
  =
  \sum_{i=1}^{I}
    \underbrace{\nabla_{\!\psi}h_{\psi}^{\pi}(e^{i})}_{\mathbf J_i}
    \;\cdot\;
    \underbrace{\Bigl(
      \frac{1}{B}\sum_{b=1}^{B}\sum_{t=0}^{T-1}
        A(\mathbf h_t^{(b)},\mathbf a_t^{(b)})\,
        \nabla_{\theta^{i}}\log\pi_{\theta^{i}}(a_t^{i,(b)}\!\mid h_t^{i,(b)})
    \Bigr)}_{\widehat Z_i}
  =\sum_{i=1}^{I}\mathbf J_i\,\widehat Z_i,
  \tag{HM'}
\]
where \(B\) trajectories \(\{\tau^{(b)}\}_{b=1}^B\) are sampled i.i.d.\ and
\(\widehat Z_i\) is the empirical analogue of the observation-conditioned factor.

\paragraph{Assumptions.} (A1) trajectories are i.i.d.; (A2) all second
moments are finite; (A3) \(\psi,\theta,e^i\) are fixed during the
backward pass.

\paragraph{Variance expansion.}

Since each \(\mathbf J_i\) is deterministic under (A3), we may factor them outside the covariance:

\begin{align}
\Var\bigl(\widehat g_{\text{\tiny HM}}\bigr)
&=\Cov\!\Bigl(\sum_{i}\mathbf J_i\widehat Z_i,\;\sum_{j}\mathbf J_j\widehat Z_j\Bigr)
  \tag*{(by def.\ \(\Var(X)=\Cov(X,X)\))}\\
&=\sum_{i,j}\Cov\bigl(\mathbf J_i\widehat Z_i,\;\mathbf J_j\widehat Z_j\bigr)
  \tag*{(bilinearity of \(\Cov\))}\\
&=\sum_{i,j}\mathbf J_i\;\Cov(\widehat Z_i,\widehat Z_j)\;\mathbf J_j^{\!\top}
  \tag*{(pull deterministic matrices out of \(\Cov\))}\\
\label{eq:variance_split_detail}
\end{align}

Equation~\eqref{eq:variance_split_detail} makes explicit that all trajectory‐induced covariance is captured \(\Cov(\widehat Z_i,\widehat Z_j)\), while the agent-conditioned Jacobians \(\mathbf J_i\) remain trajectory noise-free.  

\textbf{Mini-batch update and covariance.}~\label{app:gradient_details} Let $\widehat Z_i$ be the unbiased mini-batch estimate of $Z_i$
and $\widehat g_{\text{\tiny HM}}=\sum_i\mathbf J_i\widehat Z_i$ the
stochastic update.  Because every $\mathbf J_i$ is deterministic (wrt. to mini-batch),

\begin{equation}
\label{eq:variance_split}
\Var\!\bigl(\widehat g_{\text{\tiny HM}}\bigr)
   =\sum_{i,j}\mathbf J_i\;
     \Cov(\widehat Z_i,\widehat Z_j)\;
     \mathbf J_j^{\!\top},
\end{equation}
(derivation in Appendix~\ref{app:variance_derivation}). Equation~\eqref{eq:variance_split} shows that HyperMARL first averages noise \emph{within each agent} (\(\widehat Z_i\)) and only then applies \(\mathbf J_i\).  FuPS+ID, by contrast, updates the shared weights \(\theta\) with every raw sample \(A\,\nabla_\theta\log\pi_\theta[h,\mathrm{id}]\), leaving observation noise and agent ID entangled and making it susceptible to gradient interference \citep{christianos2021scaling,JMLR:v25:23-0488}.

\subsection{Scalability and Parameter Efficiency} \label{sec:scaling}
Hypernetworks generate weights for the target network, which can lead to high-dimensional outputs and many parameters for deep target networks. This challenge is amplified in MLP-based hypernetworks, which include additional hidden layers. Figure \ref{fig:scale} shows scaling trends:
\begin{itemize}
    \item \textbf{NoPS} and \textbf{linear hypernetworks}: Parameter count grows linearly with the number of agents.
    \item \textbf{FuPS}: More efficient, as growth depends on one-hot vector size. 
    \item \textbf{MLP hypernetworks}: Scale better with larger populations, since they only require embeddings of fixed size for each new agent.
\end{itemize}

To reduce parameter count, techniques like shared hypernetworks, chunked hypernetworks \citep{Oswald2020Continual,pmlr-v238-chauhan24a}, or producing low-rank weight approximations, can be used. While naive implementations are parameter-intensive, this might be less critical in RL and MARL which commonly have smaller actor-critic networks. Moreover, HyperMARL’s near-constant scaling with agents suggests strong potential for large-scale MARL applications.

\begin{figure}[h]
    \centering
    \includegraphics[width=0.6\linewidth]{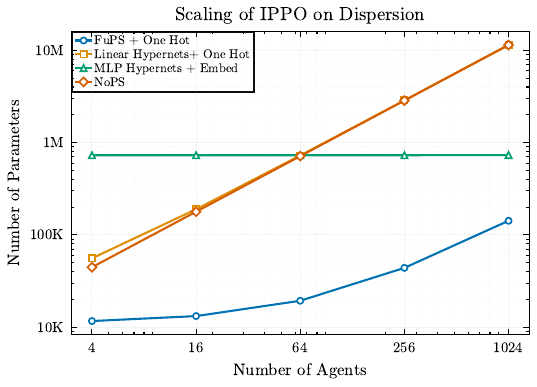}
     \caption{Parameter scaling for IPPO variants with increasing agents (4 to 1024). MLP Hypernetworks scale nearly constantly, while NoPS, Linear Hypernetworks, and FuPS+One-Hot grow linearly. Log scale on both axes.}
    \label{fig:scale}
\end{figure}

\begin{figure}[h]
    \centering
    \includegraphics[width=0.65\linewidth]{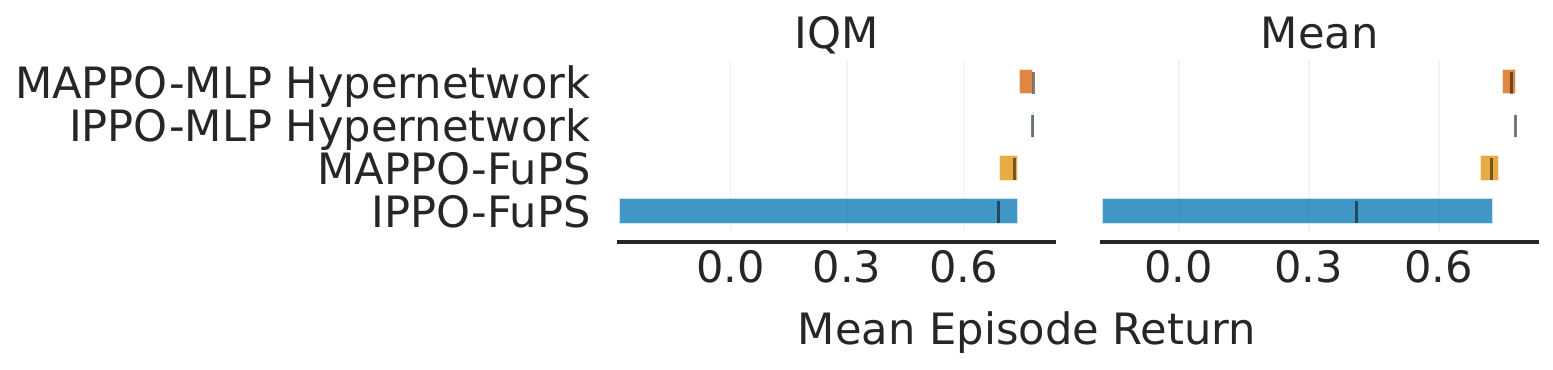}
    \caption{Dispersion performance with four agents. FuPS variants match HyperMARL in parameter count but still underperform.} 
    \label{fig:control_params}
\end{figure}

To isolate the effects of parameter count, we scaled the FuPS networks (Figure \ref{fig:control_params}) to match the number of trainable parameters in HyperMARL.  Despite generating 10x smaller networks, HyperMARL consistently outperforms FuPS variants, showing its advantages extend beyond parameter count.

\subsection{Speed and Memory Usage}\label{sec:runtime}

We examine the computational efficiency of HyperMARL compared to NoPS and FuPS by measuring inference speed (Figure~\ref{fig:forward_speed}) and GPU memory usage (Figure~\ref{fig:gpu_memory}) as we scale the number of agents. The benchmarks were conducted using JAX on a single NVIDIA GPU (T4) with a recurrent (GRU-based) policy architecture. All experiments used fixed network sizes (64-dimensional embeddings and hidden layers) with a batch size of 128 and 64 parallel environments, allowing us to isolate the effects of varying agent count. Each measurement represents the average of 100 forward passes per configuration, with operations repeated across 10 independent trials.

\begin{figure}[h]
    \centering
    \begin{subfigure}[b]{0.48\textwidth}
        \centering
        \includegraphics[width=\textwidth]{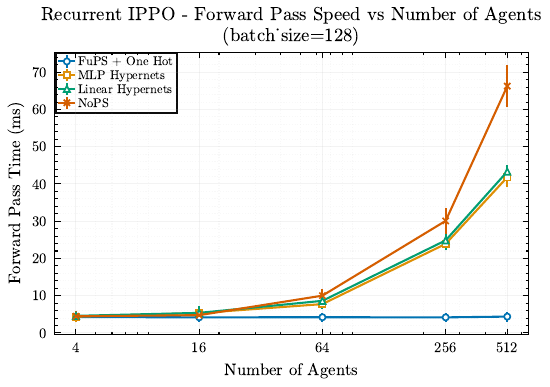}
        \caption{Forward pass inference time}
        \label{fig:forward_speed}
    \end{subfigure}
    \hfill
    \begin{subfigure}[b]{0.48\textwidth}
        \centering
        \includegraphics[width=\textwidth]{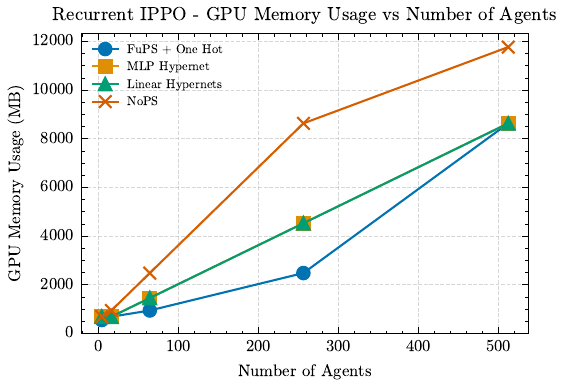}
        \caption{GPU memory usage}
        \label{fig:gpu_memory}
    \end{subfigure}
    \caption{Computational efficiency scaling with number of agents. HyperMARL offers a balance between NoPS and FuPS. Notably, in real-world deployments, NoPS incurs additional data transfer and synchronisation costs not reflected here, further widening the efficiency gap.}
    \label{fig:computational_scaling}
\end{figure}

The results demonstrate that HyperMARL offers a balance between the extremes represented by NoPS and FuPS. In practice, NoPS incurs additional data transfer and update costs, widening the efficiency gap.

\subsection{Sensitivity Analysis}\label{app:sensitivity}

We run a sensitivity analysis on the HyperMARL's hyperparameters in the 20-agent SMAX scenario with recurrent MAPPO, a challenging setting for optimisation. The results are shown Table~\ref{tab:sensitivity_analysis}.

\paragraph{Findings.}
\begin{itemize}
    \item Similarly to FuPS, and in general, deep RL methods like PPO, the learning rate is an important parameter for performance.
    \item \textbf{Agent embedding size is an important hyperparameter.} In these results, a smaller embedding size outperformed a larger one. This could be due to the homogeneous nature of some SMAX tasks, where with smaller embeddings, it could be easier to learn similar agent embeddings and hence similar behaviours. This suggests the embedding size could correspond to the task's diversity requirements.
    \item Other parameters, such as the width, have limited impact beyond a modest size.
\end{itemize}

\begin{table*}[h]
\centering
\caption{IQM and 95\% CI of mean win rate across 5 seeds for the 20-agent SMAX scenario with recurrent MAPPO. \textbf{Bold} indicates the highest IQM score; * indicates scores whose 95\% confidence intervals overlap with the highest score.}
\label{tab:sensitivity_analysis}
\resizebox{\textwidth}{!}{%
\begin{tabular}{@{}llrccc@{}}
\toprule
\textbf{Method} & \textbf{Embedding Dim} & \textbf{Hidden Dims} & \textbf{LR 0.0001} & \textbf{LR 0.0003} & \textbf{LR 0.0005} \\ \midrule
FuPS & N/A & N/A & 0.1067 (0.0513, 0.2128) & 0.3053 (0.2041, 0.4769) & *0.4213 (0.3623, 0.4800) \\ \midrule
\multirow{4}{*}{HyperMARL} & \multirow{4}{*}{4} & 16 & 0.0799 (0.0250, 0.1190) & 0.2946 (0.1143, 0.3778) & \textbf{0.4455} (0.3443, 0.5692) \\
 & & 64 & 0.1233 (0.0750, 0.2162) & 0.3338 (0.2692, 0.4286) & *0.3765 (0.3281, 0.4107) \\
 & & 128 & 0.1147 (0.0270, 0.1842) & *0.3787 (0.3509, 0.4603) & 0.3327 (0.2931, 0.3548) \\
 & & 64, 64 & 0.1781 (0.1628, 0.2075) & *0.3498 (0.2807, 0.4500) & 0.1944 (0.1268, 0.3077) \\ \midrule
\multirow{4}{*}{HyperMARL} & \multirow{4}{*}{64} & 16 & 0.1109 (0.0256, 0.2093) & 0.3198 (0.2353, 0.3934) & *0.3483 (0.2581, 0.4308) \\
 & & 64 & 0.1360 (0.0303, 0.2821) & 0.1191 (0.0517, 0.1964) & 0.1155 (0.0794, 0.1500) \\
 & & 128 & 0.1193 (0.0750, 0.1579) & 0.1036 (0.0517, 0.1594) & 0.1106 (0.0455, 0.1587) \\
 & & 64, 64 & 0.1353 (0.0541, 0.1818) & 0.1283 (0.0526, 0.2258) & 0.0860 (0.0345, 0.1343) \\ \bottomrule
\end{tabular}%
}
\end{table*}
\clearpage

\section{Experiment Details}
\subsection{Environments}\label{subsec:environments}
\begin{figure}[h!]
    \centering
    \begin{subfigure}[b]{0.22\textwidth}
        \centering
        \includegraphics[width=0.9\textwidth]{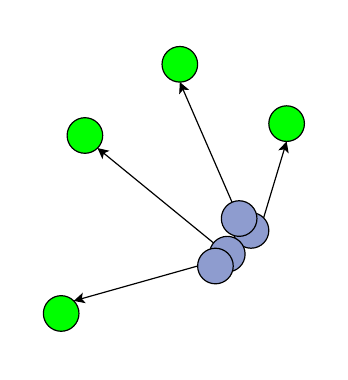}
        \caption{\textit{Dispersion}.}
    \end{subfigure}
    \hfill
    \begin{subfigure}[b]{0.22\textwidth}
        \centering
        \includegraphics[width=\textwidth]{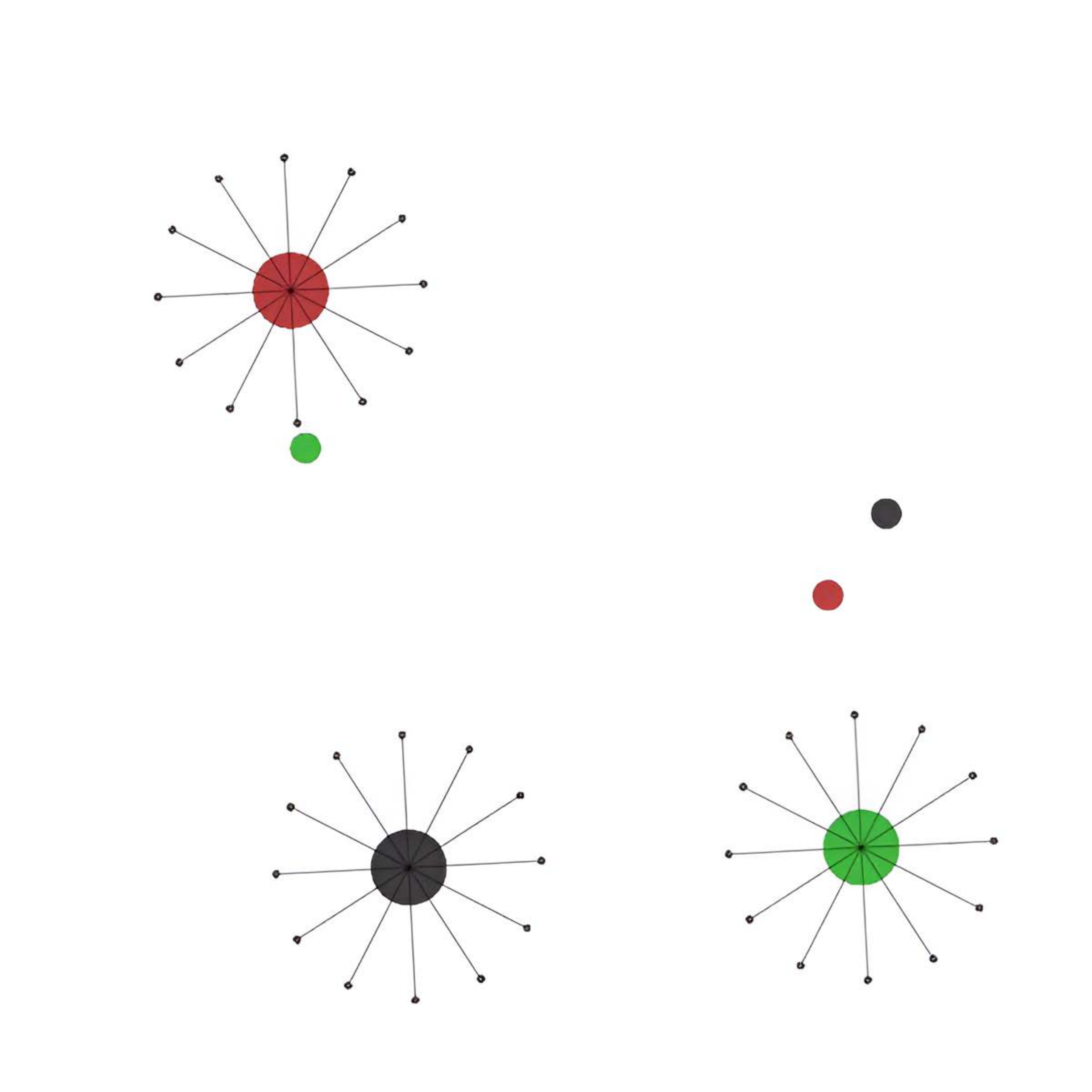}
        \caption{\textit{Navigation}.}
    \end{subfigure}
    \hfill
    \begin{subfigure}[b]{0.22\textwidth}
        \centering
        \includegraphics[width=\textwidth]{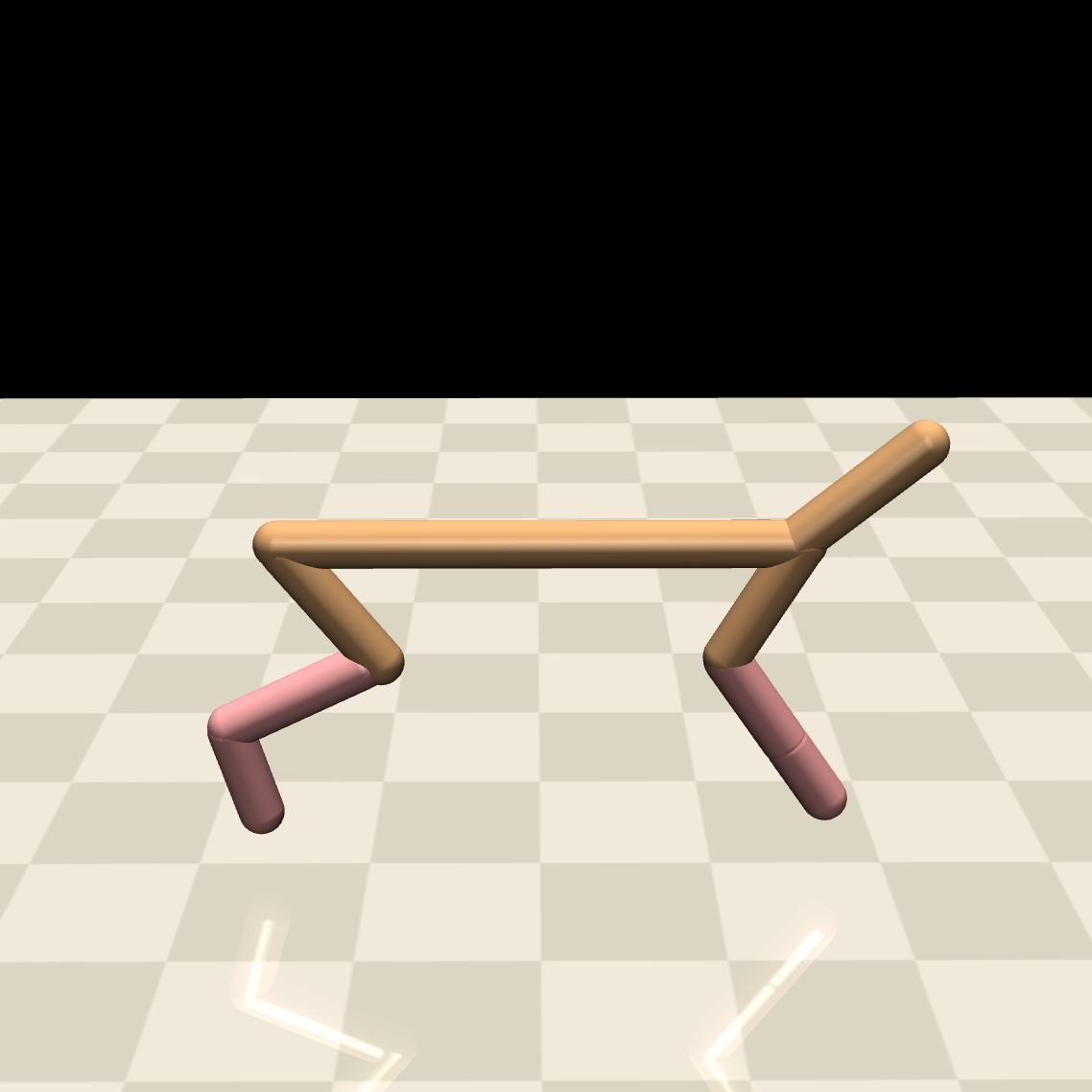}
        \caption{\textit{MAMuJoCo}.}
    \end{subfigure}
    \hfill
    \begin{subfigure}[b]{0.22\textwidth}
        \centering
        \includegraphics[width=\textwidth]{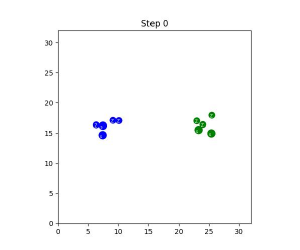}
        \caption{\textit{SMAX}.}
    \end{subfigure}
    \hfill
    \begin{subfigure}[b]{0.22\textwidth}
        \centering
        \includegraphics[width=\textwidth]{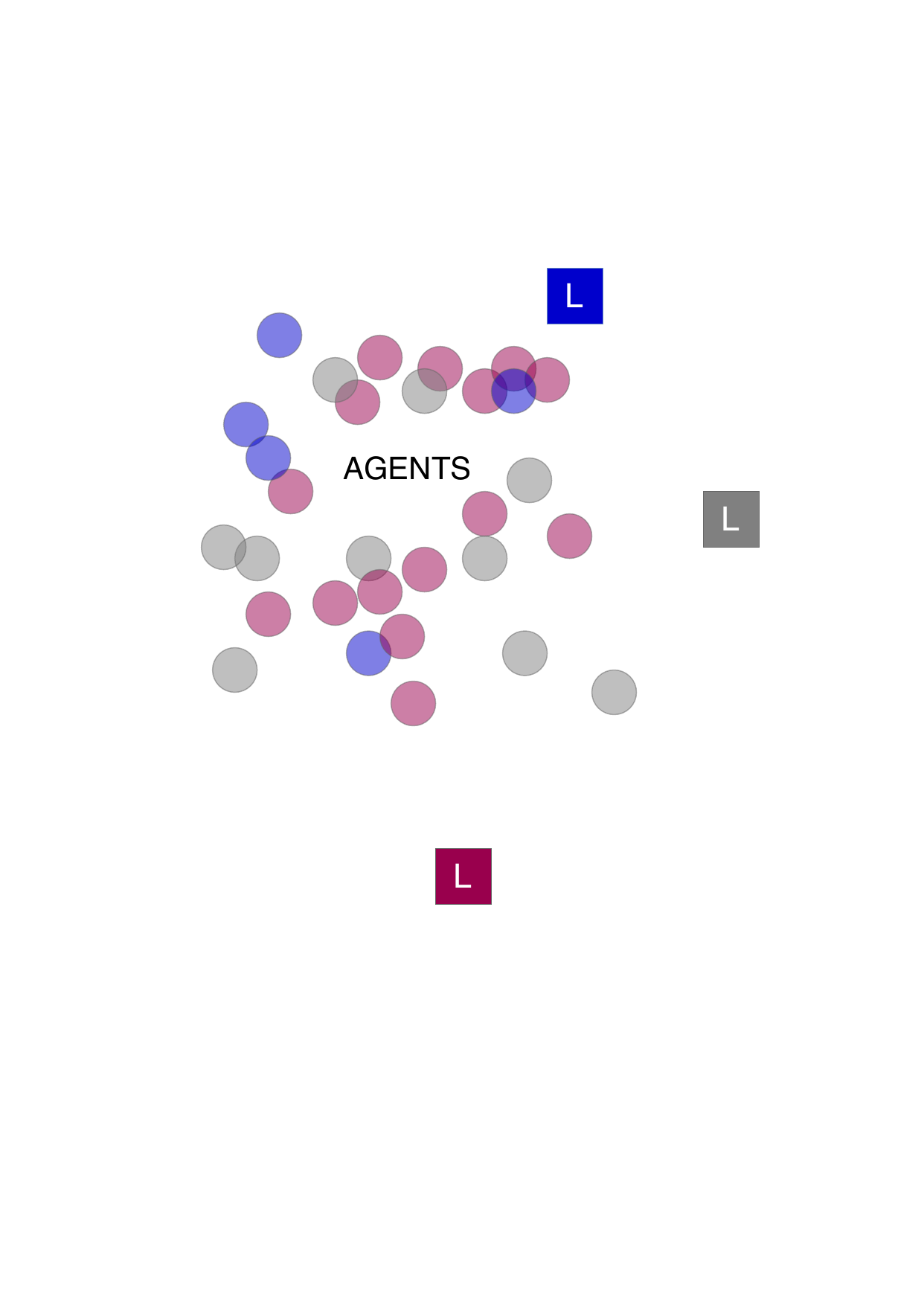}
        \caption{\textit{BPS}.}
    \end{subfigure}
    \caption{Multi-Agent environments used in our experiments.}
    \label{fig:marl_envs}
\end{figure}

\textbf{Dispersion (VMAS)~\citep{bettini2022vmas}}: A 2D environment where four agents collect unique food particles.  This task requires specialised \textit{heterogeneous} behaviours and resembles the Specialisation Game from Section \ref{sec:spec_game}. 

\textbf{Navigation (VMAS)~\citep{bettini2022vmas}}: Agents navigate in a 2D space to assigned goals, receiving dense rewards based on proximity. Configurations include shared goals (\textit{homogeneous}), unique goals (\textit{heterogeneous}), and \textit{mixed} goals, where some agents share goals while others have unique ones.

\textbf{Multi-Agent MuJoCo (MAMuJoCo)~\citep{peng2021facmac}}: A multi-agent extension of MuJoCo, where robot body parts (e.g., a cheetah's legs) are modelled as different agents. Agents coordinate to perform efficient motion, requiring \textit{heterogeneous} policies~\citep{JMLR:v25:23-0488}.

\textbf{SMAX (JaxMARL)~\citep{rutherford2024jaxmarl}}: Discrete action tasks with 2 to 20 agents on SMACv1- and SMACv2-style maps. FuPS baselines have been shown optimal for these settings~\cite {yu2022surprising,fu2022revisiting} indicating \textit{homogeneous} behaviour is preferred here. 

\textbf{Blind-Particle Spread (BPS)~\citep{christianos2021scaling}} Blind-Particle Spread (BPS) is a modified variant of the Multi-Agent Particle Environment (MPE) Simple Spread task~\citep{lowe2017multi} with 15 to 30 agents. The environment contains landmarks of different colours, and each agent is assigned a colour. Agents are 'blind' as they cannot observe their own assigned colour or the colours of other agents. They must infer the correct landmark to navigate towards based only on their observations and rewards. The number of colours thus dictates the number of distinct roles or behaviours the team must learn to successfully complete the task.

\newcommand{\xmark}{\ding{55}}%

\begin{table}[t]
    \centering
    \caption{Baseline Methods Selection and Justification. Selected methods (\checkmark) were chosen based on their relevance to parameter sharing and specialisation/generalisation in MARL, while excluded methods (\xmark) did not align with our research objectives or had implementation constraints. \textbf{Our experimental design systematically addresses key questions on agent specialisation and homogeneity, therefore we selected baselines with demonstrated strong performance in their respective settings, ensuring fair and rigorous comparison.}}
     \label{tab:baseline_selection}
    \small
    \begin{tabular}{p{2.5cm}p{1.2cm}cp{8cm}} %
        \toprule
        \textbf{Method} & \textbf{Category} & \textbf{Selected} & \textbf{Justification \& Experimental Settings} \\
        \midrule
        \textbf{IPPO}~\citep{de2020independent}\\(NoPS, FuPS+ID) & NoPS/FuPS & \checkmark & Established MARL baseline implementing both independent (NoPS) and fully shared (FuPS+ID) policy configurations. \textit{Tested in:} Dispersion, Navigation, SMAX (two SMACv1 maps and two SMACv2 maps, with 10 and 20 agents). \\
        \addlinespace[0.5ex]
        
        \textbf{MAPPO}~\citep{yu2022surprising}\\(NoPS, FuPS+ID) & NoPS/FuPS & \checkmark & Strong baseline with centralized critics for both NoPS and FuPS+ID architectures. \textit{Tested in:} Dispersion, MAMuJoCo, SMAX (two SMACv1 maps and two SMACv2 maps, with 10 and 20 agents). \\
        \addlinespace[0.5ex]
        
        \textbf{DiCo}~\citep{bettini2024controlling} & Architectural Diversity & \checkmark & Provides comparison with a method employing preset diversity levels that balances shared and non-shared parameters. \textit{Tested in:} Dispersion and Navigation (as per original paper). Original hyperparameters used for $n=2$ agents; parameter sweeps conducted for $n>2$ to identify optimal diversity levels. \\
        \addlinespace[0.5ex]
        
        \textbf{HAPPO}~\citep{JMLR:v25:23-0488} & Sequential Updates & \checkmark & Enables comparison with a method designed for heterogeneous behaviours using sequential policy updates with agent-specific parameters. \textit{Tested in:} MAMuJoCo, selecting 4/6 scenarios from the original paper, including the challenging 17-agent humanoid task. Walker and Hopper variants were excluded as MAPPO and HAPPO performed similarly in these environments. \\
        \addlinespace[0.5ex]
        
        \textbf{Kaleidoscope}~\citep{li2024kaleidoscope} & Architectural Pruning & \checkmark & Implemented for off-policy evaluation using its MATD3 implementation with tuned MaMuJoCo hyperparameters. \textit{Tested in:} MAMuJoCo environments Ant-v2, HalfCheetah-v2, Walker2d-v2 (overlapping with our IPPO experiments), and Swimmer-v2-10x2 (highest agent count variant). Included to evaluate HyperMARL's competitiveness against a method with ensemble critics and diversity loss, in an off-policy setting. \\
        \addlinespace[0.5ex]

        \textbf{SePS}~\citep{christianos2021scaling} & Selective Parameter Sharing & \checkmark & Although this requires a pre-training phase, it is a strong baseline for parameter sharing approaches. \textit{Tested in:} Blind-Particle Spread environments, with 15 to 30 agents.  \\
        \addlinespace[0.5ex]
        
        \textbf{SEAC}~\citep{christianos2020shared} & Shared Experience & \xmark & Focuses primarily on experience sharing rather than parameter sharing architecture, falling outside our research scope. \\
        \addlinespace[0.5ex]
        
        \textbf{CDAS}~\citep{li2021celebrating} & Intrinsic Reward & \xmark & Only implemented for off-policy methods and has been demonstrated to underperform FuPS/NoPS architectures~\citep{fu2022revisiting}, making it less suitable for our primary on-policy comparisons. \\
        \addlinespace[0.5ex]
        
        \textbf{ROMA/RODE}~\citep{wang2020roma,wang2020rode} & Role-based & \xmark & Shows limited practical performance advantages in comparative studies~\citep{christianos2020shared}, suggesting other baselines provide more rigorous comparison points. \\
        \addlinespace[0.5ex]
        
        \textbf{SNP-PS}~\citep{kim2023parameter} & Architectural Pruning & \xmark & No publicly available implementation, preventing direct, reproducible comparison. \\
        \bottomrule
    \end{tabular}
\end{table}

\subsection{HyperMARL Architecture Details}\label{app:arch_details}
For the Dispersion and Navigation results (Sec.~\ref{results:dispersion}) we use feedforward architectures, where we use HyperMARL to generate both the actor and critic networks. For the MAPPO experiments in Section \ref{sec:mamujoco_results}, for fairness in comparisons with HAPPO and MAPPO, we maintain the centralised critic conditioned on the global state and only use HyperMARL to generate the weights of the actors. For the recurrent IPPO experiments in Section \ref{results:smax}, HyperMARL only generates the actor and critic feedforward weights, not the GRU weights.   

\subsubsection{Training and Evaluation}\label{subsec:train_and_eval} 
\begin{itemize}
    \item \textbf{Training:}
    \begin{itemize}
        \item For Dispersion (\ref{results:dispersion}), we run 10 seeds and train for 20 million timesteps.
        \item For Navigation (\ref{results:navigation}), SMAX (\ref{results:smax}), and MaMuJoCo (\ref{sec:mamujoco_results}), we run 5 seeds and train for 10 million timesteps, consistent with the baselines.
        \item For Blind-Particle Spread (BPS), we run 5 seeds and train for 20 million timesteps, consistent with baselines.
    \end{itemize}
    \item \textbf{Evaluation:}
    \begin{itemize}
        \item For Dispersion (\ref{results:dispersion}), evaluation is performed every 100k timesteps across 32 episodes.
        \item For Navigation (\ref{results:navigation}), following the baselines, evaluation is performed every 120k timesteps across 200 episodes.
        \item For SMAX (\ref{results:smax}), evaluation is performed every 500k timesteps across 32 episodes.
        \item For MaMuJoCo (\ref{sec:mamujoco_results}), following the baselines, evaluation is performed every 25 training episodes over 40 episodes.
    \end{itemize}
\end{itemize}

\subsubsection{Measuring Policy Diversity Details} \label{app:measure_policy_diversity}
We measure team diversity using the System Neural Diversity (SND) metric (Equation~\ref{eq:snd}~\cite{bettini2023system}, details Section \ref{app:beh_diversity}) with Jensen-Shannon distance. SND ranges from 0 (identical policies across all agents) to 1 (maximum diversity). We collect a dataset of observations from IPPO-NoPS and IPPO-FuPS policies checkpointed at 5 and 20 million training steps. Each policy is rolled out for 10,000 episodes, generating 16 million observations. We then sample 1 million observations from this dataset to calculate the SND for each method tested.

\clearpage

\section{Detailed Results} \label{append:detailed_results}

\subsection{Comparison with Kaleidoscope using Off-Policy -- MATD3}
\label{app:kaleidoscope_offpolicy}

Our comparison with Kaleidoscope~\citep{li2024kaleidoscope}, mentioned in Section~\ref{sec:experiments}, is conducted using off-policy methods due to its original design. Kaleidoscope incorporates intricate mechanisms (e.g., learnable masks, an ensemble of five critics, a specific diversity loss) and numerous specialised hyperparameters (e.g., for critic ensembling: `critic\_deque\_len`, `critic\_div\_coef`, `reset\_interval`; for mask and threshold parameters: `n\_masks`, `threshold\_init\_scale`, `threshold\_init\_bias`, `weighted\_masks`, `sparsity\_layer\_weights`, etc.). Porting this full complexity to an on-policy PPO backbone would constitute a significant research deviation rather than a direct benchmark of the established method.

Therefore, we utilised Kaleidoscope's original off-policy implementation to ensure a meaningful comparison. We adopted MATD3 as the algorithmic backbone for this evaluation, as it was the only publicly available Kaleidoscope variant with tuned hyperparameters for Multi-Agent MuJoCo (MaMuJoCo). The MaMuJoCo tasks were chosen for alignment with our primary on-policy (IPPO) results and Kaleidoscope's original evaluation, specifically: Ant-v2, HalfCheetah-v2, Walker2d-v2 (overlapping with our IPPO experiments), and Swimmer-v2-10x2 ( which represents the MaMuJoCo variant with the highest number of agents). Comparative results in Table~\ref{tab:iqm-mujoco-offpolicy} show that HyperMARL achieves competitive results with Kaleidoscope, while only using two critics (standard MATD3) and without additional diversity objectives.

\begin{table}[t] 
\small
\centering
\caption{\textit{Mean episode return in MAMuJoCo for off-policy MATD3 variants.} IQM of the mean episode returns with 95\% stratified bootstrap CI. \textbf{Bold} indicates the highest IQM score; * indicates scores whose confidence intervals overlap with the highest. Although Kaleidoscope employs an ensemble of five critics and an explicit diversity loss, HyperMARL (using a standard MATD3 setup with two critics) achieves competitive results without these additional mechanisms.}
\label{tab:iqm-mujoco-offpolicy}
\resizebox{0.92\textwidth}{!}{%
\begin{tabular}{lccc}
\toprule
\textbf{Environment}      & \textbf{Ind. Actors, Shared Critic}      & \textbf{Kaleidoscope}         & \textbf{HyperMARL}             \\
\midrule
Ant-v2                  & 5270.38 (4329.73, 5719.78)      & \textbf{6160.70} (5798.02, 6463.83)    & *5886.58 (5840.00, 5920.66)         \\
HalfCheetah-v2          & *6777.04 (3169.11, 8233.94)      & *6901.00 (3609.73, 8192.38)          & \textbf{7057.44} (3508.70, 8818.11) \\
Walker2d-v2             & *5771.87 (5144.84, 8103.34)      & *6664.32 (5408.95, 8828.11)          & \textbf{7057.68} (5976.50, 8166.09) \\
Swimmer-v2-10x2         & *453.74 (427.24, 487.86)         & *462.48 (444.22, 475.64)             & \textbf{465.91} (410.82, 475.77)    \\
\bottomrule
\end{tabular}
}
\end{table}

\subsection{Comparison with Selective Parameter Sharing (SePS) - A2C}
\label{app:seps}

We benchmark HyperMARL against Selective Parameter Sharing (SePS)~\citep{christianos2021scaling}, a method that shares weights among pre-determined groups of similar agents. SePS identifies these groups by clustering agent trajectories via an autoencoder during a pre-training phase. Following the original authors' setup, we use their provided source code and hyperparameters, evaluating all A2C~\citep{mnih2016asynchronous} variants on the Blind Particle Spread (BPS) benchmarks (v1-4), which scale up to 30 agents. For HyperMARL, we sweep the learning rates (1e-4, 3e-4) and hypernetwork hidden dimension (16, 64) per scenario.

As shown in Table~\ref{tab:bps_results}, HyperMARL demonstrates competitive performance against both NoPS and SePS in these challenging settings, which feature up to 30 agents and five distinct agent roles. Crucially, HyperMARL achieves this result without resorting to agent-specific weights (like NoPS) or requiring a pre-training and clustering pipeline (like SePS).

\begin{table}[h]
\centering
\caption{\textit{Mean episode reward in Blind Particle Spread for A2C variants.} We report the IQM of the final total reward (with 95\% confidence intervals) across 5 seeds on the BPS environments. \textbf{Bold} indicates the highest IQM score; * indicates scores whose 95\% confidence intervals overlap with the highest score. HyperMARL achieves results competitive with SePS without requiring a pre-training phase or explicit agent clustering.}
\label{tab:bps_results}
\resizebox{\textwidth}{!}{%
\begin{tabular}{lcccc}
\toprule
\textbf{Environment} & \textbf{NoPS} & \textbf{FuPS+ID} & \textbf{SePS} & \textbf{HyperMARL} \\
\midrule
BPS-1 (15 agents, 3 groups, 5–5–5) & -216.8 (-227.7, -197.8) & -228.2 (-247.5, -213.3) & *-201.8 (-221.4, -186.8) & \textbf{-190.8 (-220.0, -178.5)} \\
BPS-2 (30 agents, 3 groups, 10–10–10) & *-415.4 (-459.4, -366.3) & -429.7 (-507.1, -411.7) & *-407.1 (-453.7, -387.6) & \textbf{-397.8 (-423.7, -376.6)} \\
BPS-3 (30 agents, 5 groups, 6–6–6–6–6) & \textbf{-403.4 (-421.0, -381.0)} & -835.2 (-1445.6, -534.1) & *-422.1 (-466.2, -387.6) & *-417.7 (-448.2, -381.1) \\
BPS-4 (30 agents, 5 uneven groups, 2–2–2–15–9) & *-410.8 (-436.6, -368.3) & -780.5 (-1044.1, -593.8) & *-411.6 (-446.6, -346.7) & \textbf{-389.5 (-457.6, -366.0)} \\
\bottomrule
\end{tabular}
}
\end{table}

\subsection{Dispersion Detailed Results}

\subsubsection{Interval Estimates Dispersion}\label{append:dispersion_dynamics}
\begin{figure*}[ht]
    \centering

    \begin{subfigure}[b]{0.38\linewidth}
        \centering
        \includegraphics[width=\linewidth]{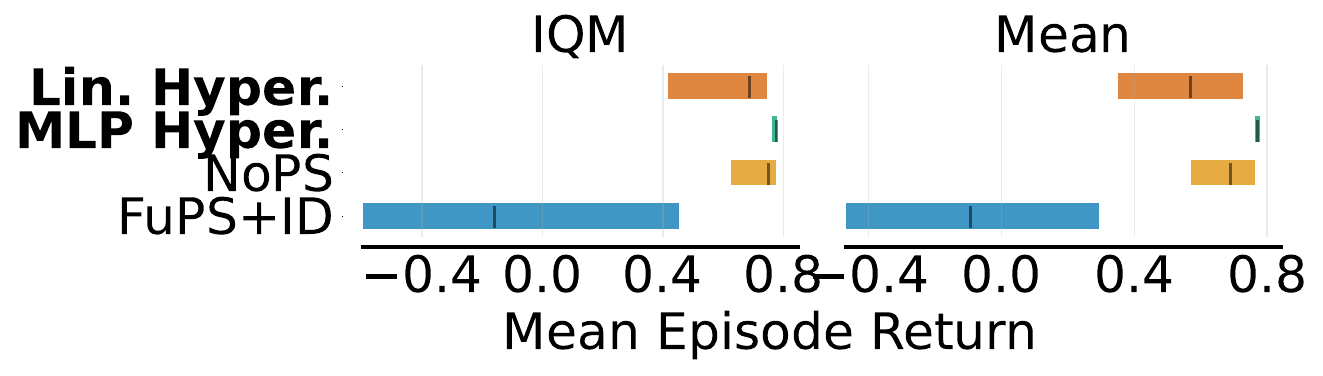}
        \caption{IPPO}
        \label{fig:ippo-interval}
    \end{subfigure}
    \hspace{0.05\linewidth} %
    \begin{subfigure}[b]{0.38\linewidth}
        \centering
        \includegraphics[width=\linewidth]{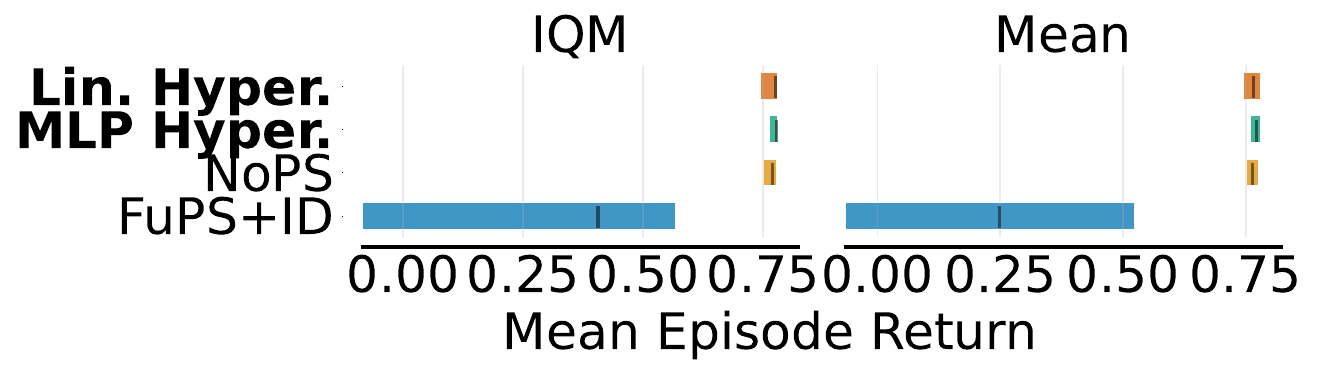}
        \caption{MAPPO}
        \label{fig:mappo-interval}
    \end{subfigure}

    \caption{\textit{Performance of IPPO and MAPPO on Dispersion after 20 million timesteps.} We show the Interquartile Mean (IQM) of the Mean Episode Return and the 95\% Stratified Bootstrap Confidence Intervals (CI) using~\cite{agarwal2021deep}. Hypernetworks achieve comparable performance to NoPS, while FuPS struggle with specialisation.}
    \label{fig:performance-comparison-interval}
\end{figure*}

\begin{figure}[h]
    \centering
    \includegraphics[width=0.45\linewidth]{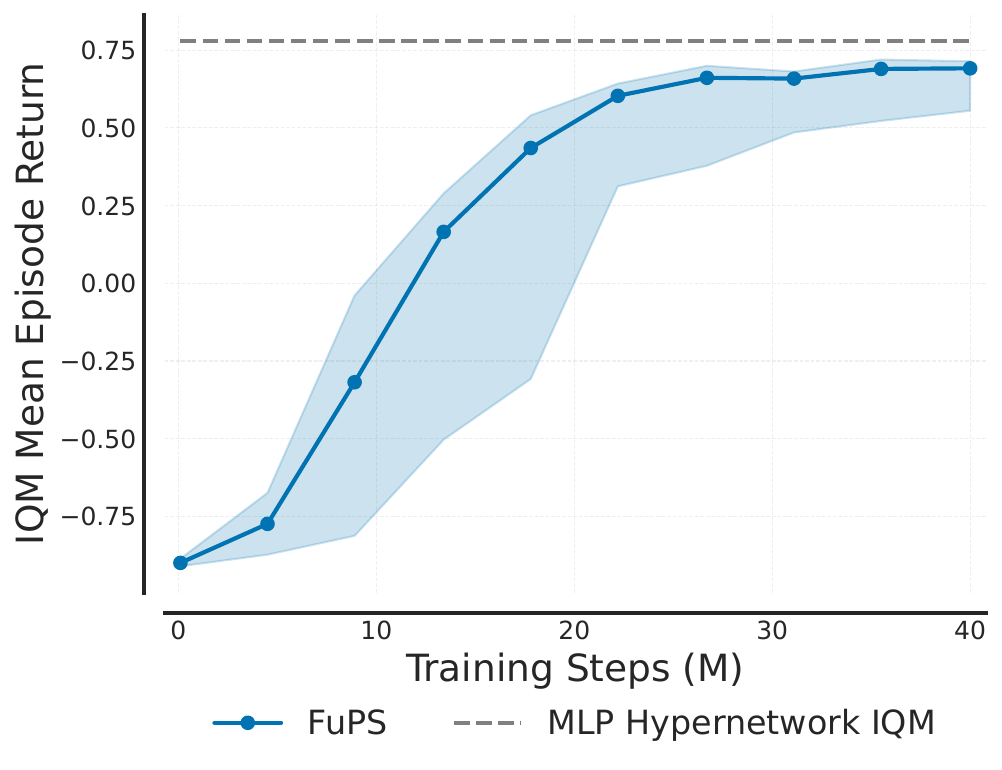}    
    \caption{We see that even if we run MAPPO-FuPS on Dispersion for 40 million timesteps (double the timesteps of MLP Hypernetwork), it converges to suboptimal performance.}
    \label{fig:run_longer}
\end{figure}

\subsection{Detailed MAMujoco Plots}

\begin{figure}[htbp]
    \centering
    \includegraphics[width=0.8\linewidth]{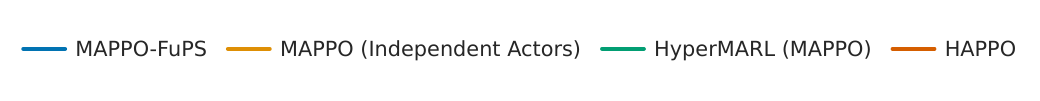}
    \begin{subfigure}[b]{0.48\textwidth}
        \centering
        \includegraphics[width=\textwidth]{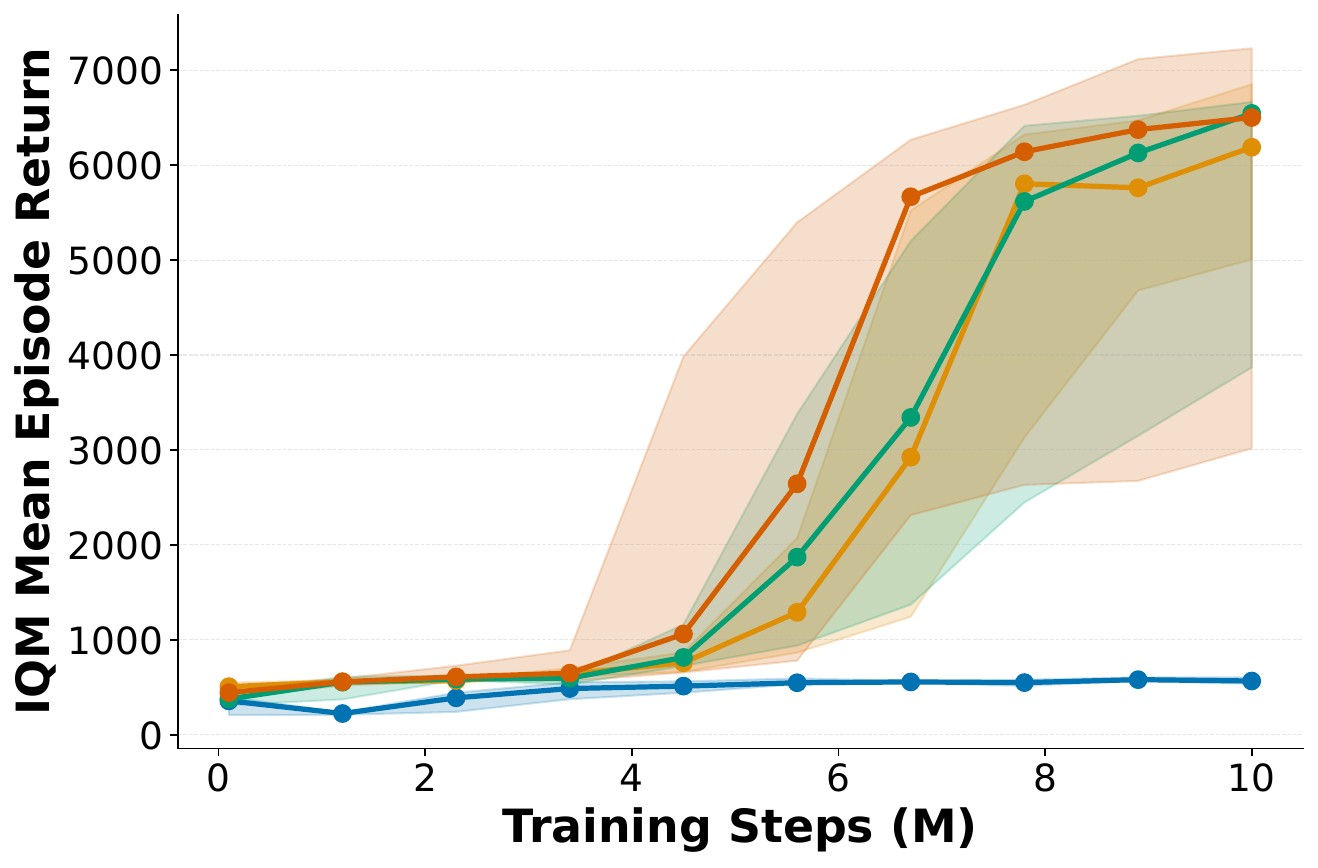}
        \caption{Humanoid-v2 17x1}
    \end{subfigure}
    \hfill
    \begin{subfigure}[b]{0.48\textwidth}
        \centering
        \includegraphics[width=\textwidth]{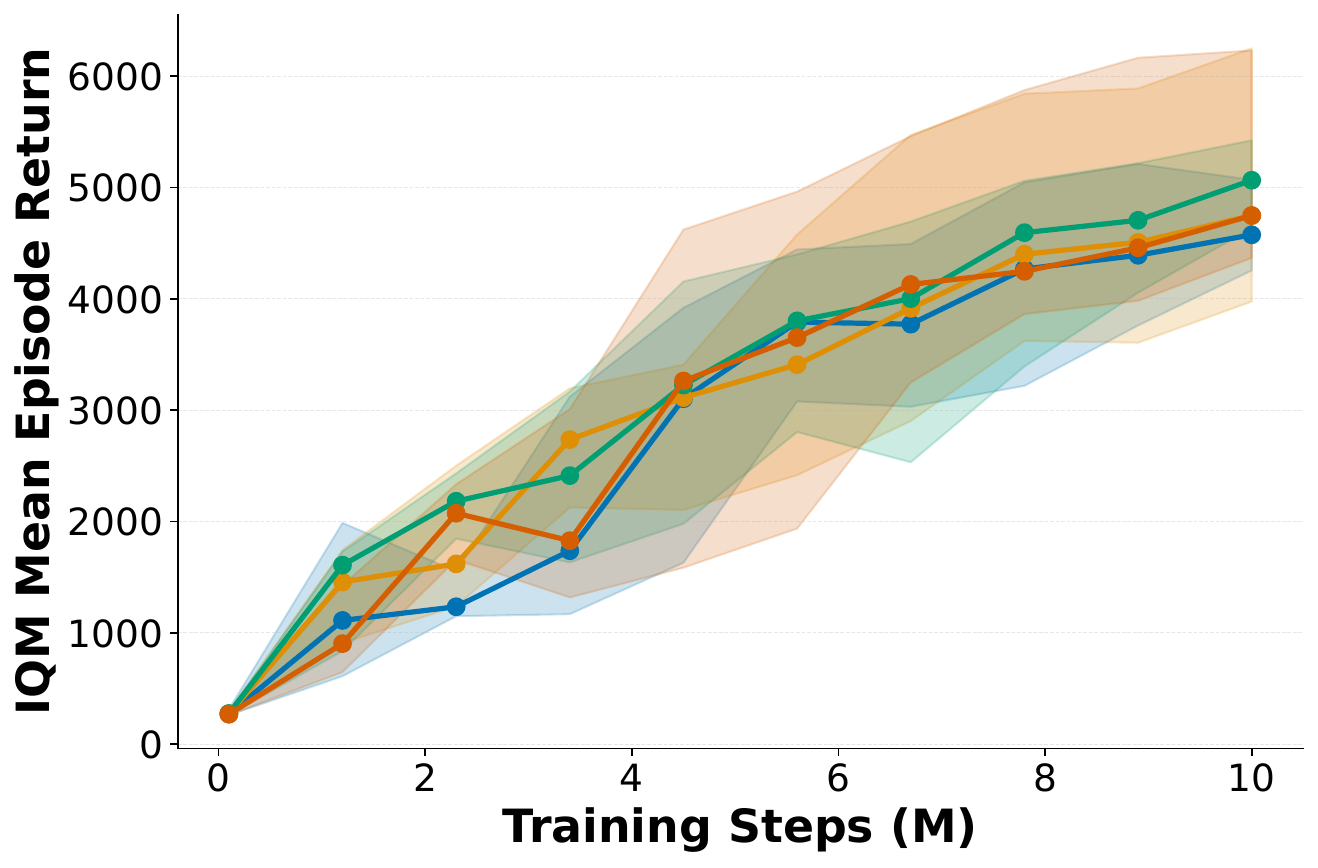}
        \caption{Walker2d-v2 2x3}
    \end{subfigure}
    \hfill
    \begin{subfigure}[b]{0.48\textwidth}
        \centering
        \includegraphics[width=\textwidth]{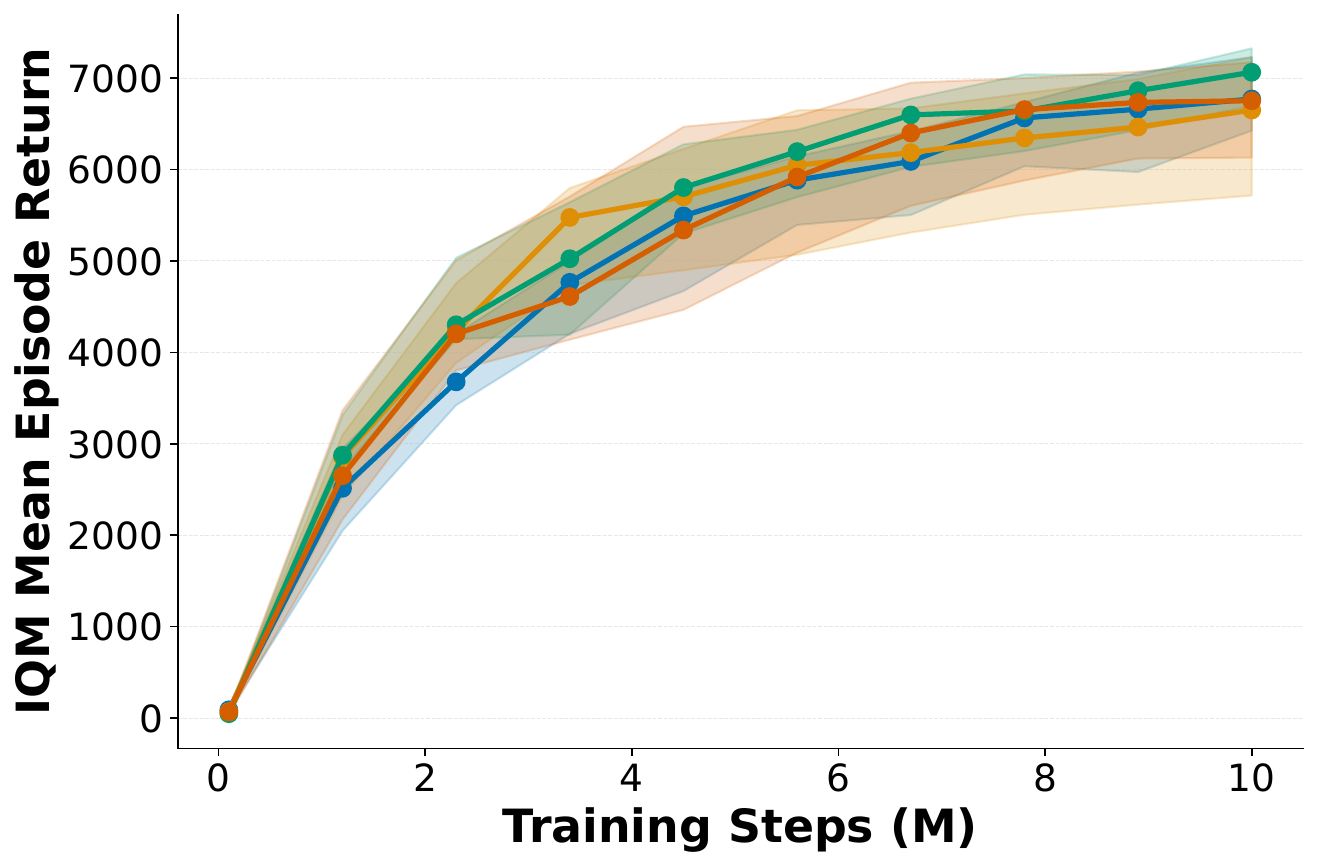}
        \caption{HalfCheetah-v2 2x3}
    \end{subfigure}
    \hfill 
    \begin{subfigure}[b]{0.48\textwidth}
        \centering
        \includegraphics[width=\textwidth]{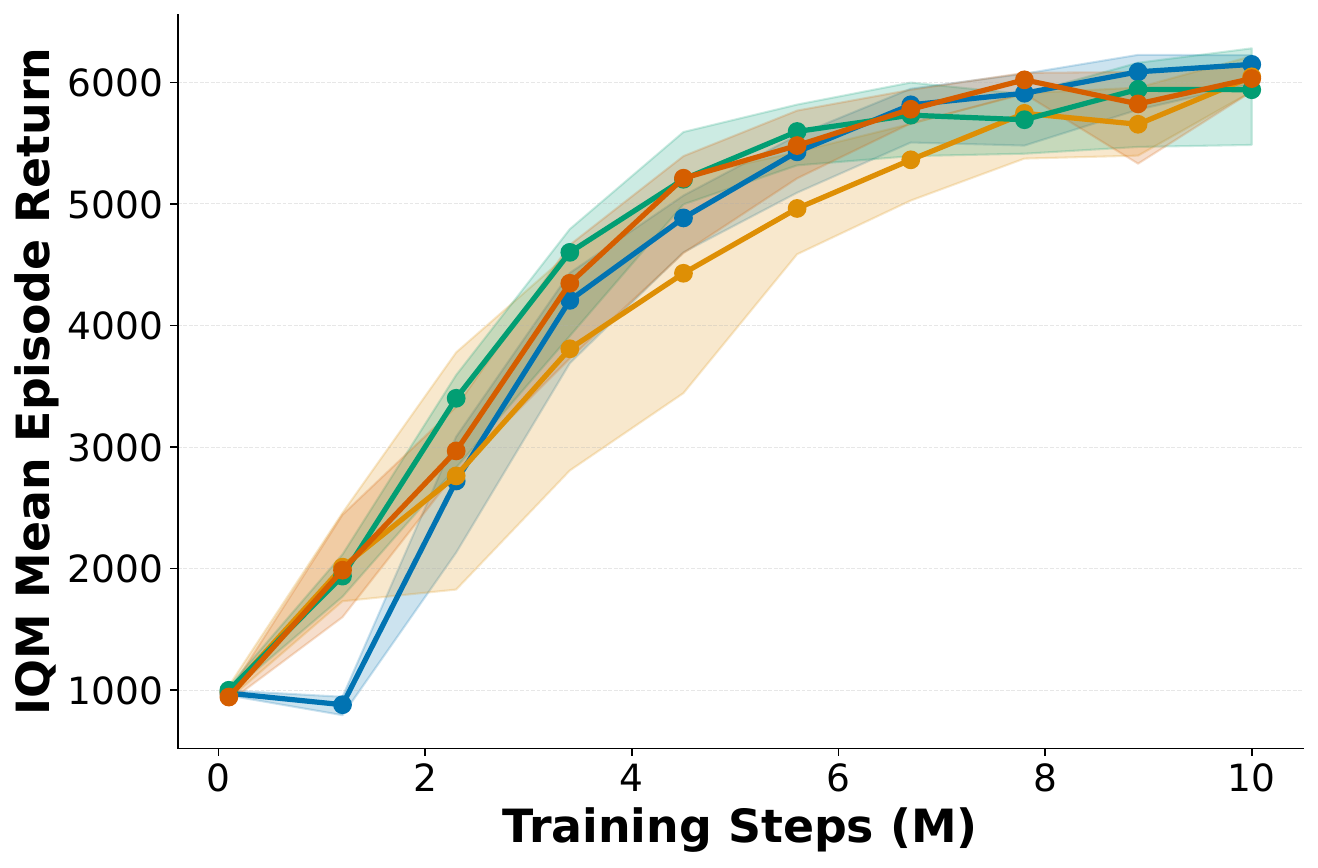}
        \caption{Ant-v2 4x2}
    \end{subfigure}
    \caption{Performance of Recurrent IPPO and MAPPO on MaMoJoCo. HyperMARL performs comparably to these baselines, and is the only method with shared actors to
demonstrate stable learning in the notoriously difficult 17-agent Humanoid environment.}
    \label{fig:performance_mamujoco}
\end{figure}

\clearpage

\subsection{Detailed Navigation Plots}\label{app:dico_results}

\begin{figure}[htbp] %
    \centering

    \begin{subfigure}[b]{0.32\textwidth} %
        \centering
        \includegraphics[width=\linewidth]{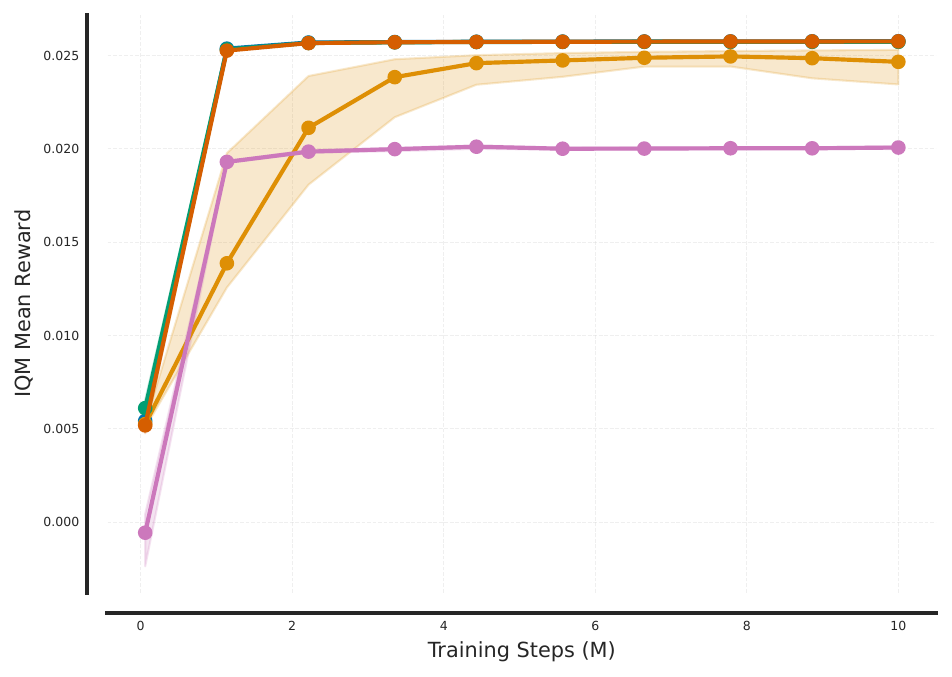}
        \caption{8 Agents, Same Goal}
        \label{fig:nav_8_same}
    \end{subfigure}\hfill
    \begin{subfigure}[b]{0.32\textwidth}
        \centering
        \includegraphics[width=\linewidth]{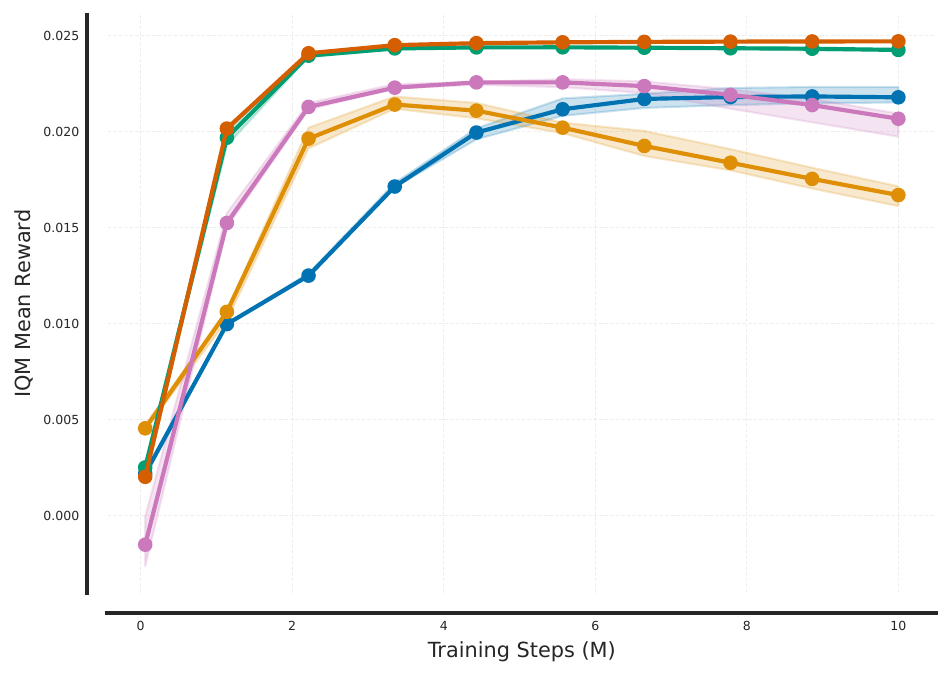}
        \caption{8 Agents, Different Goals}
        \label{fig:nav_8_diff}
    \end{subfigure}\hfill
    \begin{subfigure}[b]{0.32\textwidth}
        \centering
        \includegraphics[width=\linewidth]{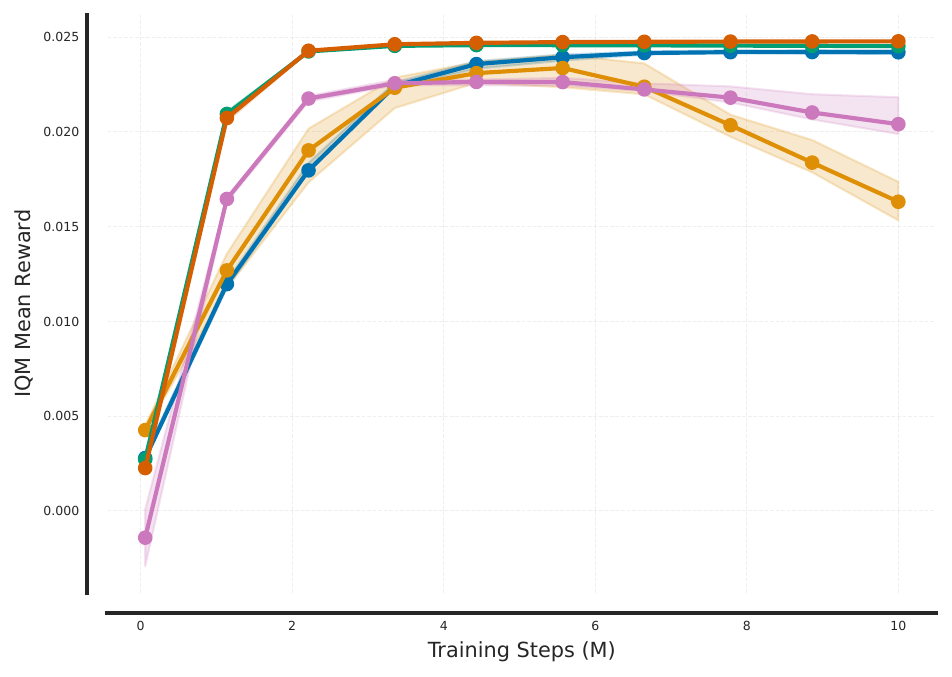}
        \caption{8 Agents, Four Goals}
        \label{fig:nav_8_four}
    \end{subfigure}

    \begin{subfigure}[b]{0.32\textwidth}
        \centering
        \includegraphics[width=\linewidth]{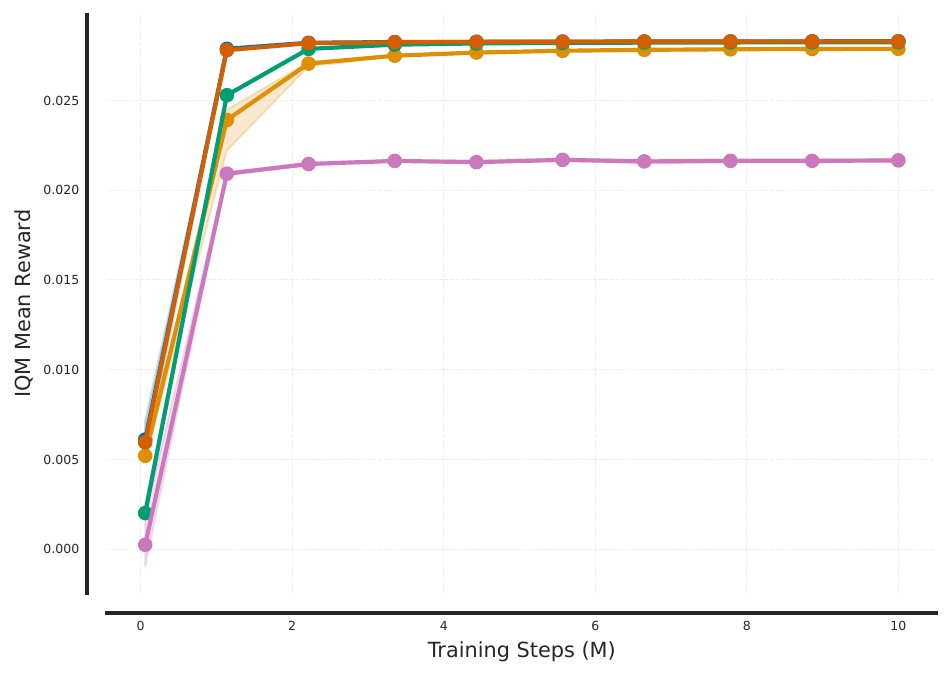}
        \caption{4 Agents, Same Goal}
        \label{fig:nav_4_same}
    \end{subfigure}\hfill
    \begin{subfigure}[b]{0.32\textwidth}
        \centering
        \includegraphics[width=\linewidth]{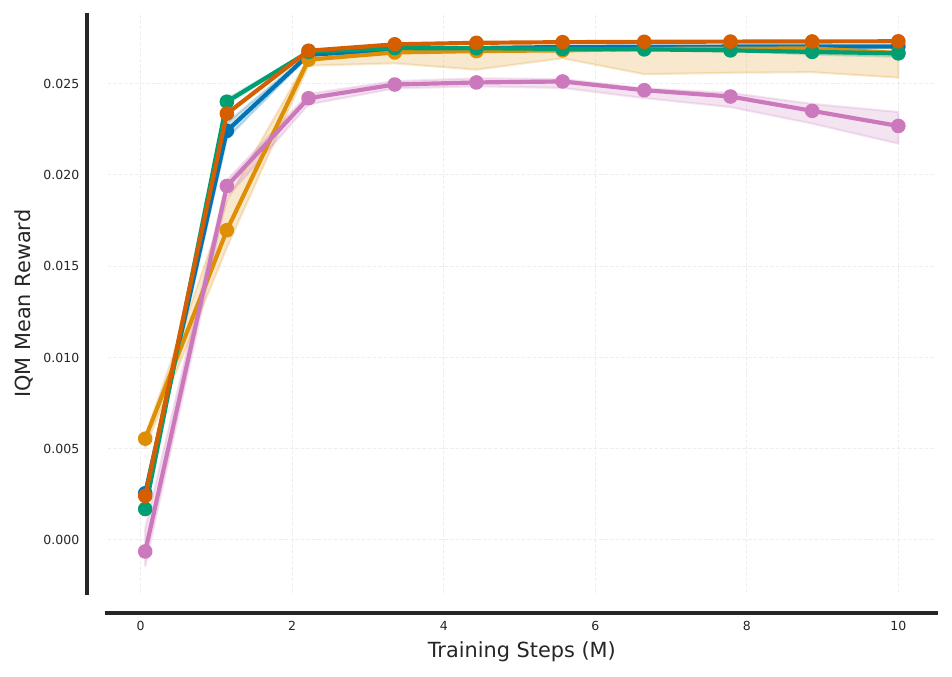}
        \caption{4 Agents, Different Goals}
        \label{fig:nav_4_diff}
    \end{subfigure}\hfill
    \begin{subfigure}[b]{0.32\textwidth}
        \centering
        \includegraphics[width=\linewidth]{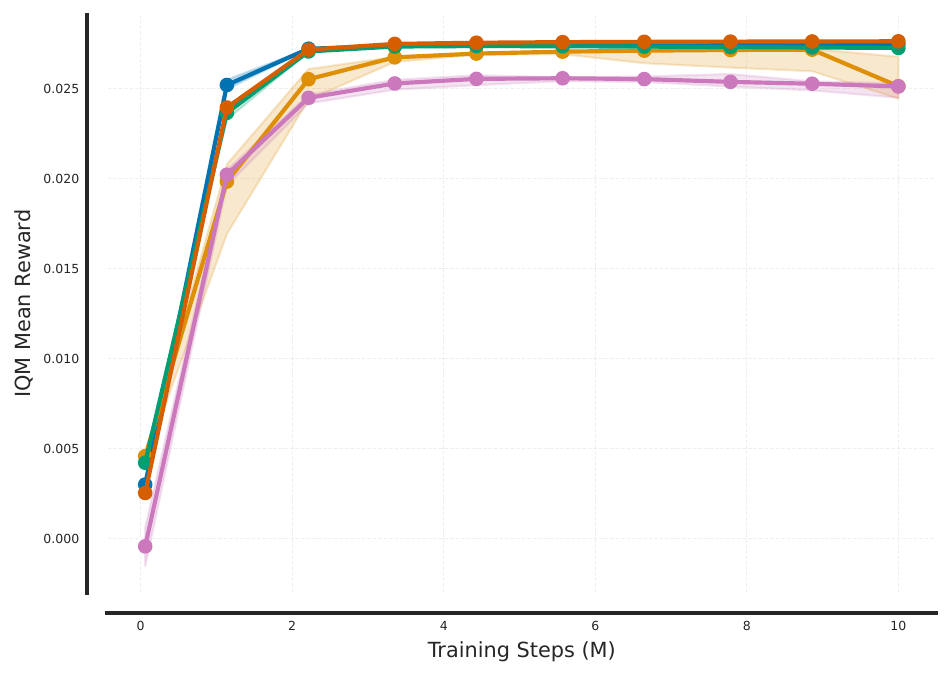}
        \caption{4 Agents, Two Goals}
        \label{fig:nav_4_two}
    \end{subfigure}

    \begin{minipage}{\textwidth} %
    \centering
        \begin{subfigure}[b]{0.32\textwidth}
            \centering
            \includegraphics[width=\linewidth]{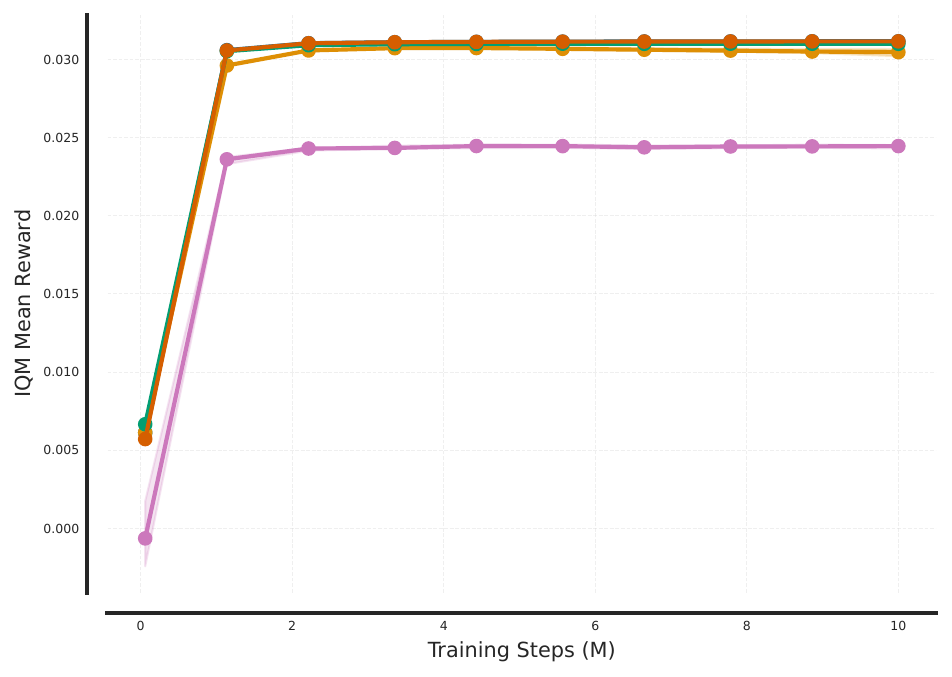}
            \caption{2 Agents, Same Goal}
            \label{fig:nav_2_same}
        \end{subfigure}\hspace{0.02\textwidth} %
        \begin{subfigure}[b]{0.32\textwidth}
            \centering
            \includegraphics[width=\linewidth]{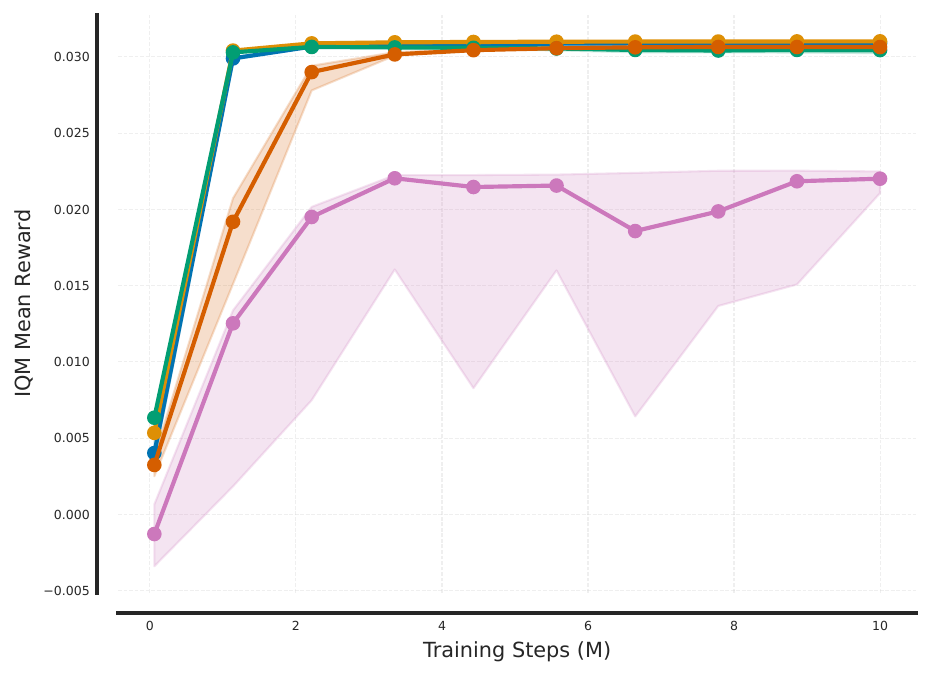}
            \caption{2 Agents, Different Goals}
            \label{fig:nav_2_diff}
        \end{subfigure}
    \end{minipage}

    \begin{center} %
        \includegraphics[width=0.8\textwidth]{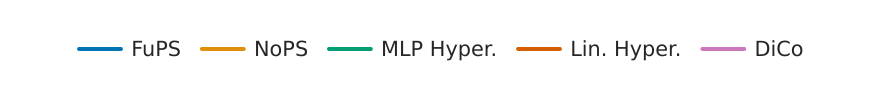} %
    \end{center}

    \caption{Sample efficiency of IPPO variants in the VMAS Navigation environment. Plots show IQM and 95\% CI (shaded regions) of mean episode return against training steps for different agent counts (rows: 8, 4, 2 agents) and goal configurations (columns, where applicable: Same, Different, Specific Goal Counts). Legend shown at bottom applies to all subplots.}
    \label{fig:sample_efficiency_ippo}
    \vspace{-10pt} %
\end{figure}

\subsection{Interval Estimates - SMAX}\label{app:interval_est_smax}
\begin{figure*}[ht]
    
    \begin{subfigure}[b]{0.45\linewidth}
        \centering
        \includegraphics[width=\linewidth]{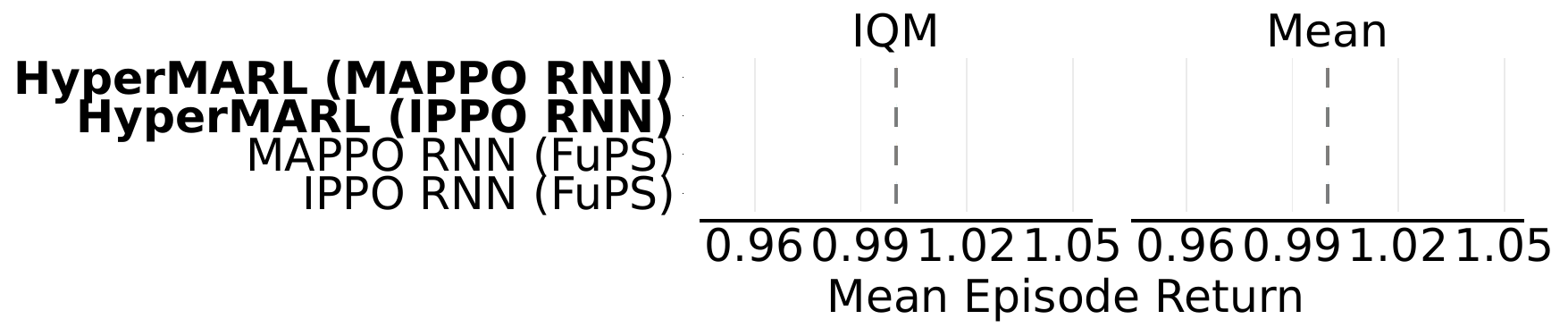}
        \caption{2s3z}
        \label{fig:smax_2s3z_interval}
    \end{subfigure}
    \hspace{0.05\linewidth}
    \begin{subfigure}[b]{0.45\linewidth}
        \centering
        \includegraphics[width=\linewidth]{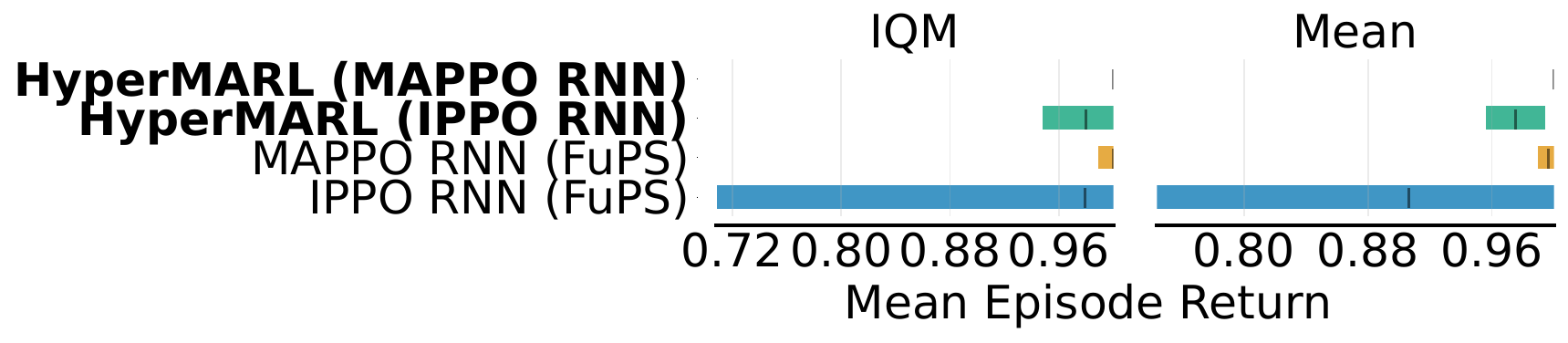}
        \caption{3s5z}
        \label{fig:smax_3s5z_interval}
    \end{subfigure}
    
    \vspace{1em}
    
    \begin{subfigure}[b]{0.45\linewidth}
        \centering
        \includegraphics[width=\linewidth]{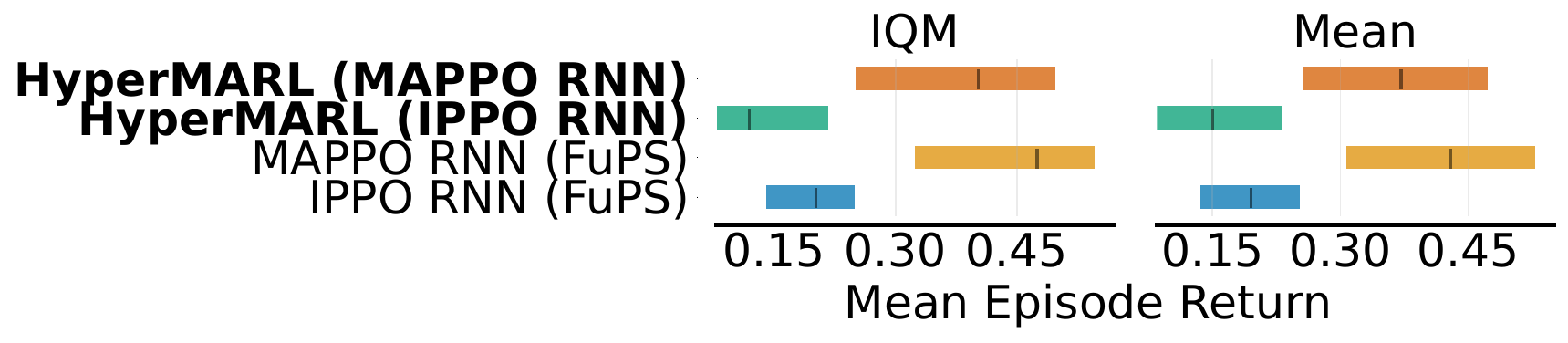}
        \caption{SMACv2 10 Units}
        \label{fig:smax_10_units_interval}
    \end{subfigure}
    \hspace{0.05\linewidth}
    \begin{subfigure}[b]{0.45\linewidth}
        \centering
        \includegraphics[width=\linewidth]{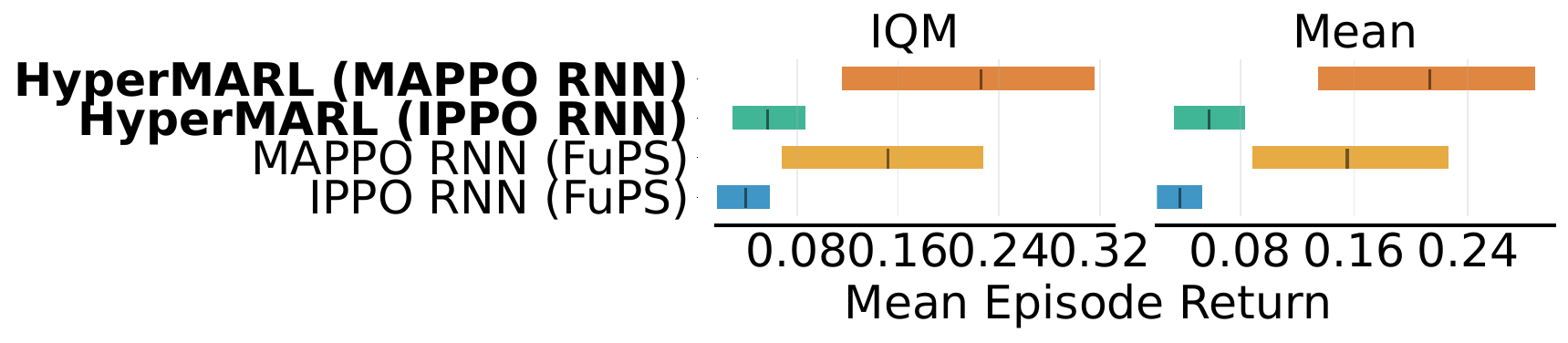}
        \caption{SMACv2 20 Units}
        \label{fig:smax_20_units_interval}
    \end{subfigure}

    \caption{\textit{Performance comparison in SMAX environments after 10 million timesteps.} We show the Interquartile Mean (IQM) of the Mean Win Rate and the 95\% Stratified Bootstrap Confidence Intervals (CI). HyperMARL performs comparably to FuPS baselines across all environments, demonstrating its effectiveness in tasks requiring homogeneous behaviours and using recurrency.}
    \label{fig:smax_interval_estimates}
\end{figure*}

\subsection{Additional Ablations}\label{app:additional_ablations}
\begin{figure}[h]
  \centering
 \includegraphics[width=0.6\linewidth]{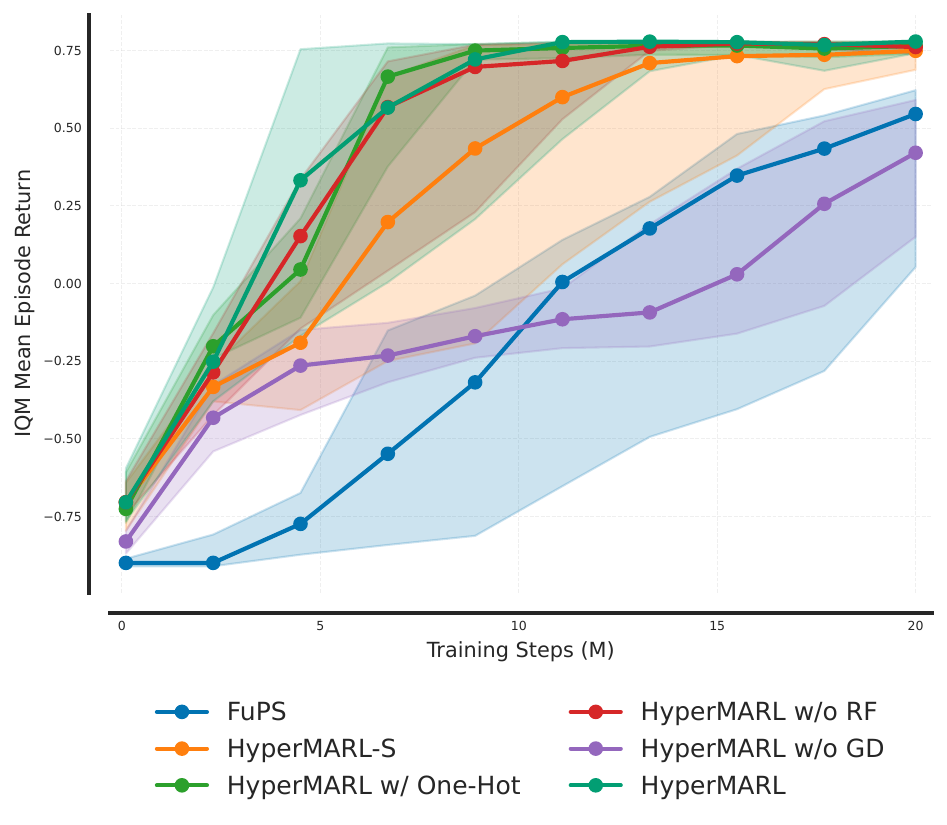}
  \caption{\textit{Ablation results comparing HyperMARL with its variants in Dispersion.} The results highlight that gradient decoupling is essential for maintaining HyperMARL's performance.}
  \label{fig:ablations_dispersion}
\end{figure}

\clearpage
\section{Hyperparameters}\label{append:hyperparams}

\begin{table}[htbp]
  \centering
  \caption{Hyperparameters, Training and Evaluation for Specialisation and Synchronisation Game}
  \label{tab:hyperparameters_spec_syn_game}
  \begin{tabular}{@{}ll@{}}
    \toprule
    \textbf{Hyperparameter}       & \textbf{Value} \\ 
    \midrule
    \multicolumn{2}{@{}l}{\textit{Environment Configuration}} \\ 
    Number of agents              & 2, 4, 8, 16    \\
    Maximum steps per episode     & 10             \\
    \midrule
    \multicolumn{2}{@{}l}{\textit{Training Protocol}} \\ 
    Number of seeds               & 10             \\
    Training steps                & 10,000         \\
    Evaluation episodes           & 100            \\
    Evaluation interval           & 1,000 steps    \\
    Batch size                    & 32             \\
    \midrule
    \multicolumn{2}{@{}l}{\textit{Model Architecture}} \\ 
    Hidden layer size             & 8, 16, 32, 64  \\
    Activation function           & ReLU           \\
    Output activation             & Softmax        \\
    \midrule
    \multicolumn{2}{@{}l}{\textit{Optimization}} \\ 
    Learning rate                 & 0.01           \\
    Optimizer                     & SGD            \\
    \midrule
    \multicolumn{2}{@{}l}{\textit{Model Parameter Count}} \\ 
    \addlinespace
    2 Agents                      & 
     \begin{tabular}[t]{@{}l@{}}
        NoPS: 60 \\ 
        FuPS: 58 \\ 
        FuPS+ID: 74 \\ 
        FuPS+ID (No State): 42
      \end{tabular} \\
    \addlinespace
    4 Agents                      & 
      \begin{tabular}[t]{@{}l@{}}
        NoPS: 352 \\ 
        FuPS: 240 \\ 
        FuPS+ID: 404 \\ 
        FuPS+ID (No State): 148
      \end{tabular} \\
    \addlinespace
    8 Agents                      & 
      \begin{tabular}[t]{@{}l@{}}
        NoPS: 2400 \\ 
        FuPS: 2344 \\ 
        FuPS+ID: 2600 \\ 
        FuPS+ID (No State): 552
      \end{tabular} \\
    \addlinespace
    16 Agents                     & 
      \begin{tabular}[t]{@{}l@{}}
        NoPS: 17728 \\ 
        FuPS: 17488 \\ 
        FuPS+ID: 18512 \\ 
        FuPS+ID (No State): 2128
      \end{tabular} \\
    \bottomrule
  \end{tabular}
\end{table}

\begin{table}[htbp]
\centering
\caption{IPPO and MAPPO Hyperparameters in Dispersion}
\begin{tabular}{lc}
\hline
\textbf{Hyperparameter} & \textbf{Value} \\
\hline
LR & 0.0005 \\
GAMMA & 0.99 \\
VF\_COEF & 0.5 \\
CLIP\_EPS & 0.2 \\
ENT\_COEF & 0.01 \\
NUM\_ENVS & 16 \\
NUM\_STEPS & 128 \\
GAE\_LAMBDA & 0.95 \\
NUM\_UPDATES & 9765 \\
EVAL\_EPISODES & 32 \\
EVAL\_INTERVAL & 100000 \\
MAX\_GRAD\_NORM & 0.5 \\
UPDATE\_EPOCHS & 4 \\
NUM\_MINIBATCHES & 2 \\
TOTAL\_TIMESTEPS & 20000000 \\
ANNEAL\_LR & false \\
ACTOR\_LAYERS & [64, 64] \\
CRITIC\_LAYERS & [64, 64] \\
ACTIVATION & relu \\
SEEDS & 30,1,42,72858,2300658 \\
ACTION\_SPACE\_TYPE & discrete \\
\hline
\end{tabular}
\end{table}

\begin{table}[htbp]
\centering
\caption{MLP Hypernet Hyperparameters in Dispersion}
\small
\begin{tabular}{@{}lcc@{}}
\hline
\textbf{Parameter} & \textbf{IPPO} & \textbf{MAPPO} \\
\hline
HYPERNET\_EMBEDDING\_DIM & 4 & 8 \\
EMBEDDING\_DIM Sweep & [4, 16, 64] & [4, 8, 16, 64] \\
HYPERNET\_HIDDEN\_DIMS & 64 & 64 \\
\hline
\end{tabular}
\end{table}

\begin{table}[htbp]
\centering
\caption{Dispersion Settings}
\begin{tabular}{ll}
\hline
\textbf{Setting} & \textbf{Value} \\
\hline
n\_food & 4 \\
n\_agents & 4 \\
max\_steps & 100 \\
food\_radius & 0.08 \\
share\_reward & false \\
penalise\_by\_time & true \\
continuous\_actions & false \\
\hline
\end{tabular}
\end{table}

\begin{table}[htbp]
\centering
\caption{IPPO Hyperparameters for Navigation}
\small
\begin{tabular}{@{}ll@{}}
\hline
\textbf{Hyperparameters} & \textbf{Value} \\
\hline
LR & 0.00005 \\
NUM\_ENVS & 600 \\
NUM\_STEPS & 100 \\
TOTAL\_TIMESTEPS & $10^6$ \\
UPDATE\_EPOCHS & 45 \\
NUM\_MINIBATCHES & 30 \\
GAMMA & 0.9 \\
GAE\_LAMBDA & 0.9 \\
CLIP\_EPS & 0.2 \\
ENT\_COEF & 0.0 \\
VF\_COEF & 1.0 \\
MAX\_GRAD\_NORM & 5 \\
ACTIVATION & tanh \\
ANNEAL\_LR & False \\
ACTOR\_LAYERS & [256, 256] \\
CRITIC\_LAYERS & [256, 256] \\
ACTION\_SPACE\_TYPE & continuous \\
\hline
\end{tabular}
\end{table}

\begin{table}[htbp]
\centering
\caption{MLP Hypernet Hyperparameters in Navigation}
\small
\begin{tabular}{@{}lcc@{}}
\hline
\textbf{Parameter} & \textbf{IPPO} & \textbf{MAPPO} \\
\hline
HYPERNET\_EMBEDDING\_DIM & 4 & 8 \\
EMBEDDING\_DIM Sweep & [4, 16, 64] & [4, 8, 16, 64] \\
HYPERNET\_HIDDEN\_DIMS & 64 & 64 \\
\hline
\end{tabular}
\end{table}

\begin{table}[htbp]
\centering
\caption{DiCo Algorithm $SND\_des$ Hyperparameter}
\label{tab:dico-snd-des}
\small
\begin{tabular}{@{}llc@{}}
\hline
\textbf{Goal Configuration} & \textbf{Number of Agents} & \textbf{SND\_des} \\
\hline
\multirow{3}{*}{All agents same goal} & 2 & 0 \\
 & 4 & 0 \\
 & 8 & 0 \\
\hline
\multirow{3}{*}{All agents different goals} & 2 & 1.2 (From DiCo paper) \\
 & 4 & [-1,1.2,2.4] $\Rightarrow$ -1 (Best) \\
 & 8 & [-1,1.2,4.8] $\Rightarrow$ -1 (Best)\\
\hline
\multirow{2}{*}{Some agents share goals} & 4 & [-1,1.2] $\Rightarrow$ -1 (Best)\\
 & 8 & [-1,2.4,1.2] $\Rightarrow$ -1 (Best) \\
\hline
\end{tabular}
\end{table}

\begin{table*}[ht] \label{}
\centering
\caption{Parameter Sweeps for IPPO Variants in Navigation Tasks with Four and Eight Agents}
\label{tab:param_sweeps_navigation}
\begin{tabular}{ll}
\toprule
\textbf{Parameter Sweeps} & \\
\midrule
CLIP\_EPS & 0.2, 0.1 \\
LR        & 5e-5, 5e-4, 2.5e-4 \\
\bottomrule
\end{tabular}

\vspace{1em}

\begin{tabular}{llc}
\toprule
\textbf{Algorithm} & \textbf{Setting}                     & \textbf{Selected Values} \\
\midrule
IPPO-FuPS            & 8 Agents (Same Goals)               & 0.2, 5e-5 \\
                     & 8 Agents (Different Goals)          & 0.1, 5e-5 \\
                     & 8 Agents (Four Goals)               & 0.1, 5e-5 \\
                     & 4 Agents (Same Goals)               & 0.2, 5e-5 \\
                     & 4 Agents (Different Goals)          & 0.2, 5e-5 \\
                     & 4 Agents (Two Goals)                & 0.2, 5e-5 \\
\midrule
IPPO-Linear Hypernetwork & 8 Agents (Same Goals)           & 0.2, 5e-5 \\
                         & 8 Agents (Different Goals)      & 0.1, 5e-5 \\
                         & 8 Agents (Four Goals)           & 0.1, 5e-5 \\
                         & 4 Agents (Same Goals)           & 0.2, 5e-5 \\
                         & 4 Agents (Different Goals)      & 0.1, 5e-5 \\
                         & 4 Agents (Two Goals)            & 0.1, 5e-5 \\
\midrule
IPPO-MLP Hypernetwork    & 8 Agents (Same Goals)           & 0.2, 5e-5 \\
                         & 8 Agents (Different Goals)      & 0.1, 5e-5 \\
                         & 8 Agents (Four Goals)           & 0.1, 5e-5 \\
                         & 4 Agents (Same Goals)           & 0.1, 5e-5 \\
                         & 4 Agents (Different Goals)      & 0.1, 5e-5 \\
                         & 4 Agents (Two Goals)            & 0.1, 5e-5 \\
\midrule
IPPO-NoPS                & 8 Agents (Same Goals)           & 0.1, 5e-5 \\
                         & 8 Agents (Different Goals)      & 0.2, 5e-5 \\
                         & 8 Agents (Four Goals)           & 0.1, 5e-5 \\
                         & 4 Agents (Same Goals)           & 0.1, 5e-5 \\
                         & 4 Agents (Different Goals)      & 0.2, 5e-5  \\
                         & 4 Agents (Two Goals)            & 0.1, 5e-5   \\
\midrule
IPPO-Dico                & 8 Agents (Same Goals)           & 0.2, 5e-5 \\
                         & 8 Agents (Different Goals)      & 0.1, 2.5e-4 \\
                         & 8 Agents (Four Goals)           & 0.1, 2.5e-4 \\
                         & 4 Agents (Same Goals)           & 0.2, 5e-5 \\
                         & 4 Agents (Different Goals)      & 0.1, 2.5e-4 \\
                         & 4 Agents (Two Goals)            & 0.1, 5e-4 \\
\bottomrule
\end{tabular}
\end{table*}

\begin{table}[htbp]
\centering
\caption{Environment Settings for Navigation Task}
\small
\begin{tabular}{@{}ll@{}}
\hline
\textbf{Parameter} & \textbf{Value} \\
\hline
n\_agents & 2,4,8 \\
agents\_with\_same\_goal & 1, n\_agents/2, n\_agents \\
max\_steps & 100 \\
collisions & False \\
split\_goals & False \\
observe\_all\_goals & True \\
shared\_rew & False \\
lidar\_range & 0.35 \\
agent\_radius & 0.1 \\
continuous\_actions & True \\
\hline
\end{tabular}
\end{table}

\begin{table*}[ht]
    \centering
    \caption{Default algorithm and model hyperparameters for the Ant-v2-4x2 environment (from ~\citep{JMLR:v25:23-0488}).}
    \label{tab:algo_model_config}
    \begin{tabular}{ll}
        \toprule
        \textbf{Parameter} & \textbf{Value} \\
        \midrule
        \multicolumn{2}{l}{\textbf{--- Algorithm Parameters ---}} \\
        action\_aggregation & prod \\
        actor\_num\_mini\_batch & 1 \\
        clip\_param & 0.1 \\
        critic\_epoch & 5 \\
        critic\_num\_mini\_batch & 1 \\
        entropy\_coef & 0 \\
        fixed\_order & true \\
        gae\_lambda & 0.95 \\
        gamma & 0.99 \\
        huber\_delta & 10.0 \\
        max\_grad\_norm & 10.0 \\
        ppo\_epoch & 5 \\
        share\_param & false \\
        use\_clipped\_value\_loss & true \\
        use\_gae & true \\
        use\_huber\_loss & true \\
        use\_max\_grad\_norm & true \\
        use\_policy\_active\_masks & true \\
        value\_loss\_coef & 1 \\
        \midrule
        \multicolumn{2}{l}{\textbf{--- Model Parameters ---}} \\
        activation\_func & relu \\
        critic\_lr & 0.0005 \\
        data\_chunk\_length & 10 \\
        gain & 0.01 \\
        hidden\_sizes & [128, 128, 128] \\
        initialization\_method & orthogonal\_ \\
        lr & 0.0005 \\
        opti\_eps & 1e-05 \\
        recurrent\_n & 1 \\
        std\_x\_coef & 1 \\
        std\_y\_coef & 0.5 \\
        use\_feature\_normalization & true \\
        use\_naive\_recurrent\_policy & false \\
        use\_recurrent\_policy & false \\
        weight\_decay & 0 \\
        \bottomrule
    \end{tabular}
\end{table*}

\begin{table*}[ht]
    \centering
    \caption{Default algorithm and model hyperparameters for the Humanoid-v2-17x1 environment (from ~\citep{JMLR:v25:23-0488}).}
    \label{tab:algo_model_config_humanoid}
    \begin{tabular}{ll}
        \toprule
        \textbf{Parameter} & \textbf{Value} \\
        \midrule
        \multicolumn{2}{l}{\textbf{--- Algorithm Parameters ---}} \\
        action\_aggregation & prod \\
        actor\_num\_mini\_batch & 1 \\
        clip\_param & 0.1 \\
        critic\_epoch & 5 \\
        critic\_num\_mini\_batch & 1 \\
        entropy\_coef & 0 \\
        fixed\_order & false \\
        gae\_lambda & 0.95 \\
        gamma & 0.99 \\
        huber\_delta & 10.0 \\
        max\_grad\_norm & 10.0 \\
        ppo\_epoch & 5 \\
        share\_param & false \\
        use\_clipped\_value\_loss & true \\
        use\_gae & true \\
        use\_huber\_loss & true \\
        use\_max\_grad\_norm & true \\
        use\_policy\_active\_masks & true \\
        value\_loss\_coef & 1 \\
        \midrule
        \multicolumn{2}{l}{\textbf{--- Model Parameters ---}} \\
        activation\_func & relu \\
        critic\_lr & 0.0005 \\
        data\_chunk\_length & 10 \\
        gain & 0.01 \\
        hidden\_sizes & [128, 128, 128] \\
        initialization\_method & orthogonal\_ \\
        lr & 0.0005 \\
        opti\_eps & 1e-05 \\
        recurrent\_n & 1 \\
        std\_x\_coef & 1 \\
        std\_y\_coef & 0.5 \\
        use\_feature\_normalization & true \\
        use\_naive\_recurrent\_policy & false \\
        use\_recurrent\_policy & false \\
        weight\_decay & 0 \\
        \bottomrule
    \end{tabular}
\end{table*}

\begin{table*}[ht]
    \centering
    \caption{Default algorithm and model hyperparameters for the Walker2d-v2-2x3 environment (from ~\citep{JMLR:v25:23-0488}).}
    \label{tab:algo_model_config_walker2d}
    \begin{tabular}{ll}
        \toprule
        \textbf{Parameter} & \textbf{Value} \\
        \midrule
        \multicolumn{2}{l}{\textbf{--- Algorithm Parameters ---}} \\
        action\_aggregation & prod \\
        actor\_num\_mini\_batch & 2 \\
        clip\_param & 0.05 \\
        critic\_epoch & 5 \\
        critic\_num\_mini\_batch & 2 \\
        entropy\_coef & 0 \\
        fixed\_order & false \\
        gae\_lambda & 0.95 \\
        gamma & 0.99 \\
        huber\_delta & 10.0 \\
        max\_grad\_norm & 10.0 \\
        ppo\_epoch & 5 \\
        share\_param & false \\
        use\_clipped\_value\_loss & true \\
        use\_gae & true \\
        use\_huber\_loss & true \\
        use\_max\_grad\_norm & true \\
        use\_policy\_active\_masks & true \\
        value\_loss\_coef & 1 \\
        \midrule
        \multicolumn{2}{l}{\textbf{--- Model Parameters ---}} \\
        activation\_func & relu \\
        critic\_lr & 0.001 \\
        data\_chunk\_length & 10 \\
        gain & 0.01 \\
        hidden\_sizes & 128, 128, 128 \\
        initialization\_method & orthogonal\_ \\
        lr & 0.001 \\
        opti\_eps & 1e-05 \\
        recurrent\_n & 1 \\
        std\_x\_coef & 1 \\
        std\_y\_coef & 0.5 \\
        use\_feature\_normalization & true \\
        use\_naive\_recurrent\_policy & false \\
        use\_recurrent\_policy & false \\
        weight\_decay & 0 \\
        \bottomrule
    \end{tabular}
\end{table*}

\begin{table*}[ht]
    \centering
    \caption{Default algorithm and model hyperparameters for the HalfCheetah-v2-2x3 environment (from ~\citep{JMLR:v25:23-0488}).}
    \label{tab:algo_model_config_halfcheetah}
    \begin{tabular}{ll}
        \toprule
        \textbf{Parameter} & \textbf{Value} \\
        \midrule
        \multicolumn{2}{l}{\textbf{--- Algorithm Parameters ---}} \\
        action\_aggregation & prod \\
        actor\_num\_mini\_batch & 1 \\
        clip\_param & 0.05 \\
        critic\_epoch & 15 \\
        critic\_num\_mini\_batch & 1 \\
        entropy\_coef & 0.01 \\
        fixed\_order & false \\
        gae\_lambda & 0.95 \\
        gamma & 0.99 \\
        huber\_delta & 10.0 \\
        max\_grad\_norm & 10.0 \\
        ppo\_epoch & 15 \\
        share\_param & false \\
        use\_clipped\_value\_loss & true \\
        use\_gae & true \\
        use\_huber\_loss & true \\
        use\_max\_grad\_norm & true \\
        use\_policy\_active\_masks & true \\
        value\_loss\_coef & 1 \\
        \midrule
        \multicolumn{2}{l}{\textbf{--- Model Parameters ---}} \\
        activation\_func & relu \\
        critic\_lr & 0.0005 \\
        data\_chunk\_length & 10 \\
        gain & 0.01 \\
        hidden\_sizes & 128, 128, 128 \\
        initialization\_method & orthogonal\_ \\
        lr & 0.0005 \\
        opti\_eps & 1e-05 \\
        recurrent\_n & 1 \\
        std\_x\_coef & 1 \\
        std\_y\_coef & 0.5 \\
        use\_feature\_normalization & true \\
        use\_naive\_recurrent\_policy & false \\
        use\_recurrent\_policy & false \\
        weight\_decay & 0 \\
        \bottomrule
    \end{tabular}
\end{table*}

\begin{table*}[ht]
    \centering
    \footnotesize %
    \caption{HyperMARL Hyperparameters Across MaMuJoCo Environments}
    \label{tab:hypermarl_hyperparams_all_environments}
    \begin{tabular}{@{}lccccc@{}}
        \toprule
        \textbf{Parameter} & \textbf{Humanoid} & \textbf{Walker2d} & \textbf{HalfCheetah} & \textbf{Ant} & \textbf{Sweeps} \\
        & \textbf{v2-17x1} & \textbf{v2-2x3} & \textbf{v2-2x3} & \textbf{v2-4x2} & \\
        \midrule
        AGENT\_ID\_DIM & 64 & 64 & 64 & 64 & None \\
        HNET\_HIDDEN\_DIMS & 64 & 64 & 64 & 64 & None\\
        clip\_param & 0.075 & 0.0375 & 0.0375 & 0.075 & [0.1,0.075,0.05,0.0375] \\ 
        \bottomrule
    \end{tabular}
\end{table*}

\begin{table}[htbp]
\centering
\caption{Recurrent IPPO and MAPPO Hyperparameters in SMAX (from JaxMARL paper)}
\begin{tabular}{lcc}
\hline
\textbf{Hyperparameter} & \textbf{IPPO Value} & \textbf{MAPPO Value} \\
\hline
LR & 0.004 & 0.002 \\
NUM\_ENVS & 128 & 128 \\
NUM\_STEPS & 128 & 128 \\
GRU\_HIDDEN\_DIM & 128 & 128 \\
FC\_DIM\_SIZE & 128 & 128 \\
TOTAL\_TIMESTEPS & 1e7 & 1e7 \\
UPDATE\_EPOCHS & 4 & 4 \\
NUM\_MINIBATCHES & 4 & 4 \\
GAMMA & 0.99 & 0.99 \\
GAE\_LAMBDA & 0.95 & 0.95 \\
CLIP\_EPS & 0.05 & 0.2 \\
SCALE\_CLIP\_EPS & False & False \\
ENT\_COEF & 0.01 & 0.0 \\
VF\_COEF & 0.5 & 0.5 \\
MAX\_GRAD\_NORM & 0.25 & 0.25 \\
ACTIVATION & relu & relu \\
SEED & 30,1,42,72858,2300658 & 30,1,42,72858,2300658 \\
ANNEAL\_LR & True & True \\
OBS\_WITH\_AGENT\_ID & - & True \\
\hline
\end{tabular}
\label{tab:base_hparams}
\end{table}

\clearpage
\newpage
\begin{table}[htbp]
\centering
\scriptsize
\caption{Hyperparameter Sweeps and Final Values for Different Maps in SMAX. H‐ refers to HyperMARL MLP Hypernetworks.}
\label{tab:smax_sweeps}
\resizebox{\columnwidth}{!}{%
\begin{tabular}{@{}l c c c c c@{}}
\toprule
\textbf{Map} & \textbf{Algorithm} & \textbf{LR Range} & \textbf{Chosen LR} & \textbf{HNET Embedding Dim} & \textbf{HNET Hidden Dims} \\
\midrule
\multirow{4}{*}{\textbf{2s3z}} 
 & IPPO    & 0.004                         & 0.004 & \multicolumn{2}{c}{--} \\
 & MAPPO   & 0.002                         & 0.002 & \multicolumn{2}{c}{--} \\
 & H-IPPO  & 0.004                         & 0.004 & 4                          & 32 \\
 & H-MAPPO & 0.002                         & 0.002 & 64                         & 16 \\
\addlinespace
\multirow{4}{*}{\textbf{3s5z}} 
 & IPPO    & 0.004                         & 0.004 & \multicolumn{2}{c}{--} \\
 & MAPPO   & 0.002, 0.005, 0.0003          & 0.002 & \multicolumn{2}{c}{--} \\
 & H-IPPO  & 0.004                         & 0.004 & 64                         & 16 \\
 & H-MAPPO & 0.002, 0.005, 0.0003          & 0.0003& 64                         & 16 \\
\addlinespace
\multirow{4}{*}{\textbf{smacv2\_10\_units}} 
 & IPPO    & 0.005, 0.001, 0.0003, 0.004   & 0.001 & \multicolumn{2}{c}{--} \\
 & MAPPO   & 0.002, 0.005, 0.0003          & 0.0003& \multicolumn{2}{c}{--} \\
 & H-IPPO  & 0.005, 0.001, 0.0003, 0.004   & 0.005 & 4                          & 64 \\
 & H-MAPPO & 0.002, 0.005, 0.0003          & 0.0003& 64                         & 16 \\
\addlinespace
\multirow{4}{*}{\textbf{smacv2\_20\_units}} 
 & IPPO    & 0.002, 0.005, 0.0003          & 0.005 & \multicolumn{2}{c}{--} \\
 & MAPPO   & 0.002, 0.005, 0.0003          & 0.0003& \multicolumn{2}{c}{--} \\
 & H-IPPO  & 0.002, 0.005, 0.0003          & 0.005 & 64                         & 64 \\
 & H-MAPPO & 0.002, 0.005, 0.0003          & 0.0003& 4                          & 64 \\
\bottomrule
\end{tabular}%
}
\vspace{0.5em}
\footnotesize
\textit{Note:} HNET Embedding Dim refers to the hypernetwork embedding dimension, chosen from the range \{4, 16, 64\}. HNET Hidden Dims refers to the hidden layer dimensions of the hypernetwork, chosen from the range \{16, 32, 64\}.
\end{table}

\section{Computational Resources}
\label{appendix:computational_resources}

\begin{table}[ht]
    \centering
    \footnotesize
    \caption{Computational Resources by Experiment Type}
    \label{tab:compute_resources}
    \begin{tabular}{@{}lccc@{}}
        \toprule
        \textbf{Experiment Category} & \textbf{Hardware Configuration} & \textbf{Execution Time} & \textbf{Total Hours} \\
        \midrule
        Specialisation, & 8 cores on AMD EPYC 7H12 & 2-6 hours per run & \multirow{2}{*}{250} \\
        Synchronisation \& Dispersion & 64-Core Processor & (agent-count dependent) & \\
        \midrule
        Navigation & 8 cores on AMD EPYC 7H12 & \multirow{2}{*}{4-10 hours per run} & \multirow{2}{*}{320} \\
        Experiments & 64-Core Processor + NVIDIA RTX A4500 & & \\
        \midrule
        MaMuJoCo & 8 cores on AMD EPYC 7H12 & 8-24 hours per run & \multirow{2}{*}{1,680} \\
        Experiments & 64-Core Processor + NVIDIA RTX A4500 & (scenario \& algorithm dependent) & \\
        \midrule
        SMAX & 8 cores on AMD EPYC 7H12 & \multirow{2}{*}{2-5 hours per run} & \multirow{2}{*}{160} \\
        Experiments & 64-Core Processor + NVIDIA RTX A4500 & & \\
        \bottomrule
    \end{tabular}
\end{table}

\clearpage
\newpage

\newpage
\section*{NeurIPS Paper Checklist}

\begin{enumerate}

\item {\bf Claims}
    \item[] Question: Do the main claims made in the abstract and introduction accurately reflect the paper's contributions and scope?
    \item[] Answer: \answerYes %
    \item[] Justification: Please refer to Sections~\ref{sec:motivation},~\ref{sec:experiments} and \ref{sec:ablations}.
    \item[] Guidelines:
    \begin{itemize}
        \item The answer NA means that the abstract and introduction do not include the claims made in the paper.
        \item The abstract and/or introduction should clearly state the claims made, including the contributions made in the paper and important assumptions and limitations. A No or NA answer to this question will not be perceived well by the reviewers. 
        \item The claims made should match theoretical and experimental results, and reflect how much the results can be expected to generalize to other settings. 
        \item It is fine to include aspirational goals as motivation as long as it is clear that these goals are not attained by the paper. 
    \end{itemize}

\item {\bf Limitations}
    \item[] Question: Does the paper discuss the limitations of the work performed by the authors?
    \item[] Answer: \answerYes{} %
    \item[] Justification: Please see Sections~\ref{app:limiations} and \ref{sec:scaling}.
    \item[] Guidelines:
    \begin{itemize}
        \item The answer NA means that the paper has no limitation while the answer No means that the paper has limitations, but those are not discussed in the paper. 
        \item The authors are encouraged to create a separate "Limitations" section in their paper.
        \item The paper should point out any strong assumptions and how robust the results are to violations of these assumptions (e.g., independence assumptions, noiseless settings, model well-specification, asymptotic approximations only holding locally). The authors should reflect on how these assumptions might be violated in practice and what the implications would be.
        \item The authors should reflect on the scope of the claims made, e.g., if the approach was only tested on a few datasets or with a few runs. In general, empirical results often depend on implicit assumptions, which should be articulated.
        \item The authors should reflect on the factors that influence the performance of the approach. For example, a facial recognition algorithm may perform poorly when image resolution is low or images are taken in low lighting. Or a speech-to-text system might not be used reliably to provide closed captions for online lectures because it fails to handle technical jargon.
        \item The authors should discuss the computational efficiency of the proposed algorithms and how they scale with dataset size.
        \item If applicable, the authors should discuss possible limitations of their approach to address problems of privacy and fairness.
        \item While the authors might fear that complete honesty about limitations might be used by reviewers as grounds for rejection, a worse outcome might be that reviewers discover limitations that aren't acknowledged in the paper. The authors should use their best judgment and recognize that individual actions in favor of transparency play an important role in developing norms that preserve the integrity of the community. Reviewers will be specifically instructed to not penalize honesty concerning limitations.
    \end{itemize}

\item {\bf Theory assumptions and proofs}
    \item[] Question: For each theoretical result, does the paper provide the full set of assumptions and a complete (and correct) proof?
    \item[] Answer: \answerYes{} %
    \item[] Justification: Please see Section~\ref{app:proof_fups_limitations}.
    \item[] Guidelines:
    \begin{itemize}
        \item The answer NA means that the paper does not include theoretical results. 
        \item All the theorems, formulas, and proofs in the paper should be numbered and cross-referenced.
        \item All assumptions should be clearly stated or referenced in the statement of any theorems.
        \item The proofs can either appear in the main paper or the supplemental material, but if they appear in the supplemental material, the authors are encouraged to provide a short proof sketch to provide intuition. 
        \item Inversely, any informal proof provided in the core of the paper should be complemented by formal proofs provided in appendix or supplemental material.
        \item Theorems and Lemmas that the proof relies upon should be properly referenced. 
    \end{itemize}

    \item {\bf Experimental result reproducibility}
    \item[] Question: Does the paper fully disclose all the information needed to reproduce the main experimental results of the paper to the extent that it affects the main claims and/or conclusions of the paper (regardless of whether the code and data are provided or not)?
    \item[] Answer: \answerYes{} %
    \item[] Justification: Please see Sections~\ref{sec:hypermarl} and \ref{alg:hypermarl}. We also provide all the code.
    \item[] Guidelines:
    \begin{itemize}
        \item The answer NA means that the paper does not include experiments.
        \item If the paper includes experiments, a No answer to this question will not be perceived well by the reviewers: Making the paper reproducible is important, regardless of whether the code and data are provided or not.
        \item If the contribution is a dataset and/or model, the authors should describe the steps taken to make their results reproducible or verifiable. 
        \item Depending on the contribution, reproducibility can be accomplished in various ways. For example, if the contribution is a novel architecture, describing the architecture fully might suffice, or if the contribution is a specific model and empirical evaluation, it may be necessary to either make it possible for others to replicate the model with the same dataset, or provide access to the model. In general. releasing code and data is often one good way to accomplish this, but reproducibility can also be provided via detailed instructions for how to replicate the results, access to a hosted model (e.g., in the case of a large language model), releasing of a model checkpoint, or other means that are appropriate to the research performed.
        \item While NeurIPS does not require releasing code, the conference does require all submissions to provide some reasonable avenue for reproducibility, which may depend on the nature of the contribution. For example
        \begin{enumerate}
            \item If the contribution is primarily a new algorithm, the paper should make it clear how to reproduce that algorithm.
            \item If the contribution is primarily a new model architecture, the paper should describe the architecture clearly and fully.
            \item If the contribution is a new model (e.g., a large language model), then there should either be a way to access this model for reproducing the results or a way to reproduce the model (e.g., with an open-source dataset or instructions for how to construct the dataset).
            \item We recognize that reproducibility may be tricky in some cases, in which case authors are welcome to describe the particular way they provide for reproducibility. In the case of closed-source models, it may be that access to the model is limited in some way (e.g., to registered users), but it should be possible for other researchers to have some path to reproducing or verifying the results.
        \end{enumerate}
    \end{itemize}

\item {\bf Open access to data and code}
    \item[] Question: Does the paper provide open access to the data and code, with sufficient instructions to faithfully reproduce the main experimental results, as described in supplemental material?
    \item[] Answer: \answerYes{} %
    \item[] Justification: We provide all the code and scripts to reproduce results.
    \item[] Guidelines:
    \begin{itemize}
        \item The answer NA means that paper does not include experiments requiring code.
        \item Please see the NeurIPS code and data submission guidelines (\url{https://nips.cc/public/guides/CodeSubmissionPolicy}) for more details.
        \item While we encourage the release of code and data, we understand that this might not be possible, so “No” is an acceptable answer. Papers cannot be rejected simply for not including code, unless this is central to the contribution (e.g., for a new open-source benchmark).
        \item The instructions should contain the exact command and environment needed to run to reproduce the results. See the NeurIPS code and data submission guidelines (\url{https://nips.cc/public/guides/CodeSubmissionPolicy}) for more details.
        \item The authors should provide instructions on data access and preparation, including how to access the raw data, preprocessed data, intermediate data, and generated data, etc.
        \item The authors should provide scripts to reproduce all experimental results for the new proposed method and baselines. If only a subset of experiments are reproducible, they should state which ones are omitted from the script and why.
        \item At submission time, to preserve anonymity, the authors should release anonymized versions (if applicable).
        \item Providing as much information as possible in supplemental material (appended to the paper) is recommended, but including URLs to data and code is permitted.
    \end{itemize}

\item {\bf Experimental setting/details}
    \item[] Question: Does the paper specify all the training and test details (e.g., data splits, hyperparameters, how they were chosen, type of optimizer, etc.) necessary to understand the results?
    \item[] Answer: \answerYes{} %
    \item[] Justification: Please see Sections~\ref{sec:experiments} and App.~\ref{append:hyperparams}.
    \item[] Guidelines:
    \begin{itemize}
        \item The answer NA means that the paper does not include experiments.
        \item The experimental setting should be presented in the core of the paper to a level of detail that is necessary to appreciate the results and make sense of them.
        \item The full details can be provided either with the code, in appendix, or as supplemental material.
    \end{itemize}

\item {\bf Experiment statistical significance}
    \item[] Question: Does the paper report error bars suitably and correctly defined or other appropriate information about the statistical significance of the experiments?
    \item[] Answer: \answerYes{} %
    \item[] Justification: Following best practices, we report statistical significance in all results in Section~\ref{sec:experiments}.
    \item[] Guidelines:
    \begin{itemize}
        \item The answer NA means that the paper does not include experiments.
        \item The authors should answer "Yes" if the results are accompanied by error bars, confidence intervals, or statistical significance tests, at least for the experiments that support the main claims of the paper.
        \item The factors of variability that the error bars are capturing should be clearly stated (for example, train/test split, initialization, random drawing of some parameter, or overall run with given experimental conditions).
        \item The method for calculating the error bars should be explained (closed form formula, call to a library function, bootstrap, etc.)
        \item The assumptions made should be given (e.g., Normally distributed errors).
        \item It should be clear whether the error bar is the standard deviation or the standard error of the mean.
        \item It is OK to report 1-sigma error bars, but one should state it. The authors should preferably report a 2-sigma error bar than state that they have a 96\% CI, if the hypothesis of Normality of errors is not verified.
        \item For asymmetric distributions, the authors should be careful not to show in tables or figures symmetric error bars that would yield results that are out of range (e.g. negative error rates).
        \item If error bars are reported in tables or plots, The authors should explain in the text how they were calculated and reference the corresponding figures or tables in the text.
    \end{itemize}

\item {\bf Experiments compute resources}
    \item[] Question: For each experiment, does the paper provide sufficient information on the computer resources (type of compute workers, memory, time of execution) needed to reproduce the experiments?
    \item[] Answer: \answerYes{} %
    \item[] Justification: Please see Section~\ref{appendix:computational_resources}.
    \item[] Guidelines:
    \begin{itemize}
        \item The answer NA means that the paper does not include experiments.
        \item The paper should indicate the type of compute workers CPU or GPU, internal cluster, or cloud provider, including relevant memory and storage.
        \item The paper should provide the amount of compute required for each of the individual experimental runs as well as estimate the total compute. 
        \item The paper should disclose whether the full research project required more compute than the experiments reported in the paper (e.g., preliminary or failed experiments that didn't make it into the paper). 
    \end{itemize}
    
\item {\bf Code of ethics}
    \item[] Question: Does the research conducted in the paper conform, in every respect, with the NeurIPS Code of Ethics \url{https://neurips.cc/public/EthicsGuidelines}?
    \item[] Answer: \answerYes{} %
    \item[] Justification: Read and confirmed.
    \item[] Guidelines:
    \begin{itemize}
        \item The answer NA means that the authors have not reviewed the NeurIPS Code of Ethics.
        \item If the authors answer No, they should explain the special circumstances that require a deviation from the Code of Ethics.
        \item The authors should make sure to preserve anonymity (e.g., if there is a special consideration due to laws or regulations in their jurisdiction).
    \end{itemize}

\item {\bf Broader impacts}
    \item[] Question: Does the paper discuss both potential positive societal impacts and negative societal impacts of the work performed?
    \item[] Answer: \answerYes{} %
    \item[] Justification: See App.~\ref{app:braoder_impact}.
    \item[] Guidelines:
    \begin{itemize}
        \item The answer NA means that there is no societal impact of the work performed.
        \item If the authors answer NA or No, they should explain why their work has no societal impact or why the paper does not address societal impact.
        \item Examples of negative societal impacts include potential malicious or unintended uses (e.g., disinformation, generating fake profiles, surveillance), fairness considerations (e.g., deployment of technologies that could make decisions that unfairly impact specific groups), privacy considerations, and security considerations.
        \item The conference expects that many papers will be foundational research and not tied to particular applications, let alone deployments. However, if there is a direct path to any negative applications, the authors should point it out. For example, it is legitimate to point out that an improvement in the quality of generative models could be used to generate deepfakes for disinformation. On the other hand, it is not needed to point out that a generic algorithm for optimizing neural networks could enable people to train models that generate Deepfakes faster.
        \item The authors should consider possible harms that could arise when the technology is being used as intended and functioning correctly, harms that could arise when the technology is being used as intended but gives incorrect results, and harms following from (intentional or unintentional) misuse of the technology.
        \item If there are negative societal impacts, the authors could also discuss possible mitigation strategies (e.g., gated release of models, providing defenses in addition to attacks, mechanisms for monitoring misuse, mechanisms to monitor how a system learns from feedback over time, improving the efficiency and accessibility of ML).
    \end{itemize}
    
\item {\bf Safeguards}
    \item[] Question: Does the paper describe safeguards that have been put in place for responsible release of data or models that have a high risk for misuse (e.g., pretrained language models, image generators, or scraped datasets)?
    \item[] Answer: \answerNA{} %
    \item[] Justification: 
    \item[] Guidelines:
    \begin{itemize}
        \item The answer NA means that the paper poses no such risks.
        \item Released models that have a high risk for misuse or dual-use should be released with necessary safeguards to allow for controlled use of the model, for example by requiring that users adhere to usage guidelines or restrictions to access the model or implementing safety filters. 
        \item Datasets that have been scraped from the Internet could pose safety risks. The authors should describe how they avoided releasing unsafe images.
        \item We recognize that providing effective safeguards is challenging, and many papers do not require this, but we encourage authors to take this into account and make a best faith effort.
    \end{itemize}

\item {\bf Licenses for existing assets}
    \item[] Question: Are the creators or original owners of assets (e.g., code, data, models), used in the paper, properly credited and are the license and terms of use explicitly mentioned and properly respected?
    \item[] Answer: \answerNA{} %
    \item[] Justification: 
    \item[] Guidelines:
    \begin{itemize}
        \item The answer NA means that the paper does not use existing assets.
        \item The authors should cite the original paper that produced the code package or dataset.
        \item The authors should state which version of the asset is used and, if possible, include a URL.
        \item The name of the license (e.g., CC-BY 4.0) should be included for each asset.
        \item For scraped data from a particular source (e.g., website), the copyright and terms of service of that source should be provided.
        \item If assets are released, the license, copyright information, and terms of use in the package should be provided. For popular datasets, \url{paperswithcode.com/datasets} has curated licenses for some datasets. Their licensing guide can help determine the license of a dataset.
        \item For existing datasets that are re-packaged, both the original license and the license of the derived asset (if it has changed) should be provided.
        \item If this information is not available online, the authors are encouraged to reach out to the asset's creators.
    \end{itemize}

\item {\bf New assets}
    \item[] Question: Are new assets introduced in the paper well documented and is the documentation provided alongside the assets?
    \item[] Answer: \answerNA{} %
    \item[] Justification: 
    \item[] Guidelines:
    \begin{itemize}
        \item The answer NA means that the paper does not release new assets.
        \item Researchers should communicate the details of the dataset/code/model as part of their submissions via structured templates. This includes details about training, license, limitations, etc. 
        \item The paper should discuss whether and how consent was obtained from people whose asset is used.
        \item At submission time, remember to anonymize your assets (if applicable). You can either create an anonymized URL or include an anonymized zip file.
    \end{itemize}

\item {\bf Crowdsourcing and research with human subjects}
    \item[] Question: For crowdsourcing experiments and research with human subjects, does the paper include the full text of instructions given to participants and screenshots, if applicable, as well as details about compensation (if any)? 
    \item[] Answer: \answerNA{} %
    \item[] Justification:
    \item[] Guidelines:
    \begin{itemize}
        \item The answer NA means that the paper does not involve crowdsourcing nor research with human subjects.
        \item Including this information in the supplemental material is fine, but if the main contribution of the paper involves human subjects, then as much detail as possible should be included in the main paper. 
        \item According to the NeurIPS Code of Ethics, workers involved in data collection, curation, or other labor should be paid at least the minimum wage in the country of the data collector. 
    \end{itemize}

\item {\bf Institutional review board (IRB) approvals or equivalent for research with human subjects}
    \item[] Question: Does the paper describe potential risks incurred by study participants, whether such risks were disclosed to the subjects, and whether Institutional Review Board (IRB) approvals (or an equivalent approval/review based on the requirements of your country or institution) were obtained?
    \item[] Answer: \answerNA{} %
    \item[] Justification: 
    \item[] Guidelines:
    \begin{itemize}
        \item The answer NA means that the paper does not involve crowdsourcing nor research with human subjects.
        \item Depending on the country in which research is conducted, IRB approval (or equivalent) may be required for any human subjects research. If you obtained IRB approval, you should clearly state this in the paper. 
        \item We recognize that the procedures for this may vary significantly between institutions and locations, and we expect authors to adhere to the NeurIPS Code of Ethics and the guidelines for their institution. 
        \item For initial submissions, do not include any information that would break anonymity (if applicable), such as the institution conducting the review.
    \end{itemize}

\item {\bf Declaration of LLM usage}
    \item[] Question: Does the paper describe the usage of LLMs if it is an important, original, or non-standard component of the core methods in this research? Note that if the LLM is used only for writing, editing, or formatting purposes and does not impact the core methodology, scientific rigorousness, or originality of the research, declaration is not required.
    \item[] Answer: \answerNA{} %
    \item[] Justification: 
    \item[] Guidelines:
    \begin{itemize}
        \item The answer NA means that the core method development in this research does not involve LLMs as any important, original, or non-standard components.
        \item Please refer to our LLM policy (\url{https://neurips.cc/Conferences/2025/LLM}) for what should or should not be described.
    \end{itemize}

\end{enumerate}

\end{document}